\newtheorem{defn}{Definition}
\newtheorem{assum}{Assumption}
\newcommand{\norm}[1]{\left \| #1 \right \|}
\newcommand{\normD}[1]{\left \| {#1} \right \|_\D}
\newcommand{\wass}[2]{W\left(#1,#2\right)}
\newcommand{\wassd}[2]{W_d\left(#1,#2\right)}
\newcommand{\W}{W}
\newcommand{\D}{\mathcal{D}}
\newcommand{\Dist}[2]{\mathcal{D}\left(#1,#2\right)}
\newcommand{\F}{\mathcal{F}}
\newcommand{\mdp}{\mathcal{M}}
\newcommand{\sspace}{\mathcal{S}}
\newcommand{\rew}{\mathcal{R}}
\newcommand{\prob}{\mathcal{P}}
\newcommand{\aspace}{\mathcal{A}}
\newcommand{\dmdp}{\widebar{\mathcal{M}}}
\newcommand{\dsspace}{\widebar{\mathcal{S}}}
\newcommand{\drew}{\widebar{\mathcal{R}}}
\newcommand{\dprob}{\widebar{\mathcal{P}}}
\newcommand{\tPi}{\widetilde{\Pi}}
\newcommand{\tpi}{\widetilde{\pi}}
\newcommand{\td}{\widetilde{d}}
\newcommand{\Krew}{K_{\rew}}
\newcommand{\Kprob}{K_{\prob}}
\newcommand{\Kdrew}{K_{\drew}}
\newcommand{\Kdprob}{K_{\dprob}}
\newcommand{\KdV}{K_{\dV}}
\newcommand{\KV}{K_{V}}
\newcommand{\KdQ}{K_{\dQ}}
\newcommand{\dists}{d_{\sspace}}
\newcommand{\distds}{d_{\dsspace}}
\newcommand{\ds}{\widebar{s}}
\newcommand{\dpi}{\widebar{\pi}}
\newcommand{\xipi}{\xi_\pi}
\newcommand{\xidpi}{\xi_{\dpi}}
\newcommand{\dPi}{\widebar{\Pi}}
\newcommand{\dV}{\widebar{V}}
\newcommand{\dVpi}{\widebar{V}^{\dpi}}
\newcommand{\Vpi}{V^{\pi}}
\newcommand{\Vdpi}{V^{\dpi}}
\newcommand{\dVdpi}{\dV^{\dpi}}
\newcommand{\Qdpi}{Q^{\dpi}}
\newcommand{\dQ}{\widebar{Q}}
\newcommand{\dQdpi}{\widebar{Q}^{\dpi}}
\newcommand{\dQpi}{\widebar{Q}^{\dpi}}
\newcommand{\pistar}{\pi^*}
\newcommand{\dpistar}{\dpi^*}
\newcommand{\Vstar}{V^*}
\newcommand{\dVstar}{\widebar{V}^*}
\newcommand{\Qstar}{Q^*}
\newcommand{\dQstar}{\widebar{Q}^*}
\newcommand{\Ldrew}{L_{\drew}}
\newcommand{\Ldprob}{L_{\dprob}}
\newcommand{\gLdprob}{\Ldprob^\infty}
\newcommand{\gLdrew}{\Ldrew^\infty}
\newcommand{\lLdprob}{ \Ldprob^{\xidpi}}
\newcommand{\lLdrew}{\Ldrew^{\xidpi}}
\newcommand{\lLdprobxi}{ \Ldprob^{\xi}}
\newcommand{\lLdrewxi}{\Ldrew^{\xi}}
\DeclareMathOperator*{\expect}{\mathbb{E}}
\newcommand{\expxi}{{\expect_{\substack{s, a \sim \xi}}}}
\newcommand{\expxidpi}{{\expect_{\substack{s, a \sim \xidpi}}}}
\newcommand{\expddpi}{ \expect_{s \sim \xi_{\dpi}} }
\newcommand{\deriv}{~d}
\newtheorem{lemma}{Lemma}
\theoremstyle{definition}
\icmltitlerunning{DeepMDP: Learning Continuous Latent Space Models for Representation Learning}
\begin{document}

\twocolumn[
\icmltitle{DeepMDP: Learning Continuous Latent Space Models \\ for Representation Learning}


\icmlsetsymbol{equal}{*}

\begin{icmlauthorlist}
\icmlauthor{Carles Gelada}{go}
\icmlauthor{Saurabh Kumar}{go}
\icmlauthor{Jacob Buckman}{jh}
\icmlauthor{Ofir Nachum}{go}
\icmlauthor{Marc G. Bellemare}{go}
\end{icmlauthorlist}

\icmlaffiliation{go}{Google Brain}
\icmlaffiliation{jh}{Center for Language and Speech Processing, Johns Hopkins University}

\icmlcorrespondingauthor{Carles Gelada}{cgel@google.com}

\icmlkeywords{Machine Learning, ICML}

\vskip 0.3in
]



\printAffiliationsAndNotice{}  

\begin{abstract}
Many reinforcement learning (RL) tasks provide the agent with high-dimensional observations that can be simplified into low-dimensional continuous states.
To formalize this process, we introduce the concept of a \textit{DeepMDP}, a parameterized latent space model that is trained via the minimization of two tractable losses: prediction of rewards and prediction of the distribution over next latent states.
We show that the optimization of these objectives guarantees (1) the quality of the latent space as a representation of the state space and (2) the quality of the DeepMDP as a model of the environment.
We connect these results to prior work in the bisimulation literature, and explore the use of a variety of metrics.
Our theoretical findings are substantiated by the experimental result that a trained DeepMDP recovers the latent structure underlying high-dimensional observations on a synthetic environment.
Finally, we show that learning a DeepMDP as an auxiliary task in the Atari 2600 domain leads to large performance improvements over model-free RL.
\end{abstract}

\section{Introduction}
\label{introduction}


In reinforcement learning (RL), it is typical to model the environment as a Markov Decision Process (MDP). However, for many practical tasks, the state representations of these MDPs include a large amount of redundant information and task-irrelevant noise. For example, image observations from the Arcade Learning Environment \cite{bellemare13arcade} consist of 33,600-dimensional pixel arrays, yet it is intuitively clear that there exist lower-dimensional approximate representations for all games. Consider \textsc{Pong}; observing only the positions and velocities of the three objects in the frame is enough to play. Converting each frame into such a simplified state before learning a policy facilitates the learning process by reducing the redundant and irrelevant information presented to the agent. Representation learning techniques for reinforcement learning seek to improve the learning efficiency of existing RL algorithms by doing exactly this: learning a mapping from states to simplified states.


Prior work on representation learning, such as state aggregation with bisimulation metrics \citep{givan2003equivalence,ferns2004bisimulation,ferns2011bisimulation} or feature discovery algorithms \citep{Comanici2011BasisFD, Mahadevan2007ProtovalueFA, Bellemare2019AGP}, has resulted in algorithms with good theoretical properties; however, these algorithms do not scale to large scale problems or are not easily combined with deep learning. On the other hand, many recently-proposed approaches to representation learning via deep learning have strong empirical results on complex domains, but lack formal guarantees \citep{jaderberg2016reinforcement, Oord2018RepresentationLW, Fedus2019HyperbolicDA}. In this work, we propose an approach to representation learning that unifies the desirable aspects of both of these categories: a deep-learning-friendly approach with theoretical guarantees.


We describe the \textit{DeepMDP}, a latent space model of an MDP
which has been trained to minimize two tractable losses: predicting the rewards and predicting the distribution of next latent states. DeepMDPs can be viewed as a formalization of recent works which use neural networks to learn latent space models of the environment~\cite{ha2018recurrent,oh2017value,hafner2018learning,lavet2018crar},
because the value functions in the DeepMDP are guaranteed to be good approximations of value functions in the original task MDP. To provide this guarantee, careful consideration of the metric between distribution is necessary. A novel analysis of Maximum Mean Discrepancy (MMD) metrics \citep{Gretton2012AKT} defined via a function norm allows us to provide such guarantees; this includes the Total Variation, the Wasserstein and Energy metrics. These results  represent a promising first step towards principled latent-space model-based RL algorithms.

\begin{figure}
    \centering
    \includegraphics[keepaspectratio, width=.35\textwidth]{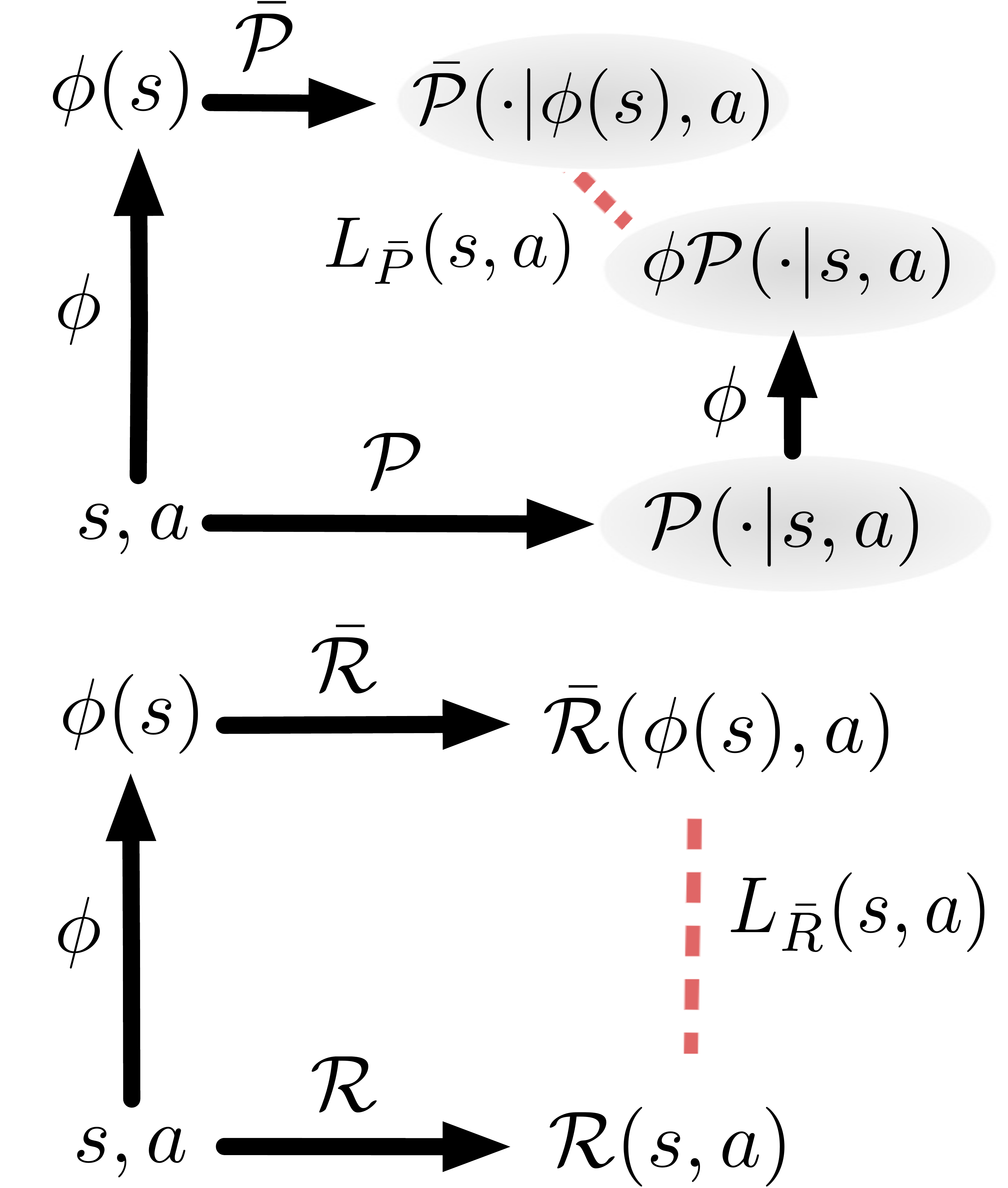}
    \caption{Diagram of the latent space losses. Circles denote a distribution.}
    \label{fig:dmdp_models}
\end{figure}

From the perspective of representation learning, the state of a DeepMDP can be interpreted as a representation of the original MDP's state. When the Wasserstein metric is used for the latent transition loss, analysis reveals a profound theoretical connection between DeepMDPs and bisimulation.
These results provide a theoretically-grounded approach to representation learning that is salable and compatible with modern deep networks.


In Section~\ref{sec:background}, we review key concepts and formally define the DeepMDP.
We start by studying the model-quality and representation-quality results of DeepMDPs (using the Wasserstein metric) in Sections~\ref{sec:global}~and~\ref{sec:local}. In
Section~\ref{sec:bisimulation}, we investigate the connection between DeepMDPs using the Wasserstein and bisimulation.
Section~\ref{sec:other_metrics} generalizes \textit{only} our model-based guarantees to metrics other than the Wasserstein; this limitation emphasizes the special role of that the Wasserstein metric plays in learning good representations.
Finally, in Section~\ref{sec:empirical} we consider a synthetic environment with high-dimensional observations and show that a DeepMDP learns to recover its underlying low-dimensional latent structure. We then demonstrate that learning a DeepMDP as an auxiliary task to model-free RL in the Atari 2600 environment leads to significant improvement in performance when compared to a baseline model-free method.

\section{Background}
\label{sec:background}
\subsection{Markov Decision Processes}
Define a Markov Decision Process (MDP) in standard fashion: $\mdp = \langle\sspace, \aspace, \rew, \prob, \gamma\rangle$~\citep{puterman1994markov}. For simplicity of notation we will assume that $\sspace$ and $\aspace$ are discrete spaces unless otherwise stated. A policy $\pi$ defines a distribution over actions conditioned on the state, $\pi(a|s)$. Denote by $\Pi$ the set of all stationary policies.
The value function of a policy $\pi\in\Pi$ at a state $s$ is the expected sum of future discounted rewards by running the policy from that state. $\Vpi: \sspace \to \mathbb{R}$ is defined as:
\begin{equation*}
V^\pi(s) = \expect_{\substack{a_t\sim\pi(\cdot | s_t) \\ s_{t+1} \sim \prob(\cdot | s_t, a_t)}}\left[\sum_{t=0}^\infty \gamma^t \rew(s_t, a_t) |s_0 = s\right].
\end{equation*}
The action value function is similarly defined:
\begin{equation*}\hspace{-4pt}
Q^\pi(s, a) = \expect_{\substack{a_t\sim\pi(\cdot | s_t) \\ s_{t+1} \sim \prob(\cdot | s_t, a_t)}}\left[\sum_{t=0}^\infty \gamma^t \rew(s_t, a_t) |s_0 = s, a_0 = a\right]
\end{equation*}

We denote by $\prob_{\dpi}$ the action-independent transition function induced by running a policy $\pi$, $\prob_{\dpi}(s'|s) = \sum_{a\in\aspace} \prob(s'|s,a) \pi(a|s)$. Similarly $\rew_{\dpi}(s) = \sum_{a\in\aspace} \rew(s,a) \pi(a|s)$.
We denote $\pistar$ as the optimal policy in $\mdp$; i.e., the policy which maximizes expected future reward. We denote the optimal state and action value functions with respect to $\pistar$ as $\Vstar,\Qstar$.
We denote the stationary distribution of a policy $\pi$ in $\mdp$ by $\xi_\pi$; i.e., 
\begin{equation*}
    \xi_\pi(s) = \sum_{\dot{s}\in\sspace, \dot{a}\in\aspace} \prob(s | \dot{s}, \dot{a}) \pi(\dot{a} | \dot{s}) \xi_\pi(\dot{s}) 
\end{equation*}
We overload notation by also denoting the state-action stationary distribution as $\xipi(s,a) = \xi_\pi(s) \pi(a|s)$. Although only non-terminating MDPs have stationary distributions, a state distribution for terminating MDPs with similar properties exists \citep{Gelada2019OffPolicyDR}.

\subsection{Latent Space Models}
For some MDP $\mdp$, let $\dmdp = \langle\dsspace, \aspace, \drew, \dprob, \gamma\rangle$ be an MDP where $\dsspace$ is a continuous space with metric $\distds$ and a shared action space $\aspace$ between $\mdp$ and $\dmdp$. Furthermore, let $\phi:\sspace\to\dsspace$ be an embedding function which connects the state spaces of these two MDPs. We refer to $(\dmdp, \phi)$ as a \textit{latent space model} of $\mdp$.

Since $\dmdp$ is, by definition, an MDP, value functions can be defined in the standard way. We use $\dVpi,\dQpi$ to denote the value functions of a policy $\dpi \in \dPi$, where $\dPi$ is the set of policies defined on the state space $\dsspace$.The transition and reward functions, $\drew_{\dpi}$ and $\dprob_{\dpi}$, of a policy $\dpi$ are also defined in the standard manner. We use $\dpistar$ to denote the optimal policy in $\dmdp$. The corresponding optimal state and action value functions are then $\dVstar,\dQstar$.
For ease of notation, when $s\in\sspace$, we use $\dpi(\cdot | s) := \dpi(\cdot | \phi(s))$ to denote first using $\phi$ to map $s$ to the state space $\dsspace$ of $\dmdp$ and subsequently using $\dpi$ to generate the probability distribution over actions.

Although similar definitions of latent space models have been previously studied \citep{lavet2018crar, zhang2018solar, ha2018recurrent, oh2017value, hafner2018learning, Kaiser2019ModelBasedRL, Silver2017ThePE}, the parametrizations and training objectives used to learn such models have varied widely. For example \citet{ha2018recurrent, hafner2018learning, Kaiser2019ModelBasedRL} use pixel prediction losses to learn the latent representation while \citep{oh2017value} chooses instead to optimize the model to predict next latent states with the same value function as the sampled next states.

In this work, we study the minimization of loss functions defined with respect to rewards and transitions in the latent space:
\begin{align}
\Ldrew(s,a) &= |\rew(s,a) - \drew(\phi(s),a)| \\
\Ldprob(s,a) &=  \Dist{\phi\prob(\cdot|s,a)}{\dprob(\cdot|\phi(s),a)} \label{eq:latent_transition_loss}
\end{align}
where we use the shorthand notation $\phi\prob(\cdot|s,a)$ to denote the probability distribution over $\dsspace$ of first sampling $s^\prime\sim\prob(\cdot|s,a)$ and then embedding $\ds^\prime = \phi(s^\prime)$, and where $\D$ is a metric between probability distributions.
To provide guarantees, $\mathcal{D}$ in Equation~\ref{eq:latent_transition_loss} needs to be chosen carefully. For the majority of this work, we focus on the Wasserstein metric; in Section~\ref{sec:other_metrics}, we generalize some of the results to alternative metrics from the Maximum Mean Discrepancy family.
\citet{lavet2018crar} and \citet{chung19twotimescale} have considered similar latent losses, but to the best of our knowledge ours is the first theoretical analysis of these models. See Figure \ref{fig:dmdp_models} for an illustration of how the latent space losses are constructed.

We use the term \textit{DeepMDP} to refer to a parameterized latent space model trained via the minimization of losses consisting of $\Ldrew$ and $\Ldprob$ (sometimes referred to as \textit{DeepMDP losses}). In Section~\ref{sec:global}, we derive theoretical guarantees of DeepMDPs when minimizing $\Ldrew$ and $\Ldprob$ over the whole state space. However, our principal objective is to learn DeepMDPs parameterized by deep networks, which requires DeepMDP losses in the form of expectations; we show in Section~\ref{sec:local} that similar theoretical guarantees can be obtained in this setting.

\subsection{Wasserstein Metric}
Initially studied in the optimal transport literature \citep{Villani2008OptimalT}, the Wasserstein-1 (which we simply refer to as the Wasserstein) metric $\wassd{P}{Q}$ between two distributions $P$ and $Q$, defined on a space with metric $d$, corresponds to the minimum cost of transforming $P$ into $Q$, where the cost of moving a particle at point $x$ to point $y$ comes from the underlying metric $d(x,y)$. 
\begin{defn}\label{def:wass}
The Wasserstein-1 metric $\W$ between distributions $P$ and $Q$ on a metric space $\langle \chi, d\rangle$ is:
\begin{equation*}
\wassd{P}{Q} = \inf_{\lambda \in  \Gamma(P,Q)} \int_{\chi\times\chi} d(x,y)\lambda(x,y) \deriv x \deriv y .
\end{equation*}
where $\Gamma(P,Q)$ denotes the set of all couplings of $P$ and $Q$.
\end{defn}
When there is no ambiguity on what the underlying metric $d$ is, we will simply write $\W$. The Monge-Kantorovich duality \citep{Mueller1997IntegralPM} shows that the Wasserstein has a dual form: 
\begin{equation}\label{eq:WassersteinDual}
\wassd{P}{Q} = \sup_{f \in  \F_d} | \expect_{x\sim P} f(x) - \expect_{y \sim Q} f(y) |,
\end{equation}
where $\F_d$ is the set of $1$-Lipschitz functions under the metric $d$, $\F_d = \{ f: | f(x) - f(y) | \le d(x,y) \}$.

\subsection{Lipschitz Norm of Value Functions}
The degree to which a value function of $\dmdp$, $\dVdpi$ approximates the value function $\Vdpi$ of $\mdp$ will depend on the Lipschitz norm of $\dVdpi$. In this section we define and provide conditions for value functions to be Lipschitz.\footnote{Another benefit of MDP smoothness is improved learning dynamics. \citet{pirotta2015policy} suggest that the smaller the Lipschitz constant of an MDP, the faster it is to converge to a near-optimal policy.}
Note that we study the Lipschitz properties of DeepMDPs $\dmdp$ (instead of a MDP $\mdp)$ because in this work, only the Lipschiz properties of DeepMDPs are relevant; the reader should note that these results follow for any continuous MDP with a metric state space.

We say a policy $\dpi\in\dPi$ is \textit{Lipschitz-valued} if its value function is Lipschitz, i.e. it has Lipschitz $\dQdpi$ and $\dVdpi$ functions.
\begin{defn}\label{defn:LipschitzValuedPolicy}
Let $\dmdp$ be a DeepMDP with a metric $\distds$. A policy $\dpi \in \dPi$ is $\KdV$-Lipschitz-valued if for all $\ds_1, \ds_2 \in \dsspace$:
\begin{align*}
    \left | \dVpi(\ds_1) - \dVpi(\ds_2) \right | &\le \KdV \distds(\ds_1, \ds_2),
\end{align*}
and if for all $a \in \aspace$:
\begin{align*}
    \left | \dQpi(\ds_1,a) - \dQpi(\ds_2, a) \right | &\le \KdV \distds(\ds_1, \ds_2).
\end{align*}
\end{defn}

Several works have studied Lipschitz norm constraints on the transition and reward functions \citep{Hinderer2005LipschitzCO,asadi2018lipschitz} to provide conditions for value functions to be Lipschitz. Closely following their formulation, we define Lipschitz DeepMDPs as follows:
\begin{defn}{\label{def:smoothMDP}}
Let $\dmdp$ be a DeepMDP with a metric $\distds$. We say $
\dmdp$ is $(\Kdrew,\Kdprob)$-Lipschitz if, for all $\ds_1, \ds_2 \in \dsspace$ and $a \in \aspace$:
\begin{align*}
    &\left | \drew(\ds_1,a) - \drew(\ds_2, a) \right | \le \Kdrew  \distds(\ds_1, \ds_2) \\
    &\wass{\dprob(\cdot |, \ds_1, a)}{\dprob(\cdot|\ds_2, a)} \le \Kdprob  \dists(\ds_1, \ds_2)
\end{align*}
\end{defn}
From here onwards, we will we restrict our attention to the set of Lipschitz DeepMDPs for which the constant $\Kdprob$ is sufficiently small, formalized in the following assumption:
\begin{assum}\label{assum:lipschitz_assumption}
 The Lipschitz constant $\Kdprob$ of the transition function $\dprob$ is strictly smaller than $\tfrac{1}{\gamma}$.
\end{assum}
From a practical standpoint, Assumption \ref{assum:lipschitz_assumption} is relatively strong, but simplifies our analysis by ensuring that close states cannot have future trajectories that are ``divergent.'' An MDP might still not exhibit divergent behaviour even when $\Kdprob\ge\frac{1}{\gamma}$. In particular, when episodes terminate after a finite amount of time, Assumption \ref{assum:lipschitz_assumption} becomes unnecessary. We leave as future work how to improve on this assumption.

We describe a small set of Lipschitz-valued policies. For any policy $\dpi\in\dPi$, we refer to the Lipschitz norm of its transition function $\dprob_{\dpi}$ as ${\Kprob}_{\dpi} \ge \wass{{\dprob}_{\dpi}(\cdot|\ds_{1})}{{\dprob}_{\dpi}(\cdot|\ds_{2})}$ for all $\ds_1, \ds_2\in\sspace$. Similarly, we denote the Lipschitz norm of the reward function as ${\Kdrew}_{\dpi} \ge | {\drew}_{\dpi}(\ds_1) - {\drew}_{\dpi}(\ds_2) |$.

\begin{restatable}{lemma}{LipschitzValue}\label{lem:LipschitzValue}
Let $\dmdp$ be $({\Kdrew},{\Kdprob})$-Lipschitz. Then,
\begin{enumerate}
\item The optimal policy $\dpistar$ is $\frac{\Kdrew}{1-\gamma{\Kdprob}}$-Lipschitz-valued.
\item All policies with ${\Kdprob}_{\dpi} \le \frac{1}{\gamma}$  are $\frac{{\Kdrew}_{\dpi}}{1-\gamma K_{{\dprob}_{\dpi}}}$-Lipschitz-valued.
\item All constant policies (i.e. ${\dpi}(a|{\ds}_1) = {\dpi}(a|{\ds}_2), \forall a\in\aspace, \ds_1, {\ds}_2 \in\dsspace$) are  $\frac{\Kdrew}{1-\gamma{\Kdprob}}$-Lipschitz-valued.
\end{enumerate}
\end{restatable}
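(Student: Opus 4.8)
The plan is to prove all three claims with one technique: express each value function as the fixed point of the relevant Bellman operator, propagate a Lipschitz bound through value iteration, and pass to the limit. The key ingredient throughout is the Wasserstein dual (Equation~\ref{eq:WassersteinDual}): if a function $V$ is $K$-Lipschitz under $\distds$, then $V/K$ lies in $\F_{\distds}$, so $|\expect_{\ds'\sim P}V(\ds') - \expect_{\ds'\sim Q}V(\ds')| \le K\,\wass{P}{Q}$. This is what lets a Lipschitz bound on the reward and a Wasserstein-Lipschitz bound on the transitions combine into a Lipschitz bound on the value.

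I would start with claim (2), the fixed-policy case, as it carries the main idea. The value $\dVpi$ is the unique fixed point of $(\mathcal{T}^{\dpi}V)(\ds) = \drew_{\dpi}(\ds) + \gamma\expect_{\ds'\sim\dprob_{\dpi}(\cdot|\ds)}V(\ds')$, and value iteration from $V_0=0$ converges uniformly since $\mathcal{T}^{\dpi}$ is a $\gamma$-contraction in sup norm. By induction each iterate $V_n$ is $K_n$-Lipschitz with $K_0 = 0$ and $K_{n+1} = {\Kdrew}_{\dpi} + \gamma{\Kprob}_{\dpi}K_n$: the reward difference contributes ${\Kdrew}_{\dpi}\distds(\ds_1,\ds_2)$, and the dual bound above applied to $V_n$ contributes $\gamma K_n{\Kprob}_{\dpi}\distds(\ds_1,\ds_2)$. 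When $\gamma{\Kprob}_{\dpi}<1$ this recursion converges to ${\Kdrew}_{\dpi}/(1-\gamma{\Kprob}_{\dpi})$, and because a uniform limit of $K_n$-Lipschitz functions is $\lim K_n$-Lipschitz, $\dVpi$ inherits this constant; repeating with the $Q$-operator gives the same bound for $\dQpi$.

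Claim (3) reduces to (2) by bounding the induced constants of a constant policy. Linearity gives ${\Kdrew}_{\dpi}\le\Kdrew$, and the joint convexity of $\W$ in its arguments gives $\wass{\dprob_{\dpi}(\cdot|\ds_1)}{\dprob_{\dpi}(\cdot|\ds_2)} \le \sum_a\dpi(a)\wass{\dprob(\cdot|\ds_1,a)}{\dprob(\cdot|\ds_2,a)} \le \Kdprob\distds(\ds_1,\ds_2)$, so ${\Kprob}_{\dpi}\le\Kdprob<1/\gamma$ by Assumption~\ref{assum:lipschitz_assumption}. Since ${\Kdrew}_{\dpi}/(1-\gamma{\Kprob}_{\dpi})$ is monotone in both arguments, this yields $\Kdrew/(1-\gamma\Kdprob)$. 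For claim (1) I would run the same induction with the optimality operator $(\mathcal{T}^*Q)(\ds,a) = \drew(\ds,a) + \gamma\expect_{\ds'\sim\dprob(\cdot|\ds,a)}\max_{a'}Q(\ds',a')$; the one new point is that the greedy step preserves the Lipschitz constant, since a pointwise maximum of functions sharing a common Lipschitz constant retains it. The recursion becomes $K_{n+1}=\Kdrew+\gamma\Kdprob K_n$, converging under Assumption~\ref{assum:lipschitz_assumption} to $\Kdrew/(1-\gamma\Kdprob)$.

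The main obstacle is handling the circularity rigorously: the tempting one-liner ``assume $\dVpi$ is $K$-Lipschitz, substitute into the Bellman equation, and solve $K = {\Kdrew}_{\dpi}+\gamma{\Kprob}_{\dpi}K$'' presupposes the Lipschitz property it is meant to establish. Channeling the argument through value iteration and the monotone linear recursion removes the circularity but makes the convergence condition $\gamma{\Kprob}_{\dpi}<1$ essential --- which is exactly where Assumption~\ref{assum:lipschitz_assumption} enters, and why the $\le 1/\gamma$ in claim (2) should be read strictly for the stated finite constant to hold.
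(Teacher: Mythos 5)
Your proposal is correct and follows essentially the same route as the paper's proof: an induction on value-iteration iterates that propagates Lipschitz constants through the Bellman operator via the Wasserstein dual, yielding the recursion $K_{n+1} = \Kdrew + \gamma \Kdprob K_n$, together with a reduction of the constant-policy case to the fixed-policy case by bounding $\Kprob_{\dpi}$ and $\Kdrew_{\dpi}$ through convexity of the Wasserstein. Your explicit remarks on passing the Lipschitz constant to the uniform limit and on the strictness needed in $\gamma \Kprob_{\dpi} < 1$ are slightly more careful than the paper's treatment, but do not change the argument.
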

\begin{proof}
See Appendix~\ref{sec:Proofs} for all proofs.
\end{proof}
A more general framework for understanding Lipschitz value functions is still lacking.
Little prior work studying classes of Lipschitz-valued policies exists in the literature and we believe that this is an important direction for future research.
\section{Global DeepMDP Bounds}\label{sec:global}
We now present our first main contributions: concrete DeepMDP losses, and several bounds which provide us with useful guarantees when these losses are minimized.
We refer to these losses as the \textit{global} DeepMDP losses, to emphasize their dependence on the whole state and action space:\footnote{The $\infty$ notation is a reference to the $\ell_\infty$ norm}
\begin{align}
&\gLdrew = \sup_{s\in\sspace,a\in\aspace} |\rew(s,a) - \drew(\phi(s),a)| \\
&\gLdprob = \sup_{s\in\sspace,a\in\aspace} \wass{\phi\prob(\cdot|s,a)}{\dprob(\cdot|\phi(s),a)}
\end{align}

\subsection{Value Difference Bound}
We start by bounding the difference of the value functions $\Qdpi$ and $\dQdpi$ for any policy $\dpi\in\dPi$. Note that $\Qdpi(s,a)$ is computed using $\prob$ and $\rew$ on $\sspace$ while $\dQdpi(\phi(s), a)$ is computed using $\dprob$ and $\drew$ on $\dsspace$.
\begin{restatable}{lemma}{globalValueDifferenceBound}\label{lem:globalValueDifferenceBound}
Let $\mdp$ and $\dmdp$ be an MDP and DeepMDP respectively, with an embedding function $\phi$ and global loss functions $\gLdrew$ and $\gLdprob$. For any $\KdV$-Lipschitz-valued policy $\dpi \in \dPi$ the value difference can be bounded by
\begin{equation*}
\left | \Qdpi (s, a) - \dQdpi (\phi(s) ,a) \right |  \leq 
\frac{  \gLdrew  + \gamma \KdV \gLdprob  }{1-\gamma} ,
\end{equation*}
\end{restatable}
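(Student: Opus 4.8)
The plan is to run a standard self-bounding argument driven by the one-step Bellman equations for the two action-value functions, where the only nonstandard ingredient is controlling the transition mismatch through the Wasserstein dual. Write $\Delta = \sup_{s \in \sspace, a \in \aspace} | \Qdpi(s,a) - \dQdpi(\phi(s), a) |$, which I assume finite (this holds whenever the rewards of both MDPs are bounded, since $\gamma < 1$). Expanding both sides with their Bellman equations gives
\begin{align*}
\Qdpi(s,a) - \dQdpi(\phi(s),a) = {}& \big(\rew(s,a) - \drew(\phi(s),a)\big) \\
&+ \gamma\Big( \expect_{s' \sim \prob(\cdot|s,a)} \Vdpi(s') - \expect_{\ds' \sim \dprob(\cdot|\phi(s),a)} \dVdpi(\ds') \Big).
\end{align*}
The reward term is bounded in absolute value by $\gLdrew$ directly from its definition, so the work is entirely in the second (transition) term.

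First I would insert the intermediate quantity $\expect_{s' \sim \prob(\cdot|s,a)} \dVdpi(\phi(s'))$ and split the transition term as
\begin{align*}
&\expect_{s' \sim \prob(\cdot|s,a)} \big( \Vdpi(s') - \dVdpi(\phi(s')) \big) \\
&\quad + \Big( \expect_{s' \sim \prob(\cdot|s,a)} \dVdpi(\phi(s')) - \expect_{\ds' \sim \dprob(\cdot|\phi(s),a)} \dVdpi(\ds') \Big).
\end{align*}
For the first piece, I use the convention $\dpi(\cdot|s') = \dpi(\cdot|\phi(s'))$: since both $\Vdpi(s')$ and $\dVdpi(\phi(s'))$ average their respective $Q$-functions under the \emph{same} action distribution, their difference equals $\expect_{a'} [\Qdpi(s',a') - \dQdpi(\phi(s'),a')]$, which is bounded in absolute value by $\Delta$. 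Hence the first piece contributes at most $\Delta$.

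The crux is the second piece. By the push-forward definition of $\phi\prob$, the term $\expect_{s' \sim \prob(\cdot|s,a)} \dVdpi(\phi(s'))$ equals $\expect_{\ds' \sim \phi\prob(\cdot|s,a)} \dVdpi(\ds')$, so this piece is a difference of expectations of the single function $\dVdpi$ under the two distributions $\phi\prob(\cdot|s,a)$ and $\dprob(\cdot|\phi(s),a)$. Because $\dpi$ is $\KdV$-Lipschitz-valued, $\dVdpi / \KdV$ lies in $\F_d$, so the dual form of the Wasserstein metric (Equation~\ref{eq:WassersteinDual}) bounds this piece by $\KdV \cdot \wass{\phi\prob(\cdot|s,a)}{\dprob(\cdot|\phi(s),a)} \le \KdV \gLdprob$. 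This is exactly the step that forces $\dpi$ to be Lipschitz-valued and is where the Wasserstein choice of $\D$ pays off.

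Combining the three bounds gives $|\Qdpi(s,a) - \dQdpi(\phi(s),a)| \le \gLdrew + \gamma(\Delta + \KdV \gLdprob)$ for every $s,a$; taking the supremum on the left yields $\Delta \le \gLdrew + \gamma \Delta + \gamma \KdV \gLdprob$, and rearranging (using $\gamma < 1$ and $\Delta < \infty$) gives the claimed bound. The main obstacle is not any single estimate but organizing the decomposition so that exactly one copy of $\Delta$ reappears on the right-hand side; the finiteness of $\Delta$ needed to rearrange the self-referential inequality is the one hypothesis worth checking explicitly.
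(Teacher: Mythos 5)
Your proposal is correct and follows essentially the same argument as the paper's proof (given there for the general Norm-MMD version, Lemma~\ref{lem:globalValueDifferenceBoundGeneral}, of which this is the Wasserstein special case): a Bellman expansion, insertion of the intermediate term $\expect_{s'\sim\prob(\cdot|s,a)}\dVdpi(\phi(s'))$, the dual/Lipschitz bound on the pushforward-versus-model discrepancy, and a self-referential recurrence on the supremum. Your explicit remark that finiteness of $\Delta$ is needed to rearrange the recurrence is a point the paper leaves implicit, but otherwise the two proofs coincide.
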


The previous result holds for all policies $\dPi \subseteq \Pi$, a subset of all possible policies $\Pi$. The reader might ask whether this is an interesting set of policies to consider; in Section \ref{sec:bisimulation}, we answer with a fat ``yes'' by characterizing this set via a connection with bisimulation.

A bound similar to Lemma~\ref{lem:globalValueDifferenceBound} can be found in \citet{asadi2018lipschitz}, who study non-latent transition models using the Wasserstein metric when there is access to an exact reward function. We also note that our results are arguably simpler, since we do not require the treatment of MDP transitions in terms of distributions over a set of deterministic components.

\subsection{Representation Quality Bound}\label{sec:globalRepresentationQuality}
When a representation is used to predict the value of a policy in $\mdp$, a clear failure case is when two states with different values are collapsed to the same representation.
The following result demonstrates that when the global DeepMDP losses $\gLdrew=0$ and $\gLdprob=0$, this failure case can never occur for the embedding function $\phi$.
\begin{restatable}{theorem}{globalLipschitzRepresentationQuality}\label{thm:globalLipschitzRepresentationQuality}
\label{thm:subopt}
Let $\mdp$ and $\dmdp$ be an MDP and DeepMDP respectively, let $\distds$ be a metric in $\dsspace$, $\phi$ be an embedding function and $\gLdrew$ and $\gLdprob$ be the global loss functions. For any $\KdV$-Lipschitz-valued policy $\dpi \in \dPi$ the representation $\phi$ guarantees that for all $s_1, s_2 \in \sspace$ and $a\in\aspace$,
\begin{multline*}
\left | \Qdpi (s_1, a) - \Qdpi (s_2 ,a) \right |  \leq  \KdV \distds(\phi(s_1), \phi(s_2)) \\
+ 2\frac{ \gLdrew  + \gamma \KdV \gLdprob  }{1-\gamma}
\end{multline*}
\end{restatable}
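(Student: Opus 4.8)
The plan is to bridge between the value function $\Qdpi$ defined on the original state space $\sspace$ and the latent value function $\dQdpi$ defined on $\dsspace$, using the triangle inequality together with Lemma~\ref{lem:globalValueDifferenceBound} applied at two points. The key observation is that the only quantities linking the values of two distinct source states $s_1, s_2$ under the original MDP are their embeddings $\phi(s_1), \phi(s_2)$, and the assumption that $\dpi$ is $\KdV$-Lipschitz-valued is exactly what controls how far apart the latent values at those embeddings can be.

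Concretely, I would insert the latent value function at both endpoints and split the difference into three pieces:
\begin{align*}
\left| \Qdpi(s_1, a) - \Qdpi(s_2, a) \right|
&\le \left| \Qdpi(s_1, a) - \dQdpi(\phi(s_1), a) \right| \\
&\quad + \left| \dQdpi(\phi(s_1), a) - \dQdpi(\phi(s_2), a) \right| \\
&\quad + \left| \dQdpi(\phi(s_2), a) - \Qdpi(s_2, a) \right|.
\end{align*}
The first and third terms are each precisely the quantity controlled by Lemma~\ref{lem:globalValueDifferenceBound}, which holds for every $s \in \sspace$; each is therefore at most $\frac{\gLdrew + \gamma \KdV \gLdprob}{1-\gamma}$, and together they produce the factor of $2$ appearing in the claimed bound. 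The middle term is bounded directly by the hypothesis that $\dpi$ is $\KdV$-Lipschitz-valued (Definition~\ref{defn:LipschitzValuedPolicy}), giving $\left| \dQdpi(\phi(s_1), a) - \dQdpi(\phi(s_2), a) \right| \le \KdV \distds(\phi(s_1), \phi(s_2))$. Adding the three estimates yields the statement.

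I do not expect a substantial obstacle here: once Lemma~\ref{lem:globalValueDifferenceBound} is available, the result is an immediate consequence of the triangle inequality and the definition of a Lipschitz-valued policy. The one point worth verifying is that Lemma~\ref{lem:globalValueDifferenceBound} can be invoked at both $s_1$ and $s_2$ without modification, which holds because that lemma is stated uniformly over all source states. The conceptual content therefore resides entirely in Lemma~\ref{lem:globalValueDifferenceBound} and in the Lipschitz-valued assumption; the present theorem repackages them to expose the representation-quality interpretation, namely that two source states can have widely differing values only when either their embeddings are far apart under $\distds$ or the DeepMDP losses $\gLdrew, \gLdprob$ are large.
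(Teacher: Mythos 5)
Your decomposition is exactly the one used in the paper: the triangle inequality inserts $\dQdpi(\phi(s_i),a)$ at both endpoints, Lemma~\ref{lem:globalValueDifferenceBound} bounds the two value-difference terms to give the factor of $2$, and the $\KdV$-Lipschitz-valued hypothesis bounds the middle term by $\KdV \distds(\phi(s_1),\phi(s_2))$. The proposal is correct and matches the paper's proof.
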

This result justifies learning a DeepMDP and using the embedding function $\phi$ as a representation to predict values. A similar connection between the quality of representations and model based objectives in the linear setting was made by \citet{Parr2008AnAO}.

\subsection{Suboptimality Bound}
For completeness, we also bound the performance loss of running the optimal policy of $\dmdp$ in $\mdp$, compared to the optimal policy $\pistar$. See Theorem \ref{thm:globalSubOpt} in Appendix \ref{sec:Proofs}.
\section{Local DeepMDP Bounds}
\label{sec:local}

In large-scale tasks, data from many regions of the state space is often unavailable,\footnote{Challenging exploration environments like Montezuma's Revenge are a prime example.} making it infeasible to measure -- let alone optimize -- the global losses. Further, when the capacity of a model is limited, or when sample efficiency is a concern, it might not even be desirable to precisely learn a model of the whole state space.
Interestingly, we can still provide similar guarantees based on the DeepMDP losses, as measured under an \textit{expectation} over a state-action distribution, denoted here as $\xi$. We refer to these as the losses \textit{local} to $\xi$.
Taking $\lLdrewxi$, $\lLdprobxi$ to be the reward and transition losses under $\xi$, respectively, we have the following local DeepMDP losses:
\begin{align}
&\lLdrewxi = \expxi | \rew(s,a) - \drew(\phi(s), a) |, \label{eqn:LocalRLoss} \\
&\lLdprobxi = \expxi \left[ \wass{ \phi \prob(\cdot | s, a)}{ \dprob(\cdot | \phi(s), a) } \right].  \label{eqn:LocalPLoss}
\end{align}
Losses of this form are compatible with the stochastic gradient decent methods used by neural networks. Thus, study of the local losses allows us to bridge the gap between theory and practice.

\subsection{Value Difference Bound}
\label{sec:local-value-function}
We provide a value function bound for the local case, analogous to Lemma~\ref{lem:globalValueDifferenceBound}.
\begin{restatable}{lemma}{localValueDiffBoundW}\label{lem:localValueDiffBoundW}
Let $\mdp$ and $\dmdp$ be an MDP and DeepMDP respectively, with an embedding function $\phi$. For any $\KdV$-Lipschitz-valued policy $\dpi \in \dPi$, the expected value function difference can be bounded using the local loss functions $\lLdrew$ and $\lLdprob$ measured under $\xidpi$, the stationary state action distribution of $\dpi$.
\begin{multline*}
\expxidpi \left | \Qdpi (s, a) - \dQdpi (\phi(s) ,a) \right |   \leq 
\frac{ \lLdrew  + \gamma \KdV \lLdprob }{1-\gamma} ,
\end{multline*}
\end{restatable}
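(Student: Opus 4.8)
The plan is to follow the same Bellman decomposition used for the global bound (Lemma~\ref{lem:globalValueDifferenceBound}), but to replace its unrolling-plus-supremum argument with a single use of the \emph{stationarity} of $\xidpi$; this is exactly what converts the supremal losses into the expected local losses. Write $\Delta(s,a) := \Qdpi(s,a) - \dQdpi(\phi(s),a)$ for the pointwise value gap, and denote the pointwise losses by $\Ldrew(s,a) = |\rew(s,a) - \drew(\phi(s),a)|$ and $\Ldprob(s,a) = \wass{\phi\prob(\cdot|s,a)}{\dprob(\cdot|\phi(s),a)}$, so that the target is $\expect_{(s,a)\sim\xidpi} |\Delta(s,a)|$ and the right-hand side losses are $\lLdrew = \expect_{\xidpi}\Ldrew$ and $\lLdprob = \expect_{\xidpi}\Ldprob$.

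First I would expand $\Delta(s,a)$ using the Bellman equations for $\Qdpi$ in $\mdp$ and for $\dQdpi$ in $\dmdp$ evaluated at $\phi(s)$. The reward terms combine into $\rew(s,a)-\drew(\phi(s),a)$, and I split the two next-state value expectations by inserting the intermediate quantity $\gamma\,\expect_{s'\sim\prob(\cdot|s,a)}\dVdpi(\phi(s'))$. This yields three pieces: the reward gap $\Ldrew(s,a)$; a distribution-mismatch term $\gamma\bigl(\expect_{\ds'\sim\phi\prob(\cdot|s,a)}\dVdpi(\ds') - \expect_{\ds'\sim\dprob(\cdot|\phi(s),a)}\dVdpi(\ds')\bigr)$; and a recursive term $\gamma\,\expect_{s'\sim\prob(\cdot|s,a)}[\Vdpi(s')-\dVdpi(\phi(s'))]$. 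Since $\dpi$ is $\KdV$-Lipschitz-valued, $\dVdpi$ is $\KdV$-Lipschitz, so the Kantorovich dual \eqref{eq:WassersteinDual} bounds the mismatch term in absolute value by $\gamma\KdV\Ldprob(s,a)$. Expanding each value function into its expectation over $a'\sim\dpi(\cdot|\phi(s'))$, the recursive term becomes exactly $\gamma\,\expect_{s'\sim\prob(\cdot|s,a),\,a'\sim\dpi(\cdot|\phi(s'))}\Delta(s',a')$. Taking absolute values and applying the triangle and Jensen inequalities gives the pointwise bound $|\Delta(s,a)| \le \Ldrew(s,a) + \gamma\KdV\Ldprob(s,a) + \gamma\,\expect_{s'\sim\prob(\cdot|s,a),\,a'\sim\dpi(\cdot|\phi(s'))}|\Delta(s',a')|$.

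The crucial step is to integrate this inequality over $(s,a)\sim\xidpi$. The first two terms give precisely $\lLdrew + \gamma\KdV\lLdprob$. For the recursive term I invoke the defining property of the stationary state-action distribution of $\dpi$ acting in $\mdp$: drawing $(s,a)\sim\xidpi$, transitioning $s'\sim\prob(\cdot|s,a)$, and sampling $a'\sim\dpi(\cdot|\phi(s'))$ again produces a sample distributed as $\xidpi$. Hence $\expect_{(s,a)\sim\xidpi}\expect_{s'\sim\prob,\,a'\sim\dpi}|\Delta(s',a')| = \expect_{(s,a)\sim\xidpi}|\Delta(s,a)|$, so the recursion folds back onto the target. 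This gives $\expect_{\xidpi}|\Delta| \le \lLdrew + \gamma\KdV\lLdprob + \gamma\,\expect_{\xidpi}|\Delta|$, and dividing by $(1-\gamma)$ yields the claimed bound.

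The only place requiring real care is this stationarity fold: one must use that $\xidpi$ is the stationary distribution of $\dpi$ in the \emph{real} MDP $\mdp$ (with actions drawn via $\dpi(\cdot|\phi(s))$), so that the one-step forward map $(s,a)\mapsto(s',a')$ preserves $\xidpi$ exactly. This is what lets a single fold replace the infinite geometric unrolling of the global proof, and it is also why the local bound is stated specifically for the expectation under $\xidpi$ rather than under an arbitrary $\xi$.
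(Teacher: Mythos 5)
Your proposal is correct and follows essentially the same route as the paper's proof (of the general Lemma~\ref{lem:localValueDiffBoundGeneral}, of which this is the Wasserstein instance): the same Bellman decomposition with the intermediate term $\expect_{s'\sim\prob(\cdot|s,a)}\dVdpi(\phi(s'))$, the same use of the Kantorovich dual with the $\KdV$-Lipschitz property to extract $\gamma\KdV\Ldprob(s,a)$, the same invocation of stationarity of $\xidpi$ to fold the one-step-ahead expectation back onto the target, and the same resolution of the recurrence $x \le c + \gamma x$. The only cosmetic difference is that you establish the pointwise recursive inequality first and then integrate, whereas the paper carries the expectation $\expxidpi$ through every line; these are equivalent.
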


The provided bound guarantees that for any policy $\dpi \in \dPi$ which visits state-action pairs $(s,a)$ where $\Ldrew(s,a)$ and $\Ldprob(s,a)$ are small, the DeepMDP will provide accurate value functions for any states likely to be seen under the policy.\footnote{The value functions might be inaccurate in states that the policy $\dpi$ rarely visits.}

\subsection{Representation Quality Bound}
\label{sec:local-representation-similarity}
We can also extend the local value difference bound to provide a local bound on how well the representation $\phi$ can be used to predict the value function of a policy $\bar \pi \in \bar \Pi$, analogous to Theorem~\ref{thm:globalLipschitzRepresentationQuality}.
\begin{restatable}{theorem}{localLipschitzRepresentationQuality}
\label{thm:localLipschitzRepresentationQuality}
Let $\mdp$ and $\dmdp$ be an MDP and DeepMDP respectively, let $\distds$ be the metric in $\dsspace$ and $\phi$ be the embedding function. Let  $\dpi \in \dPi$ be any $\KdV$-Lipschitz-valued policy with stationary distribution $\xi_{\dpi}$, and let $\lLdrew$ and $\lLdprob$ be the local loss functions. For any two states $s_1, s_2 \in \sspace$, the representation $\phi$ is such that,
\begin{multline*}
| \Vdpi(s_1) - \Vdpi(s_2) | \leq \KdV \distds(\phi(s_1), \phi(s_2)) \\
+ \frac{
 \lLdrew  + \gamma \KdV \lLdprob
}{1-\gamma}
\left ( \frac{1}{d_{\dpi}(s_1)} + \frac{1}{d_{\dpi}(s_2)}  \right )
\end{multline*}
\end{restatable}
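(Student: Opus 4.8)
The plan is to mirror the proof of the global representation-quality result (Theorem~\ref{thm:globalLipschitzRepresentationQuality}), but to substitute the expectation-based value-difference bound of Lemma~\ref{lem:localValueDiffBoundW} for its pointwise global analogue. First I would insert the latent value functions as intermediaries via the triangle inequality,
\begin{multline*}
| \Vdpi(s_1) - \Vdpi(s_2) | \leq | \Vdpi(s_1) - \dVdpi(\phi(s_1)) | \\ + | \dVdpi(\phi(s_1)) - \dVdpi(\phi(s_2)) | + | \dVdpi(\phi(s_2)) - \Vdpi(s_2) |.
\end{multline*}
The middle term is immediately controlled by the Lipschitz-valued hypothesis on $\dpi$ (Definition~\ref{defn:LipschitzValuedPolicy}) with $\ds_1 = \phi(s_1)$, $\ds_2 = \phi(s_2)$, giving $| \dVdpi(\phi(s_1)) - \dVdpi(\phi(s_2)) | \leq \KdV \distds(\phi(s_1), \phi(s_2))$, which is exactly the first term of the claimed bound. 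It remains to bound each of the two outer gaps $| \Vdpi(s_i) - \dVdpi(\phi(s_i)) |$ at the \emph{specific} states $s_1, s_2$.

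To connect these state-value gaps to the action-value quantity appearing in Lemma~\ref{lem:localValueDiffBoundW}, I would pass through the action expectation. Since the notation convention sets $\dpi(\cdot|s) := \dpi(\cdot|\phi(s))$, both $\Vdpi(s)$ and $\dVdpi(\phi(s))$ average their respective $Q$-functions against the \emph{same} action distribution $\dpi(\cdot|\phi(s))$, so Jensen's inequality yields
\begin{equation*}
| \Vdpi(s) - \dVdpi(\phi(s)) | \leq \expect_{a \sim \dpi(\cdot|\phi(s))} | \Qdpi(s,a) - \dQdpi(\phi(s),a) | =: g(s).
\end{equation*}
Taking the state-marginal expectation of the nonnegative function $g$ under the stationary state distribution $d_{\dpi}$, and using that $\xidpi$ factorizes as $\xidpi(s,a) = d_{\dpi}(s)\,\dpi(a|\phi(s))$, recovers precisely the left-hand side of Lemma~\ref{lem:localValueDiffBoundW}; hence $\expect_{s \sim d_{\dpi}} g(s) \leq \tfrac{\lLdrew + \gamma \KdV \lLdprob}{1-\gamma}$.

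The crux of the argument --- and the step I expect to be the main obstacle --- is converting this \emph{expectation} bound back into a \emph{pointwise} guarantee at $s_1$ and $s_2$. Because $g \geq 0$, no single summand can exceed the whole expectation beyond what its probability mass permits: $d_{\dpi}(s)\, g(s) \leq \sum_{s'} d_{\dpi}(s')\, g(s') = \expect_{s' \sim d_{\dpi}} g(s')$, so $g(s) \leq \tfrac{1}{d_{\dpi}(s)} \expect_{s' \sim d_{\dpi}} g(s')$ whenever $d_{\dpi}(s) > 0$. This ``reverse Markov'' observation is exactly what produces the $\tfrac{1}{d_{\dpi}(s_1)}$ and $\tfrac{1}{d_{\dpi}(s_2)}$ factors: states that $\dpi$ visits rarely may incur large representation error, which is unavoidable since the local losses only constrain the model on-distribution. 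Combining the two outer estimates $g(s_i) \leq \tfrac{1}{d_{\dpi}(s_i)} \cdot \tfrac{\lLdrew + \gamma \KdV \lLdprob}{1-\gamma}$ with the Lipschitz bound on the middle term yields the stated inequality. The only caveat to verify is that $d_{\dpi}(s_1), d_{\dpi}(s_2) > 0$ so the reciprocals are well defined; a state of zero stationary mass renders its term vacuous, consistent with the statement.
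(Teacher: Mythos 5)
Your proposal is correct and follows essentially the same route as the paper's proof: triangle inequality through $\dVdpi(\phi(s_1))$ and $\dVdpi(\phi(s_2))$, the Lipschitz-valued hypothesis for the middle term, the ``reverse Markov'' inequality $g(s) \le d_{\dpi}^{-1}(s)\expect_{s\sim d_{\dpi}}[g(s)]$ to localize the expectation bound, and Lemma~\ref{lem:localValueDiffBoundW} to control that expectation. Your explicit Jensen step passing from the state-value gap to the action-value gap (so that Lemma~\ref{lem:localValueDiffBoundW}, which is stated for $Q$-functions under $\xidpi$, genuinely applies) is a detail the paper's proof glosses over, so your write-up is if anything slightly more careful.
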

\vspace{-5px}
Thus, the representation quality argument given in \ref{sec:globalRepresentationQuality} holds for any two states $s_1$ and $s_2$ which are visited often by a policy $\dpi$.

\section{Bisimulation}\label{sec:bisimulation}

\begin{figure}
    \centering
    \includegraphics[keepaspectratio, width=.35\textwidth]{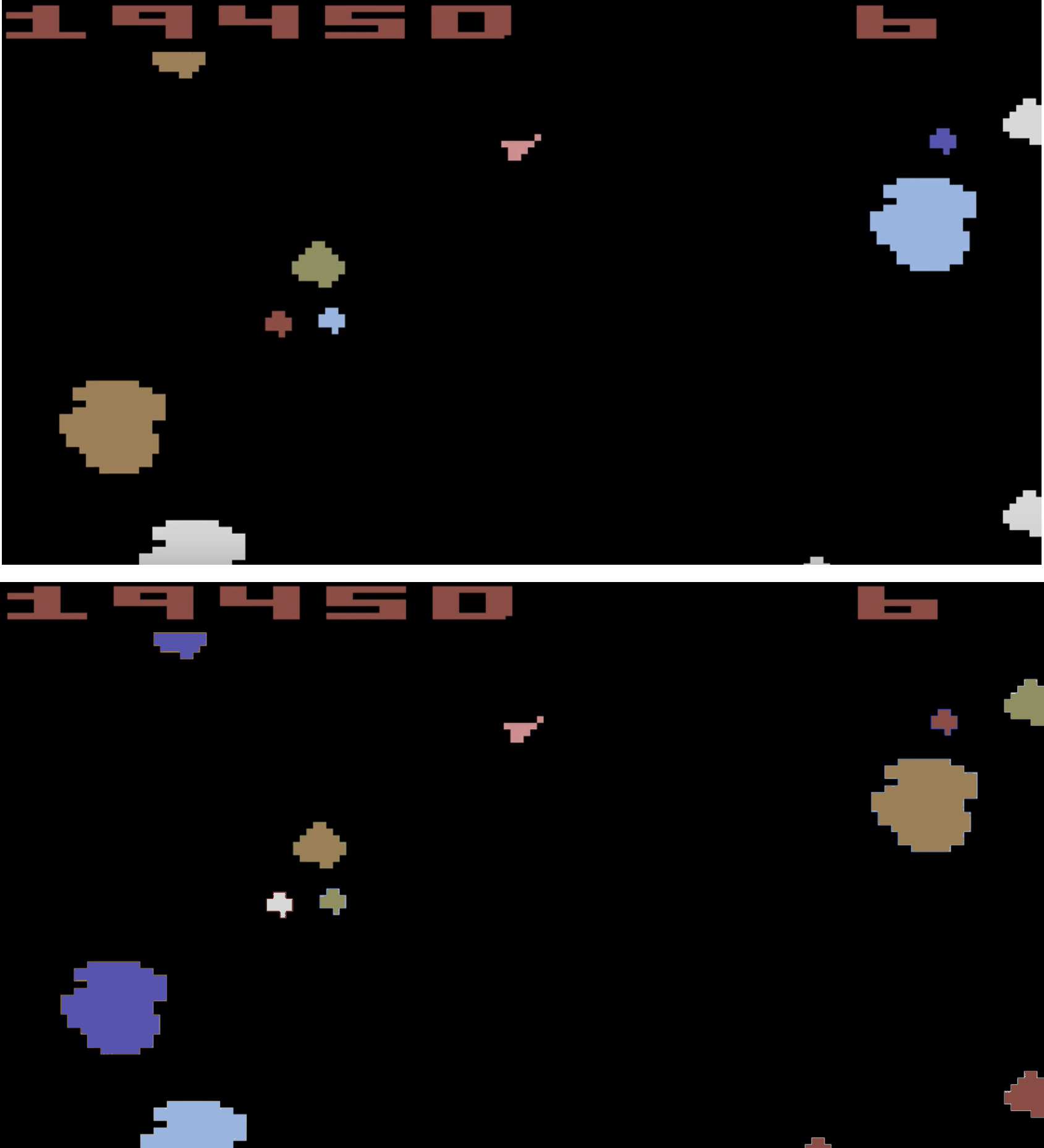}
    \caption{A pair of bisimilar states. In the game of \textsc{Asteroids}, the colors of the asteroids can vary randomly, but this in no way impacts gameplay.}
    \vspace{-15px}
    \label{fig:bisimulation}
\end{figure}

\subsection{Bisimulation Relations}
Bisimulation relations in the context of RL \citep{givan2003equivalence}, are a formalization of behavioural equivalence between states. 

\begin{defn}[\citet{givan2003equivalence}]
Given an MDP $\mdp$, an equivalence relation $B$ between states is a bisimulation relation if for all states $s_1, s_2 \in \sspace$ that are equivalent under $B$ (i.e. $s_1 B s_2$), the following conditions hold for all actions $a\in\aspace$.
\begin{align*}
    &R(s_1, a) = R(s_2, a) \\
    &\prob(G | s_1, a) = \prob(G | s_2, a),   \forall G \in \sspace / B
\end{align*}
Where $\sspace/B$ denotes the partition of $\sspace$ under the relation $B$, the set of all groups of equivalent states, and where $\prob(G |s,a) = \sum_{s' \in G} \prob(s' |s,a)$.
\end{defn}
Note that bisimulation relations are not unique. For example, the equality relation $=$ is always a bisimulation relation.
Of particular interest is the maximal bisimulation relation $\sim$, which defines the partition $\sspace / \sim$ with the fewest elements (or equivalently, the relation that generates the largest possible groups of states). We will say that two states are bisimilar if they are equivalent under $\sim$.
Essentially, two states are bisimilar if (1) they have the same immediate reward for all actions and
(2) both of their distributions over next-states contain states which themselves are bisimilar. Figure~\ref{fig:bisimulation} gives an example of states that are bisimilar in the Atari 2600 game \textsc{Asteroids}.
An important property of bisimulation relations is that any two bisimilar states $s_1, s_2$ must have the same optimal value function $\Qstar(s_1,a) = \Qstar(s_2, a), \forall a\in\aspace$. Bisimulation relations were first introduced for state aggregation \citep{givan2003equivalence}, which is a form of representation learning, since merging behaviourally equivalent states does not result in the loss of information necessary for solving the MDP.

\subsection{Bisimulation Metrics}
A drawback of bisimulation relations is their \textit{all-or-nothing} nature. Two states that are nearly identical, but differ slightly in their reward or transition functions, are treated as though they were just as unrelated as two states with nothing in common. Relying on the optimal transport perspective of the Wasserstein, \citet{ferns2004bisimulation} introduced bisimulation metrics, which are pseudometrics that quantify the behavioural similarity of two discrete states.

A pseudometric $d$ satisfies all the properties of a metric except \textit{identity of indiscernibles}, $d(x, y) = 0 \Leftrightarrow x=y$. A pseudometric can be used to define an equivalence relation by saying that two points are equivalent if they have zero distance; this is called the
kernel of the pseudometric. Note that pseudometrics must obey the triangle inequality, which ensures the kernel satisfies the associative property.
Without any changes to its definition, the Wasserstein metric can be extended to spaces $\langle \chi, d \rangle$, where $d$ is a pseudometric.
Intuitively, the usage of a pseudometric in the Wasserstein can be interpreted as allowing different points $x_1 \neq x_2$ in $\chi$ to be equivalent under the pseudometric (i.e. $d(x_1, x_2) = 0$). Thus, there is no need for transportation from one to the other.

An extension of bisimulation metrics based on Banach fixed points by \citet{ferns2011bisimulation} which allows the metric to be defined for MDPs with discrete and continuous state spaces. 

\begin{defn}[\citet{ferns2011bisimulation}]
\label{def:bisimulation_metric}
Let $\mdp$ be an MDP and denote by $Z$ the space of pseudometrics on the space $\sspace$ s.t. $d(s_1, s_2) \in [0, \infty)$ for $d\in Z$. Define the operator $F:Z \to Z$ to be:

\begin{multline*}
    F_d(s_1, s_2) = \max_a (1 - \gamma) \left| \rew(s_1, a) - \rew(s_2,a) \right | \\ + \gamma \W_{d} (\prob(\cdot | s_1, a), \prob(\cdot | s_2, a)) .
\end{multline*}
Then:
\begin{enumerate}
  \item The operator $F$ is a contraction with a unique fixed point denoted by $\td$.
   \item The kernel of $\td$ is the maximal bisimulation relation $\sim$.  (i.e. $\td(s_1, s_2) = 0 \iff s_1 \sim s_2$)
\end{enumerate}
\end{defn}

A useful property of bisimulation metrics is that the optimal value function difference between any two states can be upper bounded by the bisimulation metric between the two states.
\begin{equation*}
|\Vstar(s_1) - \Vstar(s_2)| \le \frac{\td(s_1, s_2)}{1-\gamma}
\end{equation*}

Bisimulation metrics have been used for state aggregation \citep{ferns2004bisimulation, Ruan2015RepresentationDF}, feature discovery \citep{Comanici2011BasisFD} and transfer learning between MDPs
\citep{castro2010bisimulation}, but due to their high computational cost and poor compatibility with deep networks they have not been successfully applied to large scale settings.

\subsection{Connection with DeepMDPs}
The representation $\phi$ learned by global DeepMDP losses with the Wasserstein metric can be connected to bisimulation metrics.
\begin{restatable}{theorem}{BisimulationMetricUpperBoundPhi}\label{thm:BisimulationMetricUpperBoundPhi}
Let $\mdp$ be an MDP and $\dmdp$ be a $\Kdrew$-$\Kdprob$-Lipschitz DeepMDP with metric $\distds$. Let $\phi$ be the embedding function and $\gLdprob$ and $\gLdrew$ be the global DeepMDP losses. The bisimulation distance in $\mdp$, $\td: \sspace \times \sspace \to \mathbb{R}^+$ can be upperbounded by the $\ell_2$ distance in the embedding and the losses in the following way:
\begin{multline*}
\td(s_1, s_2) \le \frac{(1-\gamma)K_{\rew}}{1 - \gamma \Kdprob} \distds(\phi(s_1), \phi(s_2)) \\ 
+ 2 \left(  \gLdrew + \gamma \gLdprob \frac{\Kdrew}{1 - \gamma \Kdprob} \right)     
\end{multline*}
\end{restatable}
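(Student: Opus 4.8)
The plan is to exhibit a cost function $u$ on $\sspace\times\sspace$ that dominates the bisimulation metric $\td$, and to prove this domination by showing that $u$ is a \emph{super-fixed-point} of the bisimulation operator $F$ from Definition~\ref{def:bisimulation_metric}. Concretely, I take
\begin{equation*}
u(s_1,s_2) = \frac{(1-\gamma)\Kdrew}{1-\gamma\Kdprob}\,\distds(\phi(s_1),\phi(s_2)) + 2\left(\gLdrew + \gamma\gLdprob\frac{\Kdrew}{1-\gamma\Kdprob}\right),
\end{equation*}
which is exactly the right-hand side of the claimed bound (with the reward-Lipschitz constant $\Kdrew$ of $\dmdp$ as the leading coefficient). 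Since $F$ is a contraction with fixed point $\td$ and is monotone in its cost argument (the Wasserstein term is monotone in the underlying cost, and the reward term is independent of it), the standard monotone-iteration argument applies: starting from the zero pseudometric $d_0\equiv 0\le u$, if I can establish $F_u\le u$ pointwise, then monotonicity gives $F^n(d_0)\le F^n(u)$ for every $n$, while iterating $F_u\le u$ gives $F^n(u)\le u$; combining and letting $n\to\infty$ yields $\td=\lim_n F^n(d_0)\le u$, which is the theorem. Note that $u$ itself is not a pseudometric because of its constant offset, so I route the argument through $d_0\equiv 0$ rather than substituting $u$ directly as a fixed point.

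The core of the work is the pointwise inequality $F_u(s_1,s_2)\le u(s_1,s_2)$, which I split into the reward and transition contributions of $F$. For the reward term, two applications of the global reward loss together with the $\Kdrew$-Lipschitzness of $\drew$ give
\begin{equation*}
|\rew(s_1,a)-\rew(s_2,a)| \le 2\gLdrew + \Kdrew\,\distds(\phi(s_1),\phi(s_2)),
\end{equation*}
uniformly in $a$, by inserting $\drew(\phi(s_1),a)$ and $\drew(\phi(s_2),a)$ and using the triangle inequality. For the transition term I first observe that, because $u$ adds a \emph{constant} $C_2$ to $C_1\,\distds(\phi(\cdot),\phi(\cdot))$ and every coupling has unit mass, the constant passes straight through the infimum: $\W_u(P,Q)=C_2 + C_1\,\W_{d_\phi}(P,Q)$, where $d_\phi(x,y):=\distds(\phi(x),\phi(y))$ is the pullback pseudometric. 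The pushforward property of the Wasserstein then converts the pulled-back cost into a distance between embedded distributions, $\W_{d_\phi}(\prob(\cdot|s_1,a),\prob(\cdot|s_2,a))=\W_{\distds}(\phi\prob(\cdot|s_1,a),\phi\prob(\cdot|s_2,a))$, and a three-term triangle inequality through $\dprob(\cdot|\phi(s_i),a)$ bounds this by $2\gLdprob + \Kdprob\,\distds(\phi(s_1),\phi(s_2))$, using the global transition loss twice and the $\Kdprob$-Lipschitzness of $\dprob$ once.

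Substituting both bounds into $F_u=\max_a[(1-\gamma)|\rew(s_1,a)-\rew(s_2,a)| + \gamma\W_u(\prob(\cdot|s_1,a),\prob(\cdot|s_2,a))]$ makes the right-hand side independent of $a$, so the maximum drops out, and it remains to match the coefficient of $\distds(\phi(s_1),\phi(s_2))$ and the constant term against $u=C_1\,\distds(\phi(s_1),\phi(s_2))+C_2$. The coefficient condition $(1-\gamma)\Kdrew+\gamma C_1\Kdprob\le C_1$ holds with equality precisely for $C_1=\frac{(1-\gamma)\Kdrew}{1-\gamma\Kdprob}$ (Assumption~\ref{assum:lipschitz_assumption}, $\gamma\Kdprob<1$, guarantees the denominator is positive), and the constant condition then reduces to $C_2\ge 2\gLdrew + \frac{2\gamma\gLdprob\Kdrew}{1-\gamma\Kdprob}$, again met with equality by the chosen $C_2$. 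I expect the main obstacle to be the transition term: correctly handling the self-referential Wasserstein by separating the constant offset (legitimate only because couplings are probability measures) and then invoking the pushforward identity to bring in $\gLdprob$ and $\Kdprob$; the reward term and the final coefficient matching are routine by comparison.
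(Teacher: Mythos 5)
Your proof is correct, but it is organized quite differently from the paper's. The paper proceeds in three steps: Lemma~\ref{lem:BisimulationMetricUpperBound} shows by induction on the iterates of $F$ that the bisimulation metric of $\dmdp$ itself is $\frac{(1-\gamma)\Kdrew}{1-\gamma\Kdprob}$-Lipschitz with respect to $\distds$; Lemma~\ref{lem:PhiBisimulationMetricUpperBound} builds a \emph{joined} MDP on $\sspace\cup\dsspace$, extends the bisimulation metric across it, and bounds $\td(s,\phi(s))\le \gLdrew+\gamma\gLdprob\frac{\Kdrew}{1-\gamma\Kdprob}$ by a recurrence; the theorem then follows from the pseudometric triangle inequality $\td(s_1,s_2)\le \td(s_1,\phi(s_1))+\td(\phi(s_1),\phi(s_2))+\td(\phi(s_2),s_2)$. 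You instead stay entirely on $\sspace\times\sspace$ and run a single monotone-comparison argument: the candidate $u=C_1\distds(\phi(\cdot),\phi(\cdot))+C_2$ satisfies $F_u\le u$, and monotonicity of $F$ (in the primal coupling form of $\W$) plus $F^n(0)\to\td$ gives $\td\le u$. I verified your coefficient matching: $(1-\gamma)\Kdrew+\gamma C_1\Kdprob=C_1$ and $2(1-\gamma)\gLdrew+\gamma C_2+2\gamma C_1\gLdprob=C_2$ both hold with equality for your choices, and the reward and transition triangle inequalities are the same three ingredients the paper uses. What your route buys is the elimination of the joined-MDP construction, which is the least formal part of the paper's argument (it requires defining bisimulation jointly over a discrete and a continuous MDP and tacitly identifying the restriction of the joined metric to $\sspace\times\sspace$ with $\mdp$'s own $\td$); what the paper's route buys is the reusable intermediate bound on $\td(s,\phi(s))$, which has independent interpretive value. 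The one step in your argument that deserves an explicit line is the direction $\W_{d_\phi}(P,Q)\le \W_{\distds}(\phi P,\phi Q)$ of your pushforward identity: the easy pushforward of couplings gives the reverse inequality, so you need either a lifting of couplings (gluing/disintegration, elementary in the discrete setting the paper assumes) or the dual observation that any $f$ with $|f(x)-f(y)|\le\distds(\phi(x),\phi(y))$ factors as $g\circ\phi$ with $g$ $1$-Lipschitz on $\phi(\sspace)$; the paper relies on essentially the same fact implicitly in Lemma~\ref{lem:PhiBisimulationMetricUpperBound}.
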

This result provides a similar bound to Theorem~\ref{thm:globalLipschitzRepresentationQuality}, except that instead of bounding the value difference $|\dVdpi(s_1) - \dVdpi(s_2)|$ the bisimulation distance $\td(s_1,s_2)$ is bounded. We speculate that similar results should be possible based on \textit{local} DeepMDP losses, but they would require a generalization of bisimulation metrics to the local setting.

\subsection{Characterizing $\dPi$}\label{sec:characterizing_dPi}
In order to better understand the set of policies $\dPi$ (which appears in the bounds of Sections~\ref{sec:global}~and~\ref{sec:local}), we first consider the set of \textit{bisimilar policies}, defined as $\tPi = \left \{\pi: \forall s_1, s_2\in\sspace, s_1\sim s_2 \Leftrightarrow \pi(a|s_1) = \pi(a|s_2) \forall a  \right \}$, which contains all policies that act the same way on states that are bisimilar. Although this set excludes many policies in $\Pi$, we argue that it is adequately expressive, since any policy that acts differently on states that are bisimilar is fundamentally uninteresting.\footnote{For control, searching over these policies increases the size of the search space with no benefits on the optimality of the solution.}

We show a connection between deep policies and bisimilar policies by proving that the set of \textit{Lipschitz-deep policies}, $\dPi_K \subset \dPi$, approximately contains the set of \textit{Lipschitz-bisimilar policies}, $\tPi_K \subset \tPi$, defined as follows:
\begin{align*}
&\dPi_K = \left\{\dpi: \forall s_1\neq s_2 \in\sspace,  \frac{|\dpi(a|s_1) - \dpi(a|s_2)|}{\distds(\phi(s_1), \phi(s_2))} \le K  \right \}, \\
&\tPi_K = \left \{\pi: \forall s_1\neq s_2 \in\sspace,  \frac{|\pi(a|s_1) - \pi(a|s_2)|}{\td(s_1, s_2)} \le K \right \}.
\end{align*}

The following theorem proves that minimizing the global DeepMDP losses ensures that for any $\tpi\in\tPi_K$, there is a deep policy $\dpi\in\dPi_{CK}$ which is close to $\tpi$, where the constant $C=\frac{(1-\gamma)K_{\rew}}{1 - \gamma \Kdprob}$.
\begin{restatable}{theorem}{dPiBisim}\label{thm:dPiBisim}
Let $\mdp$ be an MDP and $\dmdp$ be a ($\Kdrew$, $\Kdprob$)-Lipschitz DeepMDP, with an embedding function $\phi$ and global loss functions $\gLdrew$ and $\gLdprob$. Denote by $\tPi_K$ and $\dPi_K$ the sets of Lipschitz-bisimilar and Lipschitz-deep policies.
Then for any $\tpi\in\tPi_K$ there exists a $\dpi\in\dPi_{CK}$ which is close to $\tpi$ in the sense that, for all $s \in \sspace$ and $a \in \aspace$,
\begin{equation*}
| \tpi(a |s) - \dpi(a|s) | \le  \gLdrew + \gamma \gLdprob \frac{\Kdrew}{1 - \gamma \Kdprob}
\end{equation*}
\end{restatable}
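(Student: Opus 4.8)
The plan is to \emph{construct} the required deep policy $\dpi$ from the given bisimilar policy $\tpi$ by performing, for each fixed action $a$, a Lipschitz extension of the coordinate map $s \mapsto \tpi(a \mid s)$ from the bisimulation pseudometric $\td$ into the latent metric $\distds\circ\phi$. The bridge between these two geometries is precisely Theorem~\ref{thm:BisimulationMetricUpperBoundPhi}, which I record compactly as
\begin{equation*}
\td(s_1,s_2) \le C\,\distds(\phi(s_1),\phi(s_2)) + 2\epsilon,
\qquad
C = \frac{(1-\gamma)\Krew}{1-\gamma\Kdprob},
\qquad
\epsilon = \gLdrew + \gamma\gLdprob\frac{\Kdrew}{1-\gamma\Kdprob}.
\end{equation*}
Since $\tpi\in\tPi_K$ means every coordinate $\tpi(a\mid\cdot)$ is $K$-Lipschitz with respect to $\td$, this inequality says that $\tpi(a\mid\cdot)$ is \emph{almost} $CK$-Lipschitz with respect to $\distds\circ\phi$, up to the additive slack $2\epsilon$. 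The extension converts this near-Lipschitzness into an honest $CK$-Lipschitz function living on the latent space, while keeping the incurred error controlled by $\epsilon$.

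Concretely, I would define, for each action $a$ and state $s$, the McShane-type inf-convolution
\begin{equation*}
\dpi(a\mid\phi(s)) := \inf_{s'\in\sspace}\Big[\,\tpi(a\mid s') + CK\,\distds(\phi(s),\phi(s'))\,\Big].
\end{equation*}
Two properties then need verification. First, the right-hand side depends on $s$ only through $\phi(s)$, so it is a legitimate function on the latent space (a candidate deep policy), and it is $CK$-Lipschitz in $\distds\circ\phi$ because it is an infimum of the $CK$-Lipschitz maps $s\mapsto \tpi(a\mid s')+CK\distds(\phi(s),\phi(s'))$ (triangle inequality for $\distds$); hence $\dpi\in\dPi_{CK}$. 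Second, for closeness, taking $s'=s$ gives the upper estimate $\dpi(a\mid\phi(s))\le\tpi(a\mid s)$, while for the lower estimate I substitute the rearranged bisimulation bound $C\distds(\phi(s),\phi(s'))\ge\td(s,s')-2\epsilon$ together with the $K$-Lipschitzness $\tpi(a\mid s)-\tpi(a\mid s')\le K\td(s,s')$ into each term of the infimum, obtaining $\dpi(a\mid\phi(s))\ge\tpi(a\mid s)-2K\epsilon$. Collapsing the two estimates yields $|\tpi(a\mid s)-\dpi(a\mid s)|\le 2K\epsilon$: a bound of exactly the claimed form, but with constants still to be sharpened.

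The main obstacles are twofold. The first is constant tracking: the crude one-sided inf-convolution above carries the additive slack scaled by \emph{both} the factor $2$ from Theorem~\ref{thm:BisimulationMetricUpperBoundPhi} and the Lipschitz constant $K$, whereas the statement asks for the clean bound $\gLdrew+\gamma\gLdprob\frac{\Kdrew}{1-\gamma\Kdprob}$; recovering this requires a sharper, possibly two-sided or symmetrized, accounting rather than the naive extension. The second and more delicate point is that $\dpi$ must be a genuine policy: the coordinatewise extension need not satisfy $\sum_a\dpi(a\mid\phi(s))=1$ nor be exactly nonnegative. I would control this by showing the per-action errors force $\sum_a\dpi(a\mid\phi(s))$ to lie within $O(|\aspace|K\epsilon)$ of $1$, then projecting onto the probability simplex and checking that this projection preserves the $CK$-Lipschitz property (projection onto a convex set is non-expansive) and inflates the closeness bound only at the claimed order. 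Questions of whether $\phi$ is surjective onto $\dsspace$ are harmless, since the construction and all guarantees are evaluated only at image points $\phi(s)$.
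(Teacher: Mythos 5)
Your construction is, at bottom, the paper's own argument made explicit. The paper likewise reduces the theorem to Theorem~\ref{thm:BisimulationMetricUpperBoundPhi} -- which supplies exactly the inequality $\td(s_1,s_2)\le C\,\distds(\phi(s_1),\phi(s_2))+2\epsilon$ you start from -- and then invokes an abstract approximation result, Lemma~\ref{lem:LipschitzFunctionEpsilon}: if $d_f\le \epsilon + d_g$ pointwise, then every $1$-Lipschitz function for $d_f$ lies within $\epsilon/2$ in sup norm of some $1$-Lipschitz function for $d_g$. The paper proves that lemma non-constructively, by decomposing any function with modulus of continuity $\epsilon+d_g$ as $g+u$ with $g$ Lipschitz for $d_g$ and $\infnorm{u}\le\epsilon/2$; your McShane inf-convolution is precisely an explicit witness for that $g$, and the ``symmetrized accounting'' you anticipate is just the standard shift of the one-sided extension by half the gap, which turns your estimate $\tpi(a|s)-2K\epsilon\le\dpi(a|\phi(s))\le\tpi(a|s)$ into a two-sided bound of half that width. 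So the route is the same; what your version buys is constructiveness, at the cost of the extra bookkeeping you flag. On that bookkeeping, two remarks. First, the residual factor of $K$ that survives your symmetrization is not an artifact of your construction: to apply Lemma~\ref{lem:LipschitzFunctionEpsilon} with $d_f=K\td$ and $d_g=KC\,\distds\circ\phi$ one must take the slack to be $2K\bigl(\gLdrew+\gamma\gLdprob\tfrac{\Kdrew}{1-\gamma\Kdprob}\bigr)$, so the lemma yields a bound with a leading factor of $K$ that the paper's stated conclusion silently drops; your suspicion that the clean constant cannot be recovered from this argument alone is well founded. Second, your worry that the coordinatewise extension need not produce a normalized, nonnegative policy is a genuine issue that the paper's proof does not address either -- it applies the lemma per action and never verifies that the resulting $\dpi(\cdot|\phi(s))$ lies on the simplex -- so your proposed projection step is an addition rather than a deviation. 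In both respects your proposal is more careful than the published proof, while following the same underlying strategy.
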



\begin{figure*}[t]
    \centering
    \includegraphics[keepaspectratio, width=.8\textwidth]{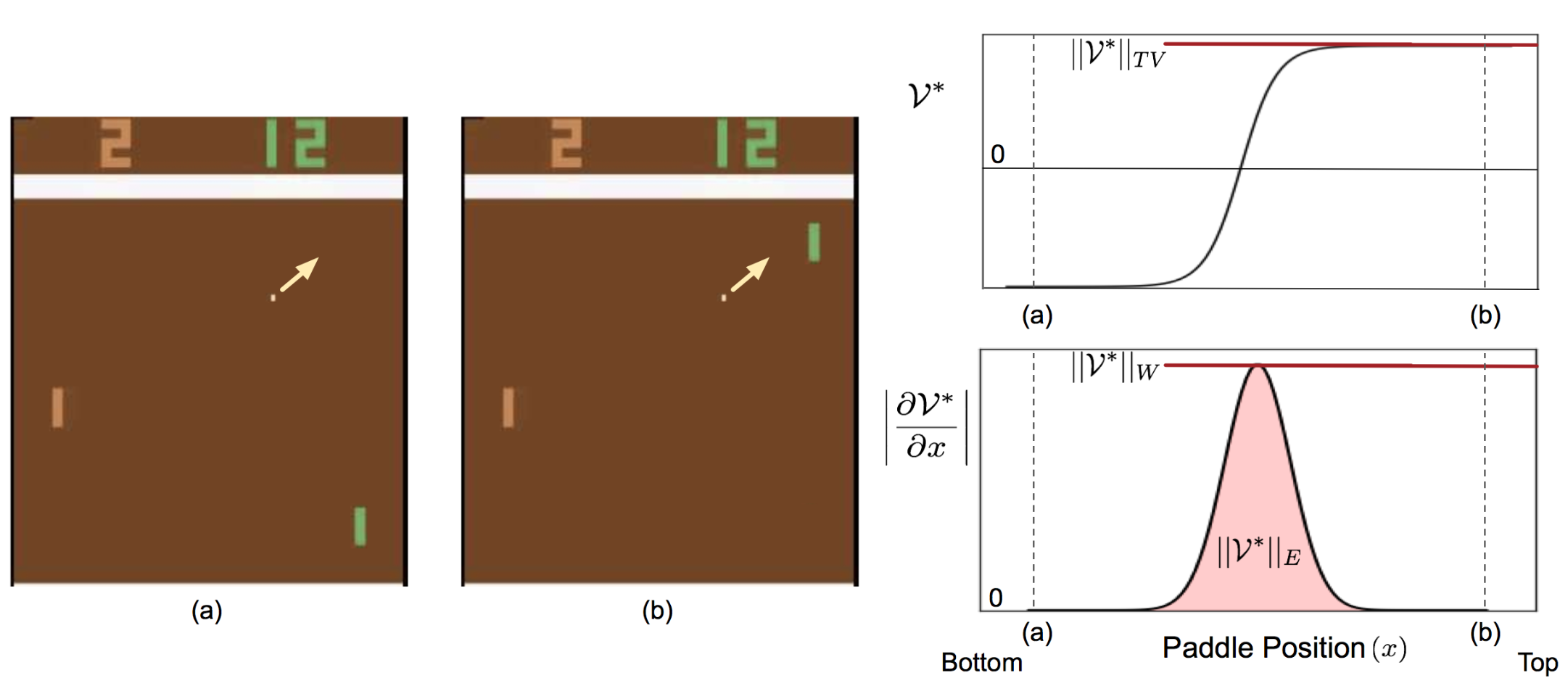}
    \caption{Visualization of the way in which different smoothness properties on the value function are derived.  The left compares two near-identical frames of \textsc{Pong}, (a) and (b), whose only difference is the position of the player's paddle. The plots on the right show the optimal value of the state (top) and the derivative of the optimal value (bottom) as a function of the position of the player's paddle, assuming all other features of the state are kept constant. The associated smoothness of each Norm-MMD metric is shown visually. (Note that this is for illustrative purposes only, and was not actually computed from the real game. The curve in the value function represents noisy dynamics, such as those induced by ``sticky actions'' \citep{Mnih2015HumanlevelCT}; if the environment were deterministic, the optimal value would be a step function.)}
    \label{fig:dmdp_lipschitz_pong}
\end{figure*}

\section{Beyond the Wasserstein}
\label{sec:other_metrics}
Interestingly, value difference bounds (Lemmas~\ref{lem:globalValueDifferenceBound}~and~\ref{lem:localValueDiffBoundW}) can be derived for many different choices of probability metric $\D$ (in the DeepMDP transition loss function, Equation~\ref{eq:latent_transition_loss}).
Here, we generalize the result to a family of \textit{Maximum Mean Discrepancy (MMD)} metrics \citep{Gretton2012AKT} defined via a function norm that we denote as \textit{Norm Maximum Mean Discrepancy (Norm-MMD)} metrics. Interestingly, the role of the Lipschitz norm in the value difference bounds is a consequence of using the Wasserstein; when we switch from the Wasserstein to another metric, it is replaced by a different term. We interpret these terms as different forms of smoothness of the value functions in $\dmdp$.

By choosing a metric whose associated smoothness corresponds well to the environment, we can potentially improve the tightness of the bounds. 
For example, in environments with highly non-Lipschitz dynamics, it may be impossible to learn an accurate DeepMDP whose deep value function has a small Lipschitz norm. Instead, the associated smoothness of another metric might be more appropriate. Another reason to consider other metrics is computational; the Wasserstein has high computational cost and suffers from biased stochastic gradient estimates \citep{bińkowski2018demystifying, bellemare2017cramer}, so minimizing a simpler metric, such as the KL, may be more convenient.

\subsection{Norm Maximum Mean Discrepancy Metrics}
MMD metrics \citep{Gretton2012AKT} are a family of probability metrics, each generated via a class of functions.
They have also been studied by \citet{muller1997integral} under the name of Integral Probability Metrics.
\begin{defn}[\citet{Gretton2012AKT} Definition 2]\label{def:MMD}
Let $P$ and $Q$ be distributions on a measurable space $\chi$ and let $\F_\D$ be a class of functions $f: \chi\to\mathbb{R}$. The Maximum Mean Discrepancy $\D$ is
\begin{equation*}
\Dist{P}{Q} = \sup_{f \in  \F_\D} | \expect_{x\sim P} f(x) - \expect_{y \sim Q} f(y) |.
\end{equation*}
\end{defn}
When $P = Q$ it's obvious that $\Dist{P}{Q} = 0$ regardless of the function class $\F_\D$. But the class of functions leads to MMD metrics with different behaviours and properties. Of interest to us are function classes generated via function seminorms\footnote{A seminorm $\normD{\cdot}$ is a norm except that $\normD{f} =0 \not\Rightarrow f = 0$.}. Concretely, we define a Norm-MMD metric $\D$ to be an MMD metric generated from a function class $\F_D$ of the following form:
\begin{equation*}
\F_\D = \{ f: \normD{f} \le 1 \}.
\end{equation*}
where $\normD{\cdot}$ is the \textit{associated function seminorm} of $\D$. We will see that the family of Norm-MMDs are well suited for the task of latent space modeling. 
Their key property is the following: let $\D$ be a Norm-MMD, then for any function $f$ s.t. $\normD{f} \le K$,
\begin{equation}
 | \expect_{x\sim P} f(x) - \expect_{y \sim Q} f(y) | \le K\cdot \Dist{P}{Q}.
\end{equation}

We now discuss three particularly interesting examples of Norm-MMD metrics.

\textbf{Total Variation:} Defined as $TV(P, Q) = \frac{1}{2} \int_\chi |P(x) - Q(x)| \deriv x$, the Total Variation is one of the most widely-studied metrics. Pinsker's inequality \citep[p.63]{convanal} bounds the TV with the Kullback--Leibler (KL) divergence. 
The Total Variation is also the Norm-MMD generated from the set of functions with absolute value bounded by $1$ \citep{muller1997integral}. Thus, the function norm $\norm{f}_{TV} = \norm{f}_\infty  = \sup_{x\in\chi} | f(x) |$.

\textbf{Wasserstein metric:}
The interpretation of the Wasserstein as an MMD metrics is clear from its dual form (Equation \ref{eq:WassersteinDual}), where the function class $\F_\W$ is set of $1$-Lipschitz functions,
\begin{equation*}
\F_\W = \{ f: |f(x) - f(y)| \le d(x,y), \forall x,y \in \chi \}.
\end{equation*}
The norm associated with the Wasserstein metric $\norm{f}_\W$ is therefore the Lipschitz norm, which in turn is the the $\ell_\infty$ norm of $f'$ (the derivative of $f$). Thus, $\norm{f}_\W =  \norm{f'}_\infty = \sup_{x\in\chi} \frac{\deriv f(x)}{\deriv x}$.

\textbf{Energy distance:} The energy distance $E$ was first developed to compare distributions in high dimensions via a two sample test \citep{Szkely2004TESTINGFE, Gretton2012AKT}. It is defined as:
\begin{multline*}
E(P, Q) = 2 \expect_{(x,y)\sim P\times Q} \norm{x - y} \\ - \expect_{x,x' \sim P} \norm{x - x'} - \expect_{y,y'\sim Q} \norm{y - y'},
\end{multline*}
where $x,x'\sim P$ denotes two independent samples of the distribution $P$. \citet{Sejdinovic2013EquivalenceOD} showed the connection between the energy distance and MMD metrics. Similarly to the Wasserstein, the Energy distance's associated seminorm is: $\norm{f}_E =  \norm{f'}_1 = \int_\chi |\frac{\deriv f(x)}{\deriv x} | \deriv x$.


\subsection{Value Function Smoothness}

In the context of value functions, we interpret the function seminorms associated with Norm-MMD metrics as different forms of \textit{smoothness}. 
\begin{defn}\label{defn:SmoothValue}
Let $\dmdp$ be a DeepMDP and let $\D$ be a Norm-MMD with associated norm $\normD{\cdot}$. We say that a policy $\dpi \in \dPi$ is $\KdV$-smooth-valued if: 
\begin{equation*}
    \normD{\dVdpi} \le \KdV.
\end{equation*}
and if for all $a \in \aspace$:
\begin{equation*}
    \normD{\dQdpi(\cdot, a)} \le \KdV.
\end{equation*}
\end{defn}

For a value function $\dVdpi$, $\norm{\dVdpi}_{TV}$ is the maximum absolute value of $\dVdpi$.
Both $\norm{\dVdpi}_{\W}$ and $\norm{\dVdpi}_{E}$ depend on the derivative of $\dVdpi$, but while $\norm{\dVdpi}_\W$ is governed by point of maximal change, $\norm{\dVdpi}_E$ instead measures the amount of change over the whole state space $\ds$. Thus, a value function with a small region of high derivative (and thus, large $\norm{\dVdpi}_\W$) can still have small $\norm{\dVdpi}_E$. 
In Figure~\ref{fig:dmdp_lipschitz_pong} we provide an intuitive visualization of these three forms of smoothness in the game of Pong.

One advantage of the Total Variation is that it requires minimal assumptions on the DeepMDP. If the reward function is bounded, i.e. $ |\drew(\ds,a)| \le \Kdrew,~ \forall \ds\in\dsspace, a\in\aspace$, then all policies $\dpi\in\dPi$ are $\frac{\Kdrew}{1-\gamma}$-smooth-valued. We leave it to future work to study value function smoothness more generally for different Norm-MMD metrics and their associated norms.

\subsection{Generalized Value Difference Bounds}
The global and local value difference results
(Lemmas~\ref{lem:globalValueDifferenceBound}~and~\ref{lem:localValueDiffBoundW}), as well as the suboptimality result Lemma~\ref{thm:subopt}, can easily be derived when $\D$ is any Norm-MMD metric. Due to the repetitiveness of these results, we don't include them in the main paper; refer to Appendix~\ref{sec:generalizedProofs} for the full statements and proofs.
We leave it to future work to characterize the of policies $\dPi$ when general (i.e. non-Wasserstein) Norm-MMD metrics are used.

The fact that the representation quality results (Theorems~\ref{thm:globalLipschitzRepresentationQuality}~and~\ref{thm:localLipschitzRepresentationQuality}) and the connection with bisimulation (Theorems~\ref{thm:BisimulationMetricUpperBoundPhi}~and~\ref{thm:dPiBisim}) don't generalize to Norm-MMD metrics emphasizes the special role the Wasserstein metric plays for representation learning. 

\section{Related Work in Representation Learning}
State aggregation methods \cite{abel2017near,li2006towards, singh1995reinforcement, givan2003equivalence, jiang2015abstraction, Ruan2015RepresentationDF} attempt to reduce the dimensionality of the state space by joining states together, taking the perspective that a good representation is one that reduces the total number of states without sacrificing any necessary information.
Other representation learning approaches take the perspective that an optimal representation contains features that allow for the linear parametrization of the optimal value function \citep{Comanici2011BasisFD, Mahadevan2007ProtovalueFA}. Recently, \citet{Bellemare2019AGP, Dadashi2019TheVF} approached the representation learning problem from the perspective that a good representation is one that allows the prediction via a linear map of any value function in the value function space. 
In contrast, we have argued that a good representation (1) allows for the parametrization of a large set of \emph{interesting} policies and (2) allows for the good approximation of the \emph{value function} of these policies.

Concurrently, a suite of methods combining model-free deep reinforcement learning with auxiliary tasks has shown large benefits on a wide variety of domains
\citep{jaderberg2016reinforcement, Oord2018RepresentationLW, Mirowski2017LearningTN}. Distributional RL \citep{bellemare17distributional}, which was not initially introduced as a representation learning technique, has been shown by \citet{Lyle2019ACA} to only play an auxiliary task role. Similarly, \cite{Fedus2019HyperbolicDA} studied different discounting techniques by learning the spectrum of value functions for different discount values $\gamma$, and incidentally found that to be a highly useful auxiliary task. Although successful in practice, these auxiliary task methods currently lack strong theoretical justification. Our approach also proposes to minimize losses as an auxilliary task for representation learning, for a specifc choice of losses: the DeepMDP losses. We have formally justified this choice of losses, by providing theoretical guarantees on representation quality.

\begin{figure}[t]
    \centering
    \subfigure[One-track DonutWorld.] {
    \makebox[0.45\textwidth]{
      \includegraphics[keepaspectratio, width=0.35\textwidth]{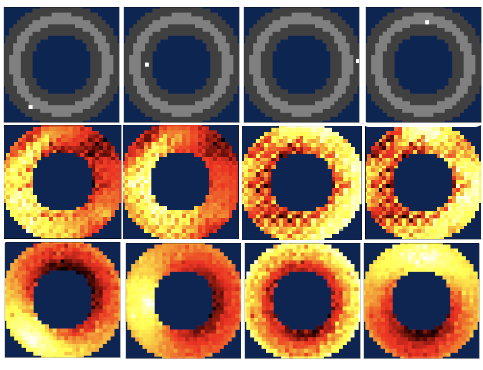}
      }
      \label{fig:toyheatmaps}
    }
    \vspace{0.3in}
    \subfigure[Four-track DonutWorld.] {
    \makebox[0.45\textwidth]{
      \includegraphics[keepaspectratio, width=0.35\textwidth]{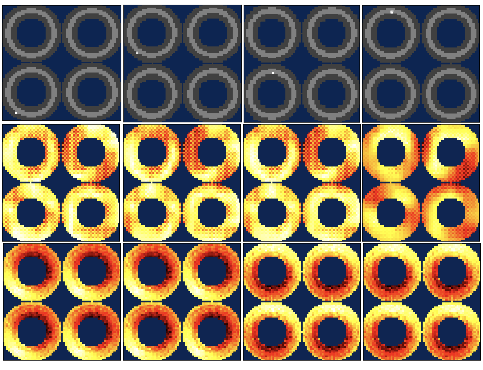}
      }
      \label{fig:multitoyheatmaps}
    }
    \vspace{-0.4in}
    \caption{Given a state in our DonutWorld environment (first row), we plot a heatmap of the distance between that latent state and each other latent state, for both autoencoder representations (second row) and DeepMDP representations (third row). More-similar latent states are represented by lighter colors.}
    \label{fig:toy}
    \vspace{-0.3in}
\end{figure}

\section{Empirical Evaluation}\label{sec:empirical}
Our results depend on minimizing losses in expectation, which is the main requirement for deep networks to be applicable. Still, two main obstacles arise when turning these theoretical results into practical algorithms:

\textbf{(1) Minimization of the Wasserstein} \citet{Arjovsky2017WassersteinGA} first proposed the use of the Wasserstein distance for Generative Adversarial Networks (GANs) via its dual formulation (see Equation~\ref{eq:WassersteinDual}). Their approach consists of training a network, constrained to be $1$-Lipschitz, to attain the supremum of the dual. Once this supremum is attained, the Wasserstein can be minimized by differentiating through the network.
Quantile regression has been proposed as an alternative solution to the minimization of the Wasserstein  \citep{Dabney2018DistributionalRL}, \citep{Dabney2018ImplicitQN}, and has shown to perform well for Distributional RL. The reader might note that issues with the stochastic minimization of the Wasserstein distance have been found to be biased by \citet{bellemare2017cramer} and \citet{bińkowski2018demystifying}. 
In our experiments, we circumvent these issues by assuming that both $\prob$ and $\dprob$ are deterministic. This reduces the Wasserstein distance $\W_{\distds} (\phi\prob(\cdot|s,a), \dprob(\cdot|\phi(s),a))$ to $\distds(\phi(\prob(s,a)), \dprob(\phi(s),a))$, where $\prob(s,a)$ and $\dprob(\ds,a)$ denote the deterministic transition functions.

\textbf{(2) Control the Lipschitz constants $\Kdrew$ and $\Kdprob$}. We also turn to the field of Wasserstein GANs for approaches to constrain deep networks to be Lipschitz. Originally, \citet{Arjovsky2017WassersteinGA} used a projection step to constraint the discriminator function to be $1$-Lipschitz. \citet{Gulrajani2017ImprovedTO} proposed using a gradient penalty, and sowed improved learning dynamics. Lipschitz continuity has also been proposed as a regularization method by \citet{Gouk2018RegularisationON}, who provided an approach to compute an upper bound to the Lipschitz constant of neural nets. In our experiments, we follow \citet{Gulrajani2017ImprovedTO} and utilize the gradient penalty.

\begin{figure}
    \centering
    \includegraphics[keepaspectratio, width=.4\textwidth]{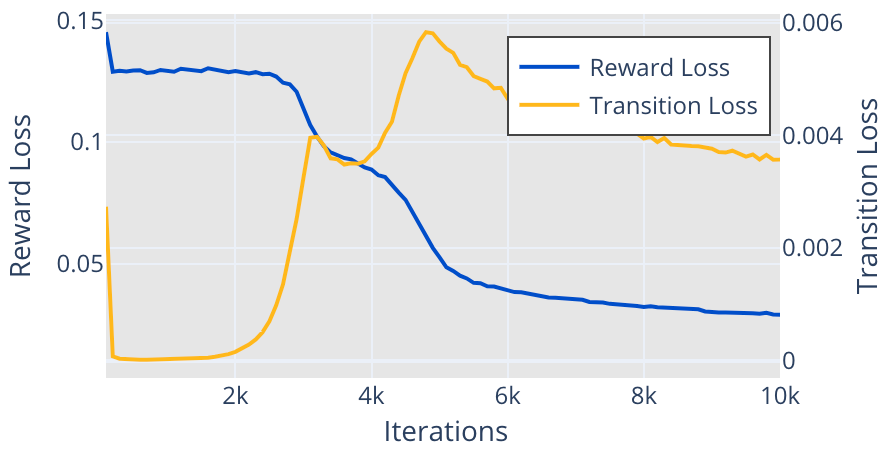}
    \caption{Due to the competition between reward and transition losses, the optimization procedure spends significant time in local minima early on in training. It eventually learns a good representation, which it then optimizes further. (Note that the curves use different scaling on the y-axis.)}
    \label{fig:competeloss}
\end{figure}

\begin{figure*}[t]
    \centering
    \includegraphics[keepaspectratio, width=0.9\textwidth]{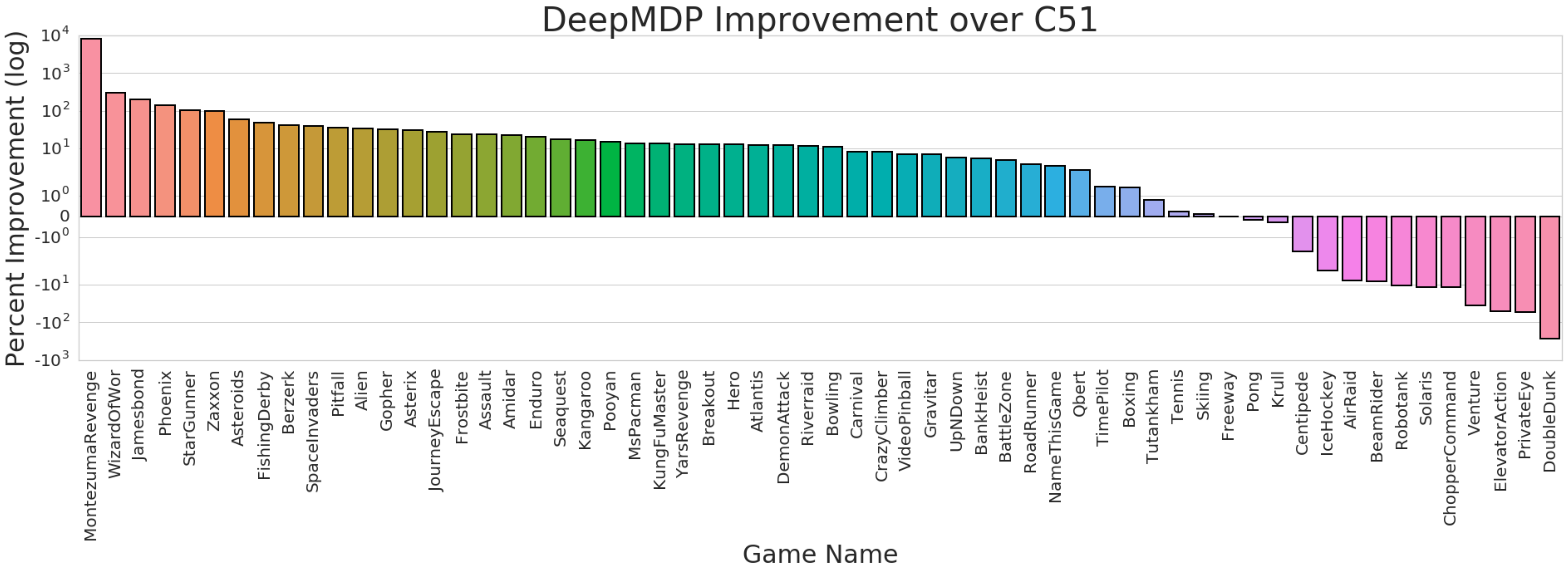}
    \caption{We compare the DeepMDP agent versus the C51 agent on the 60 games from the ALE (3 seeds each). For each game, the percentage performance improvement of DeepMDP over C51 is recorded.}
    \label{fig:performance_improvement}
\end{figure*}

\subsection{DonutWorld Experiments}

In order to evaluate whether we can learn effective representations, we study the representations learned by DeepMDPs in a simple synthetic environment we call \emph{DonutWorld}. DonutWorld consists of an agent rewarded for running clockwise around a fixed track. Staying in the center of the track results in faster movement. Observations are given in terms of 32x32 greyscale pixel arrays, but there is a simple 2D latent state space (the x-y coordinates of the agent). We investigate whether the x-y coordinates are correctly recovered when learning a two-dimensional representation.

This task epitomizes the low-dimensional dynamics, high-dimensional observations structure typical of Atari 2600 games, while being sufficiently simple to experiment with. We implement the DeepMDP training procedure using Tensorflow and compare it to a simple autoencoder baseline. See Appendix \ref{appendix:donut} for a full environment specification, experimental setup, and additional experiments. Code for replicating all experiments is included in the supplementary material.

In order to investigate whether the learned representations learned correspond well to reality, we plot a heatmap of closeness of representation for various states. Figure \ref{fig:toyheatmaps} shows that the DeepMDP representations effectively recover the underlying state of the agent, i.e. its 2D position, from the high-dimensional pixel observations. In contrast, the autoencoder representations are less meaningful, even when the autoencoder solves the task near-perfectly.

In Figure \ref{fig:multitoyheatmaps}, we modify the environment: rather than a single track, the environment now has four identical tracks. The agent starts in one uniformly at random and cannot move between tracks. The DeepMDP hidden state correctly merges all states with indistinguishable value functions, learning a deep state representation which is almost completely invariant to which track the agent is in.

The DeepMDP training loss can be difficult to optimize, as illustrated in Figure \ref{fig:competeloss}. This is due to the tendency of the transition and reward losses to compete with one another. If the deep state representation is uniformly zero, the transition loss will be zero as well; this is an easily-discovered local optimum, and gradient descent tends to arrive at this point early on in training. Of course, an informationless representation results in a large reward loss. As training progresses, the algorithm incurs a small amount of transition loss in return for a large decrease in reward loss, resulting in a net decrease in loss.

In DonutWorld, which has very simple dynamics, gradient descent is able to discover a good representation after only a few thousand iterations. However, in complex environments such as Atari, it is often much more difficult to discover representations that allow us to escape the low-information local minima. Using architectures with good inductive biases can help to combat this, as shown in Section \ref{sec:architectures}. This issue also motivates the use of auxiliary losses (such as value approximation losses or reconstruction losses), which may help guide the optimizer towards good solutions; see Appendix \ref{sec:appendix-atari-rep}.


\subsection{Atari 2600 Experiments}

\begin{figure*}[t]
    \centering
    \includegraphics[keepaspectratio, width=1.\textwidth]{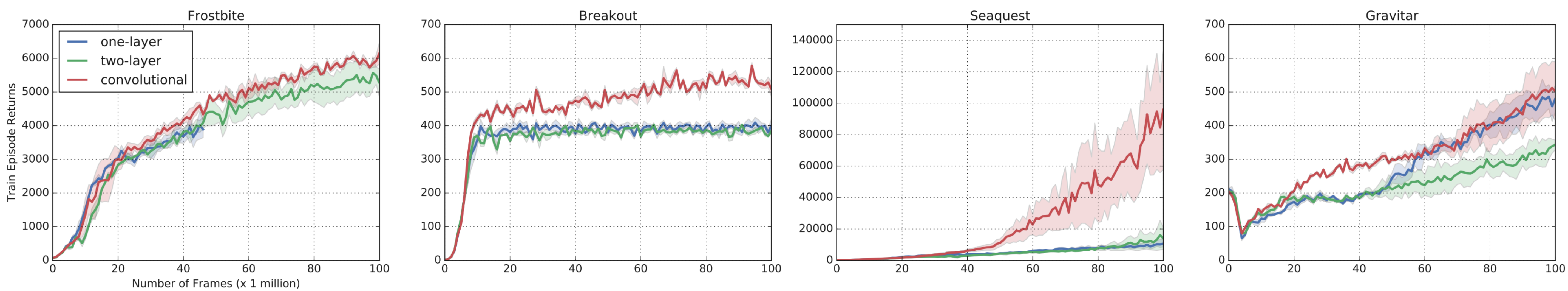}
    \caption{Performance of C51 with model-based auxiliary objectives. Three types of transition models are used for predicting next latent states: a single convolutional layer (convolutional), a single fully-connected layer (one-layer), and a two-layer fully-connected network (two-layer).}
    \label{fig:architectures}
\end{figure*}

\begin{figure*}[t]
    \centering
    \includegraphics[keepaspectratio, width=1.\textwidth]{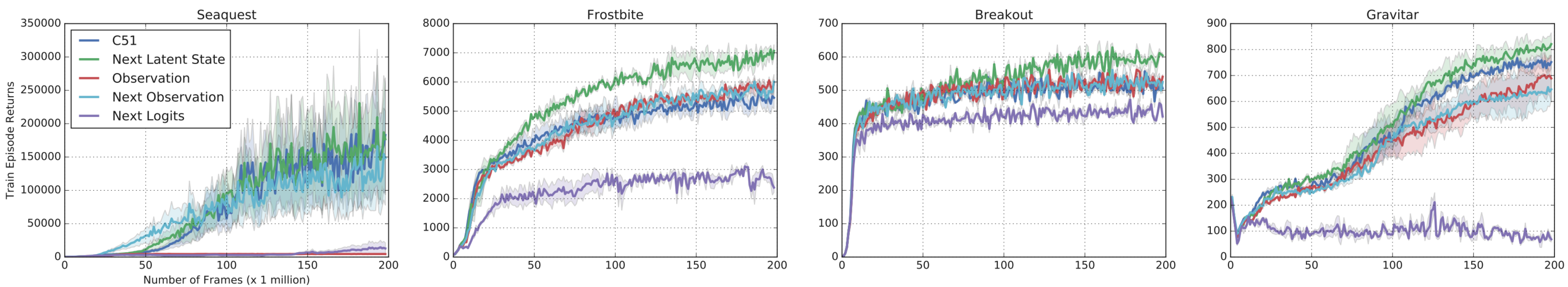}
    \caption{Using various auxiliary tasks in the Arcade Learning Environment. We compare predicting the next state's representation (Next Latent State, recommended by theoretical bounds on DeepMDPs) with reconstructing the current observation (Observation), predicting the next observation (Next Observation), and predicting the next C51 logits (Next Logits). Training curves for a baseline C51 agent are also shown.}
    \label{fig:related_work}
\end{figure*}


In this section, we demonstrate practical benefits of approximately learning a DeepMDP in the Arcade Learning Environment \cite{bellemare13arcade}. Our results on \textit{representation-similarity} indicate that learning a DeepMDP is a principled method for learning a high-quality representation. Therefore, we minimize DeepMDP losses as an auxiliary task alongside model-free reinforcement learning, learning a single representation which is shared between both tasks. Our implementations of the proposed algorithms are based on Dopamine \cite{bellemaredopamine}.


We adopt the Distributional Q-learning approach to model-free RL; specifically, we use as a baseline the C51 agent \cite{bellemare17distributional}, which estimates probability masses on a discrete support and minimizes the KL divergence between the estimated distribution and a target distribution.
C51 encodes the input frames using a convolutional neural network $\phi : \sspace \rightarrow \dsspace$, outputting a dense vector representation $\widebar{s} = \phi(s)$. The C51 Q-function is a feed-forward neural network which maps $\widebar{s}$ to an estimate of the reward distribution's logits.

To incorporate learning a DeepMDP as an auxiliary learning objective, we define a deep reward function and deep transition function. These are each implemented as a feed-forward neural network, which uses $\widebar{s}$ to estimate the immediate reward and the next-state representation, respectively. The overall objective function is a simple linear combination of the standard C51 loss and the Wasserstein distance-based approximations to the local DeepMDP loss given by Equations \ref{eqn:LocalRLoss} and \ref{eqn:LocalPLoss}. For experimental details, see Appendix \ref{appendix:atari}.

By optimizing $\phi$ to jointly minimize both C51 and DeepMDP losses, we hope to learn meaningful $\widebar{s}$ that form the basis for learning good value functions. In the following subsections, we aim to answer the following questions: (1) What deep transition model architecture is conducive to learning a DeepMDP on Atari? (2) How does the learning of a DeepMDP affect the overall performance of C51 on Atari 2600 games? (2) How do the DeepMDP objectives compare with similar representation-learning approaches?

\subsection{Transition Model Architecture}
\label{sec:architectures}
We compare the performance achieved by using different architectures for the DeepMDP transition model (see Figure \ref{fig:architectures}). We experiment with a single fully-connected layer, two fully-connected layers, and a single convolutional layer (see Appendix \ref{appendix:atari} for more details). We find that using a convolutional transition model leads to the best DeepMDP performance, and we use this transition model architecture for the rest of the experiments in this paper. 
Note how the performance of the agent is highly dependent on the architecture. We hypothesize that the inductive bias provided via the model has a large effect on the learned DeepMDPs.
Further exploring model architectures which provide inductive biases is a promising avenue to develop better auxiliary tasks. Particularly, we believe that exploring attention \citep{Vaswani2017AttentionIA, bahdanau2014neural} and relational inductive biases \citep{Watters2017VisualIN, Battaglia2016InteractionNF} could be useful in visual domains like Atari2600.

\subsection{DeepMDPs as an Auxiliary Task}\label{ssec:c51vsdmdp}
We show that when using the best performing DeepMDP architecture described in Appendix $\ref{ssec:architectures}$, we obtain nearly consistent performance improvements over C51 on the suite of 60 Atari 2600 games  (see Figure \ref{fig:performance_improvement}). 
 

\subsection{Comparison to Alternative Objectives}\label{ssec:mbrlcompare}
We empirically compare the effect of the DeepMDP auxilliary objectives on the performance of a C51 agent to a variety of alternatives. 
In the experiments in this section, we replace the deep transition loss suggested by the DeepMDP bounds with each of the following: 

(1) \textit{Observation Reconstruction}: We train a state decoder to reconstruct observations $s \in \sspace$ from $\widebar{s}$. This framework is similar to \cite{ha2018recurrent}, who learn a latent space representation of the environment with an auto-encoder, and use it to train an RL agent.

(2) \textit{Next Observation Prediction}: We train a transition model to predict next observations $s' \sim \prob(\cdot |s,a)$ from the current state representation $\widebar{s}$. This framework is similar to model-based RL algorithms which predict future observations \cite{tengyuma2018mbrl}.

(3) \textit{Next Logits Prediction}: We train a transition model to predict next-state representations such that the Q-function correctly predicts the logits of $(s', a')$, where $a'$ is the action associated with the max Q-value of $s'$. This can be understood as a distributional analogue of the Value Prediction Network, VPN, \cite{oh2017value}. Note that this auxiliary loss is used to update only the parameters of the representation encoder and the transition model, not the Q-function.

Our experiments demonstrate that the deep transition loss suggested by the DeepMDP bounds (i.e. predicting the next state's representation) outperforms all three ablations (see Figure \ref{fig:related_work}). Accurately modeling Atari 2600 frames, whether through observation reconstruction or next observation prediction, forces the representation to encode irrelevant information with respect to the underlying task. VPN-style losses have been shown to be helpful when using the learned predictive model for planning \citep{oh2017value}; however, we find that with a distributional RL agent, using this as an auxiliary task tends to hurt performance.

\section{Discussion on Model-Based RL}
We have focused on the implications of DeepMDPs for representation learning, but our results also provide a principled basis for model-based RL -- in latent space or otherwise.
Although DeepMDPs are latent space models, by letting $\phi$ be the identity function, all the provided results immediately apply to the standard model-based RL setting, where the model predicts states instead of latent states.
In fact, our results serve as a theoretical justification for common practices already found in the model-based deep RL literature. For example, \citet{chua2018deep,doerr2018probabilistic,hafner2018learning,buesing2018learning,feinberg2018model,Buckman2018SampleEfficientRL} train models to predict a reward and a distribution over next states, minimizing the negative log-probability of the true next state. The negative log-probability of the next state can be viewed as a one-sample estimate of the KL between the model's state distribution and the next state distribution. 
Due to Pinsker's inequality (which bounds the TV with the KL), and the suitability of TV as a metric (Section~\ref{sec:other_metrics}), this procedure can be interpreted as training a DeepMDP. Thus, the learned model will obey our local value difference bounds (Lemma~\ref{lem:localValueDiffBoundGeneral}) and suboptimality bounds (Theorem~\ref{thm:suboptGeneral}), which provide theoretical guarantees for the model.


Further, the suitability of Norm-MMD metrics for learning models presents a promising new research avenue for model-based RL: to break away from the KL and explore the vast family of Norm Maximum Mean Discrepancy metrics.

\section{Conclusions}
We introduce the concept of a DeepMDP: a parameterized latent space model trained via the minimization of tractable losses.
Theoretical analysis provides guarantees on the quality of the value functions of the learned model when the latent transition loss is any member of the large family of Norm Maximum Mean Discrepancy metrics. When the Wasserstein metric is used, a novel connection to bisimulation metrics guarantees the set of parametrizable policies is highly expressive. Further, it's guaranteed that two states with different values for any of those policies will never be collapsed under the representation.
Together, these findings suggest that learning a DeepMDP with the Wasserstein metric is a theoretically sound approach to representation learning.
Our results are corroborated by strong performance on large-scale Atari 2600 experiments, demonstrating that minimizing the DeepMDP losses can be a beneficial auxiliary task in model-free RL.

Using the transition and reward models of the DeepMDP for model-based RL (e.g. planning, exploration) is a promising future research direction. Additionally, extending DeepMDPs to accommodate different action spaces or time scales from the original MDPs could be a promising path towards learning hierarchical models of the environment.

\section*{Acknowledgements}
The authors would like to thank Philip Amortila and Robert Dadashi for invaluable feedback on the theoretical results; Pablo Samuel Castro, Doina Precup, Nicolas Le Roux, Sasha Vezhnevets, Simon Osindero, Arthur Gretton, Adrien Ali Taiga, Fabian Pedregosa and Shane Gu for useful discussions and feedback.

\section*{Changes From ICML 2019 Proceedings}
This document represents an updated version of our work relative to the version published in ICML 2019. The major addition was the inclusion of the generalization to Norm-MMD metrics and associated math in Section~\ref{sec:other_metrics}. Lemma~\ref{lem:LipschitzValue} also underwent minor changes to its statements and proofs.
Additionally, some sections were partially rewritten, especially the discussion on bisimulation (Section~\ref{sec:bisimulation}), which was significantly expanded.

\bibliography{references}

\begin{thebibliography}{61}
\providecommand{\natexlab}[1]{#1}
\providecommand{\url}[1]{\texttt{#1}}
\expandafter\ifx\csname urlstyle\endcsname\relax
  \providecommand{\doi}[1]{doi: #1}\else
  \providecommand{\doi}{doi: \begingroup \urlstyle{rm}\Url}\fi

\bibitem[Abel et~al.(2017)Abel, Hershkowitz, and Littman]{abel2017near}
Abel, D., Hershkowitz, D.~E., and Littman, M.~L.
\newblock Near optimal behavior via approximate state abstraction.
\newblock \emph{arXiv preprint arXiv:1701.04113}, 2017.

\bibitem[Arjovsky et~al.(2017)Arjovsky, Chintala, and
  Bottou]{Arjovsky2017WassersteinGA}
Arjovsky, M., Chintala, S., and Bottou, L.
\newblock Wasserstein generative adversarial networks.
\newblock In \emph{ICML}, 2017.

\bibitem[Asadi et~al.(2018)Asadi, Misra, and Littman]{asadi2018lipschitz}
Asadi, K., Misra, D., and Littman, M.~L.
\newblock Lipschitz continuity in model-based reinforcement learning.
\newblock \emph{arXiv preprint arXiv:1804.07193}, 2018.

\bibitem[Bahdanau et~al.(2014)Bahdanau, Cho, and Bengio]{bahdanau2014neural}
Bahdanau, D., Cho, K., and Bengio, Y.
\newblock Neural machine translation by jointly learning to align and
  translate.
\newblock \emph{arXiv preprint arXiv:1409.0473}, 2014.

\bibitem[Battaglia et~al.(2016)Battaglia, Pascanu, Lai, Rezende, and
  Kavukcuoglu]{Battaglia2016InteractionNF}
Battaglia, P.~W., Pascanu, R., Lai, M., Rezende, D.~J., and Kavukcuoglu, K.
\newblock Interaction networks for learning about objects, relations and
  physics.
\newblock In \emph{NIPS}, 2016.

\bibitem[Bellemare et~al.(2013)Bellemare, Naddaf, Veness, and
  Bowling]{bellemare13arcade}
Bellemare, M.~G., Naddaf, Y., Veness, J., and Bowling, M.
\newblock The {A}rcade {L}earning {E}nvironment: An evaluation platform for
  general agents.
\newblock \emph{Journal of Artificial Intelligence Research}, 47:\penalty0
  253--279, June 2013.

\bibitem[Bellemare et~al.(2017{\natexlab{a}})Bellemare, Dabney, and
  Munos]{bellemare17distributional}
Bellemare, M.~G., Dabney, W., and Munos, R.
\newblock A distributional perspective on reinforcement learning.
\newblock In \emph{Proceedings of the International Conference on Machine
  Learning}, 2017{\natexlab{a}}.

\bibitem[Bellemare et~al.(2017{\natexlab{b}})Bellemare, Danihelka, Dabney,
  Mohamed, Lakshminarayanan, Hoyer, and Munos]{bellemare2017cramer}
Bellemare, M.~G., Danihelka, I., Dabney, W., Mohamed, S., Lakshminarayanan, B.,
  Hoyer, S., and Munos, R.
\newblock The cramer distance as a solution to biased wasserstein gradients.
\newblock \emph{arXiv preprint arXiv:1705.10743}, 2017{\natexlab{b}}.

\bibitem[Bellemare et~al.(2019)Bellemare, Dabney, Dadashi, Taiga, Castro, Roux,
  Schuurmans, Lattimore, and Lyle]{Bellemare2019AGP}
Bellemare, M.~G., Dabney, W., Dadashi, R., Taiga, A.~A., Castro, P.~S., Roux,
  N.~L., Schuurmans, D., Lattimore, T., and Lyle, C.
\newblock A geometric perspective on optimal representations for reinforcement
  learning.
\newblock \emph{CoRR}, abs/1901.11530, 2019.

\bibitem[Bińkowski et~al.(2018)Bińkowski, Sutherland, Arbel, and
  Gretton]{bińkowski2018demystifying}
Bińkowski, M., Sutherland, D.~J., Arbel, M., and Gretton, A.
\newblock Demystifying {MMD} {GAN}s.
\newblock In \emph{International Conference on Learning Representations}, 2018.
\newblock URL \url{https://openreview.net/forum?id=r1lUOzWCW}.

\bibitem[Borwein \& Lewis(2005)Borwein and Lewis]{convanal}
Borwein, J. and Lewis, A.~S.
\newblock \emph{Convex Analysis and Nonlinear Optimization}.
\newblock Springer, 2005.

\bibitem[Buckman et~al.(2018)Buckman, Hafner, Tucker, Brevdo, and
  Lee]{Buckman2018SampleEfficientRL}
Buckman, J., Hafner, D., Tucker, G., Brevdo, E., and Lee, H.
\newblock Sample-efficient reinforcement learning with stochastic ensemble
  value expansion.
\newblock In \emph{NeurIPS}, 2018.

\bibitem[Buesing et~al.(2018)Buesing, Weber, Racaniere, Eslami, Rezende,
  Reichert, Viola, Besse, Gregor, Hassabis, et~al.]{buesing2018learning}
Buesing, L., Weber, T., Racaniere, S., Eslami, S., Rezende, D., Reichert,
  D.~P., Viola, F., Besse, F., Gregor, K., Hassabis, D., et~al.
\newblock Learning and querying fast generative models for reinforcement
  learning.
\newblock \emph{arXiv preprint arXiv:1802.03006}, 2018.

\bibitem[Castro \& Precup(2010)Castro and Precup]{castro2010bisimulation}
Castro, P. and Precup, D.
\newblock Using bisimulation for policy transfer in mdps.
\newblock \emph{Proceeedings of the 9th International Conference on Autonomous
  Agents and Multiagent Systems (AAMAS-2010)}, 2010.

\bibitem[Castro et~al.(2018)Castro, Moitra, Gelada, Kumar, and
  Bellemare]{bellemaredopamine}
Castro, P.~S., Moitra, S., Gelada, C., Kumar, S., and Bellemare, M.~G.
\newblock Dopamine: {A} research framework for deep reinforcement learning.
\newblock \emph{arXiv}, 2018.

\bibitem[Chua et~al.(2018)Chua, Calandra, McAllister, and Levine]{chua2018deep}
Chua, K., Calandra, R., McAllister, R., and Levine, S.
\newblock Deep reinforcement learning in a handful of trials using
  probabilistic dynamics models.
\newblock In \emph{Advances in Neural Information Processing Systems}, pp.\
  4754--4765, 2018.

\bibitem[Chung et~al.(2019)Chung, Nath, Joseph, and White]{chung19twotimescale}
Chung, W., Nath, S., Joseph, A.~G., and White, M.
\newblock Two-timescale networks for nonlinear value function approximation.
\newblock In \emph{International Conference on Learning Representations}, 2019.

\bibitem[Comanici \& Precup(2011)Comanici and Precup]{Comanici2011BasisFD}
Comanici, G. and Precup, D.
\newblock Basis function discovery using spectral clustering and bisimulation
  metrics.
\newblock In \emph{AAMAS}, 2011.

\bibitem[Dabney et~al.(2018{\natexlab{a}})Dabney, Ostrovski, Silver, and
  Munos]{Dabney2018ImplicitQN}
Dabney, W., Ostrovski, G., Silver, D., and Munos, R.
\newblock Implicit quantile networks for distributional reinforcement learning.
\newblock In \emph{ICML}, 2018{\natexlab{a}}.

\bibitem[Dabney et~al.(2018{\natexlab{b}})Dabney, Rowland, Bellemare, and
  Munos]{Dabney2018DistributionalRL}
Dabney, W., Rowland, M., Bellemare, M.~G., and Munos, R.
\newblock Distributional reinforcement learning with quantile regression.
\newblock In \emph{AAAI}, 2018{\natexlab{b}}.

\bibitem[Dadashi et~al.(2019)Dadashi, Taiga, Roux, Schuurmans, and
  Bellemare]{Dadashi2019TheVF}
Dadashi, R., Taiga, A.~A., Roux, N.~L., Schuurmans, D., and Bellemare, M.~G.
\newblock The value function polytope in reinforcement learning.
\newblock \emph{CoRR}, abs/1901.11524, 2019.

\bibitem[Doerr et~al.(2018)Doerr, Daniel, Schiegg, Nguyen-Tuong, Schaal,
  Toussaint, and Trimpe]{doerr2018probabilistic}
Doerr, A., Daniel, C., Schiegg, M., Nguyen-Tuong, D., Schaal, S., Toussaint,
  M., and Trimpe, S.
\newblock Probabilistic recurrent state-space models.
\newblock \emph{arXiv preprint arXiv:1801.10395}, 2018.

\bibitem[Fedus et~al.(2019)Fedus, Gelada, Bengio, Bellemare, and
  Larochelle]{Fedus2019HyperbolicDA}
Fedus, W., Gelada, C., Bengio, Y., Bellemare, M.~G., and Larochelle, H.
\newblock Hyperbolic discounting and learning over multiple horizons.
\newblock \emph{ArXiv}, abs/1902.06865, 2019.

\bibitem[Feinberg et~al.(2018)Feinberg, Wan, Stoica, Jordan, Gonzalez, and
  Levine]{feinberg2018model}
Feinberg, V., Wan, A., Stoica, I., Jordan, M.~I., Gonzalez, J.~E., and Levine,
  S.
\newblock Model-based value estimation for efficient model-free reinforcement
  learning.
\newblock \emph{arXiv preprint arXiv:1803.00101}, 2018.

\bibitem[Ferns et~al.(2004)Ferns, Panangaden, and
  Precup]{ferns2004bisimulation}
Ferns, N., Panangaden, P., and Precup, D.
\newblock Metrics for finite markov decision processes.
\newblock \emph{Proceedings of the 20th Conference on Uncertainty in Artificial
  Intelligence}, UAI'04:\penalty0 162--169, 2004.

\bibitem[Ferns et~al.(2011)Ferns, Panangaden, and
  Precup]{ferns2011bisimulation}
Ferns, N., Panangaden, P., and Precup, D.
\newblock Bisimulation metrics for continuous markov decision processes.
\newblock \emph{SIAM Journal on Computing}, 40\penalty0 (6):\penalty0
  1662--1714, 2011.

\bibitem[Francois-Lavet et~al.(2018)Francois-Lavet, Bengio, Precup, and
  Pineau]{lavet2018crar}
Francois-Lavet, V., Bengio, Y., Precup, D., and Pineau, J.
\newblock Combined reinforcement learning via abstract representations.
\newblock \emph{arXiv preprint arXiv:1809.04506}, 2018.

\bibitem[Gelada \& Bellemare(2019)Gelada and Bellemare]{Gelada2019OffPolicyDR}
Gelada, C. and Bellemare, M.~G.
\newblock Off-policy deep reinforcement learning by bootstrapping the covariate
  shift.
\newblock \emph{CoRR}, abs/1901.09455, 2019.

\bibitem[Givan et~al.(2003)Givan, Dean, and Greig]{givan2003equivalence}
Givan, R., Dean, T., and Greig, M.
\newblock Equivalence notions and model minimization in markov decision
  processes.
\newblock \emph{Artificial Intelligence}, 147\penalty0 (1-2):\penalty0
  163--223, 2003.

\bibitem[Gouk et~al.(2018)Gouk, Frank, Pfahringer, and
  Cree]{Gouk2018RegularisationON}
Gouk, H., Frank, E., Pfahringer, B., and Cree, M.~J.
\newblock Regularisation of neural networks by enforcing lipschitz continuity.
\newblock \emph{CoRR}, abs/1804.04368, 2018.

\bibitem[Gretton et~al.(2012)Gretton, Borgwardt, Rasch, Sch{\"o}lkopf, and
  Smola]{Gretton2012AKT}
Gretton, A., Borgwardt, K.~M., Rasch, M.~J., Sch{\"o}lkopf, B., and Smola,
  A.~J.
\newblock A kernel two-sample test.
\newblock \emph{Journal of Machine Learning Research}, 13:\penalty0 723--773,
  2012.

\bibitem[Gulrajani et~al.(2017{\natexlab{a}})Gulrajani, Ahmed, Arjovsky,
  Dumoulin, and Courville]{Gulrajani2017ImprovedTO}
Gulrajani, I., Ahmed, F., Arjovsky, M., Dumoulin, V., and Courville, A.~C.
\newblock Improved training of wasserstein gans.
\newblock In \emph{NIPS}, 2017{\natexlab{a}}.

\bibitem[Gulrajani et~al.(2017{\natexlab{b}})Gulrajani, Ahmed, Arjovsky,
  Dumoulin, and Courville]{gulrajani2017improved}
Gulrajani, I., Ahmed, F., Arjovsky, M., Dumoulin, V., and Courville, A.~C.
\newblock Improved training of wasserstein gans.
\newblock In \emph{Advances in Neural Information Processing Systems}, pp.\
  5767--5777, 2017{\natexlab{b}}.

\bibitem[Ha \& Schmidhuber(2018)Ha and Schmidhuber]{ha2018recurrent}
Ha, D. and Schmidhuber, J.
\newblock Recurrent world models facilitate policy evolution.
\newblock In \emph{Advances in Neural Information Processing Systems}, pp.\
  2455--2467, 2018.

\bibitem[Hafner et~al.(2018)Hafner, Lillicrap, Fischer, Villegas, Ha, Lee, and
  Davidson]{hafner2018learning}
Hafner, D., Lillicrap, T., Fischer, I., Villegas, R., Ha, D., Lee, H., and
  Davidson, J.
\newblock Learning latent dynamics for planning from pixels.
\newblock \emph{arXiv preprint arXiv:1811.04551}, 2018.

\bibitem[Hinderer(2005)]{Hinderer2005LipschitzCO}
Hinderer, K.
\newblock Lipschitz continuity of value functions in markovian decision
  processes.
\newblock \emph{Math. Meth. of OR}, 62:\penalty0 3--22, 2005.

\bibitem[Jaderberg et~al.(2016)Jaderberg, Mnih, Czarnecki, Schaul, Leibo,
  Silver, and Kavukcuoglu]{jaderberg2016reinforcement}
Jaderberg, M., Mnih, V., Czarnecki, W.~M., Schaul, T., Leibo, J.~Z., Silver,
  D., and Kavukcuoglu, K.
\newblock Reinforcement learning with unsupervised auxiliary tasks.
\newblock \emph{arXiv preprint arXiv:1611.05397}, 2016.

\bibitem[Jiang et~al.(2015)Jiang, Kulesza, and Singh]{jiang2015abstraction}
Jiang, N., Kulesza, A., and Singh, S.
\newblock Abstraction selection in model-based reinforcement learning.
\newblock In \emph{International Conference on Machine Learning}, pp.\
  179--188, 2015.

\bibitem[Kaiser et~al.(2019)Kaiser, Babaeizadeh, Milos, Osinski, Campbell,
  Czechowski, Erhan, Finn, Kozakowski, Levine, Sepassi, Tucker, and
  Michalewski]{Kaiser2019ModelBasedRL}
Kaiser, L., Babaeizadeh, M., Milos, P., Osinski, B., Campbell, R.~H.,
  Czechowski, K., Erhan, D., Finn, C., Kozakowski, P., Levine, S., Sepassi, R.,
  Tucker, G., and Michalewski, H.
\newblock Model-based reinforcement learning for atari.
\newblock \emph{CoRR}, abs/1903.00374, 2019.

\bibitem[Li et~al.(2006)Li, Walsh, and Littman]{li2006towards}
Li, L., Walsh, T.~J., and Littman, M.~L.
\newblock Towards a unified theory of state abstraction for mdps.
\newblock In \emph{ISAIM}, 2006.

\bibitem[Lyle et~al.(2019)Lyle, Castro, and Bellemare]{Lyle2019ACA}
Lyle, C., Castro, P.~S., and Bellemare, M.~G.
\newblock A comparative analysis of expected and distributional reinforcement
  learning.
\newblock \emph{CoRR}, abs/1901.11084, 2019.

\bibitem[Mahadevan \& Maggioni(2007)Mahadevan and
  Maggioni]{Mahadevan2007ProtovalueFA}
Mahadevan, S. and Maggioni, M.
\newblock Proto-value functions: A laplacian framework for learning
  representation and control in markov decision processes.
\newblock \emph{Journal of Machine Learning Research}, 8:\penalty0 2169--2231,
  2007.

\bibitem[Mirowski et~al.(2017)Mirowski, Pascanu, Viola, Soyer, Ballard, Banino,
  Denil, Goroshin, Sifre, Kavukcuoglu, Kumaran, and
  Hadsell]{Mirowski2017LearningTN}
Mirowski, P.~W., Pascanu, R., Viola, F., Soyer, H., Ballard, A.~J., Banino, A.,
  Denil, M., Goroshin, R., Sifre, L., Kavukcuoglu, K., Kumaran, D., and
  Hadsell, R.
\newblock Learning to navigate in complex environments.
\newblock \emph{CoRR}, abs/1611.03673, 2017.

\bibitem[Mnih et~al.(2015)Mnih, Kavukcuoglu, Silver, Rusu, Veness, Bellemare,
  Graves, Riedmiller, Fidjeland, Ostrovski, Petersen, Beattie, Sadik,
  Antonoglou, King, Kumaran, Wierstra, Legg, and
  Hassabis]{Mnih2015HumanlevelCT}
Mnih, V., Kavukcuoglu, K., Silver, D., Rusu, A.~A., Veness, J., Bellemare,
  M.~G., Graves, A., Riedmiller, M.~A., Fidjeland, A., Ostrovski, G., Petersen,
  S., Beattie, C., Sadik, A., Antonoglou, I., King, H., Kumaran, D., Wierstra,
  D., Legg, S., and Hassabis, D.
\newblock Human-level control through deep reinforcement learning.
\newblock \emph{Nature}, 518:\penalty0 529--533, 2015.

\bibitem[Mueller(1997)]{Mueller1997IntegralPM}
Mueller, A.
\newblock Integral probability metrics and their generating classes of
  functions.
\newblock 1997.

\bibitem[M{\"u}ller(1997)]{muller1997integral}
M{\"u}ller, A.
\newblock Integral probability metrics and their generating classes of
  functions.
\newblock \emph{Advances in Applied Probability}, 29\penalty0 (2):\penalty0
  429--443, 1997.

\bibitem[Oh et~al.(2017)Oh, Singh, and Lee]{oh2017value}
Oh, J., Singh, S., and Lee, H.
\newblock Value prediction network.
\newblock In \emph{Advances in Neural Information Processing Systems}, pp.\
  6118--6128, 2017.

\bibitem[Parr et~al.(2008)Parr, Li, Taylor, Painter-Wakefield, and
  Littman]{Parr2008AnAO}
Parr, R., Li, L., Taylor, G., Painter-Wakefield, C., and Littman, M.~L.
\newblock An analysis of linear models, linear value-function approximation,
  and feature selection for reinforcement learning.
\newblock In \emph{ICML}, 2008.

\bibitem[Pirotta et~al.(2015)Pirotta, Restelli, and
  Bascetta]{pirotta2015policy}
Pirotta, M., Restelli, M., and Bascetta, L.
\newblock Policy gradient in lipschitz markov decision processes.
\newblock \emph{Machine Learning}, 100\penalty0 (2-3):\penalty0 255--283, 2015.

\bibitem[Puterman(1994)]{puterman1994markov}
Puterman, M.~L.
\newblock Markov decision processes: Discrete stochastic dynamic programming.
\newblock 1994.

\bibitem[Ruan et~al.(2015)Ruan, Comanici, Panangaden, and
  Precup]{Ruan2015RepresentationDF}
Ruan, S.~S., Comanici, G., Panangaden, P., and Precup, D.
\newblock Representation discovery for mdps using bisimulation metrics.
\newblock In \emph{AAAI}, 2015.

\bibitem[Sejdinovic et~al.(2013)Sejdinovic, Sriperumbudur, Gretton, and
  Fukumizu]{Sejdinovic2013EquivalenceOD}
Sejdinovic, D., Sriperumbudur, B.~K., Gretton, A., and Fukumizu, K.
\newblock Equivalence of distance-based and rkhs-based statistics in hypothesis
  testing.
\newblock \emph{CoRR}, abs/1207.6076, 2013.

\bibitem[Silver et~al.(2017)Silver, van Hasselt, Hessel, Schaul, Guez, Harley,
  Dulac-Arnold, Reichert, Rabinowitz, Barreto, and Degris]{Silver2017ThePE}
Silver, D., van Hasselt, H.~P., Hessel, M., Schaul, T., Guez, A., Harley, T.,
  Dulac-Arnold, G., Reichert, D.~P., Rabinowitz, N.~C., Barreto, A., and
  Degris, T.
\newblock The predictron: End-to-end learning and planning.
\newblock In \emph{ICML}, 2017.

\bibitem[Singh et~al.(1995)Singh, Jaakkola, and Jordan]{singh1995reinforcement}
Singh, S.~P., Jaakkola, T., and Jordan, M.~I.
\newblock Reinforcement learning with soft state aggregation.
\newblock In \emph{Advances in neural information processing systems}, pp.\
  361--368, 1995.

\bibitem[Sz{\'e}kely \& Rizzo(2004)Sz{\'e}kely and Rizzo]{Szkely2004TESTINGFE}
Sz{\'e}kely, G.~J. and Rizzo, M.~L.
\newblock Testing for equal distributions in high dimension.
\newblock 2004.

\bibitem[van~den Oord et~al.(2018)van~den Oord, Li, and
  Vinyals]{Oord2018RepresentationLW}
van~den Oord, A., Li, Y., and Vinyals, O.
\newblock Representation learning with contrastive predictive coding.
\newblock \emph{CoRR}, abs/1807.03748, 2018.

\bibitem[Vaswani et~al.(2017)Vaswani, Shazeer, Parmar, Uszkoreit, Jones, Gomez,
  Kaiser, and Polosukhin]{Vaswani2017AttentionIA}
Vaswani, A., Shazeer, N., Parmar, N., Uszkoreit, J., Jones, L., Gomez, A.~N.,
  Kaiser, L., and Polosukhin, I.
\newblock Attention is all you need.
\newblock In \emph{NIPS}, 2017.

\bibitem[Villani(2008)]{Villani2008OptimalT}
Villani, C.
\newblock \emph{Optimal Transport: Old and New}.
\newblock Springer Science \& Business Media, 2008, 2008.

\bibitem[Watters et~al.(2017)Watters, Tacchetti, Weber, Pascanu, Battaglia, and
  Zoran]{Watters2017VisualIN}
Watters, N., Tacchetti, A., Weber, T., Pascanu, R., Battaglia, P.~W., and
  Zoran, D.
\newblock Visual interaction networks.
\newblock \emph{CoRR}, abs/1706.01433, 2017.

\bibitem[Xu et~al.(2018)Xu, Li, Tian, Darrell, and Ma]{tengyuma2018mbrl}
Xu, H., Li, Y., Tian, Y., Darrell, T., and Ma, T.
\newblock Algorithmic framework for model-based reinforcement learning with
  theoretical guarantees.
\newblock \emph{arXiv preprint arXiv:1807.03858}, 2018.

\bibitem[Zhang et~al.(2018)Zhang, Vikram, Smith, Abbeel, Johnson, and
  Levine]{zhang2018solar}
Zhang, M., Vikram, S., Smith, L., Abbeel, P., Johnson, M.~J., and Levine, S.
\newblock Solar: Deep structured latent representations for model-based
  reinforcement learning.
\newblock \emph{arXiv preprint arXiv:1808.09105}, 2018.

\end{thebibliography}
\bibliographystyle{icml2019}

\clearpage
{
\appendix
\onecolumn
{\Large \bf Appendix}
\pdfoutput=1
\section{Proofs}\label{sec:Proofs}
\subsection{Lipschitz MDP}
\LipschitzValue*
\begin{proof}
Start by proving \textbf{1}. By induction we will show that a sequence of Q values $\dQ_n$ converging to $\dQstar$ are all Lipschitz, and that as $n\to\infty$, their Lipschitz norm goes to $\frac{\Kdrew}{1-\gamma{\Kdprob}}$. Let ${\dQ}_0({\ds},a) = 0, \forall{\ds}\in{\dsspace}, a\in{\aspace}$ be the base case. Define $\dQ_{n+1}({\ds},a) = {\drew}({\ds},a) + \gamma \expect_{ \ds' \sim {\dprob}(\cdot | {\ds}, a)}\left[\ \max_{a'} {\dQ}_n({\ds}', a') \right]$. It is a well known result that the sequence ${\dQ}_n$ converges to $\dQstar$. Now let ${\KdQ}_{,n} = \sup_{a\in{\aspace}, {\ds}_1 \neq {\ds}_2\in{\dsspace}} \frac{|{\dQ}_n({\ds}_1,a) -  {\dQ}_{n}({\ds}_2, a)|}{{\distds}({\ds}_1,{\ds}_2)}$ be the Lipschitz norm of ${\dQ}_n$. Clearly ${\KdQ}_{,0}=0$.  Then,
\begin{align*}
{\KdQ}_{,n+1} &= \sup_{a\in{\aspace}, {\ds}_1 \neq {\ds}_2\in{\dsspace}} \frac{|{\dQ}_{n+1}({\ds}_1,a) -  {\dQ}_{n+1}({\ds}_2, a)|}{{\distds}({\ds}_1,{\ds}_2)} \\
&\le \sup_{a\in{\aspace}, {\ds}_1 \neq {\ds}_2\in{\dsspace}} \frac{|{\drew}({\ds}_1,a) -  \drew({\ds}_2, a)|}{{\distds}({\ds}_1,{\ds}_2)}
+ \gamma \sup_{a\in{\aspace}, {\ds}_1 \neq {\ds}_2\in{\dsspace}}  \frac{|  \expect_{ {\ds}_1' \sim {\dprob}({\cdot} | {\ds}_1, a) }  {\dQ}_{n}({\ds}_1',a) - \expect_{ {\ds}_2' \sim {\dprob}({\cdot} | {\ds}_2, a) }  {\dQ}_{n}({\ds}_2', a)|}{{\distds}({\ds}_1,{\ds}_2)} \\
&= {\Kdrew} + \gamma \sup_{a\in{\aspace}, {\ds}_1 \neq {\ds}_2\in{\dsspace}} \frac{|   \expect_{ {\ds}_1' \sim {\dprob}({\cdot} | {\ds}_1, a) } {\dQ}_{n}({\ds}_1',a) -  \expect_{ {\ds}_2' \sim {\dprob}({\cdot} | {\ds}_2, a) }  {\dQ}_{n}({\ds}_2', a)|}{{\distds}({\ds}_1,{\ds}_2)} \\
&\le {\Kdrew} + \gamma {\KdQ}_{,n} \sup_{a\in{\aspace}, {\ds}_1 \neq {\ds}_2\in{\dsspace}}   \frac{\wass{{\dprob}(\cdot | \ds_2, a)}{{\dprob}(\cdot | {{\ds}}_2, a)}}{{{\distds}({\ds}_1,{\ds}_2)}}, \text{  (Using the fact that ${\dQ}_n$ is ${\KdQ}_{,n}$-Lipschitz by induction) } \\
&\le {\Kdrew} + \gamma {\KdQ}_{,n} {\Kdprob} \\
&\le \sum_{i=0}^{n-1}  (\gamma {\Kdprob})^i {\Kdrew} + (\gamma {\Kdprob})^n {\KdQ}_{,0} = \sum_{i=0}^{n-1}  (\gamma {\Kdprob})^i {\Kdrew},  \text{ (by expanding the recursion)}
\end{align*}
Thus, as $n\to\infty$, $K_{\dQstar}\le\frac{\Kdrew}{1-\gamma{\Kdprob}}$.

To prove \textbf{2} a similar argument can be used. The sequence ${\dV}_{n+1}(\ds,a) = {\drew}_{\dpi}(\ds) + \gamma {\expect}_{ \ds' \sim \dprob(\cdot | \ds, a)}\left[ {\dV}_n(\ds') \right]$ converges to $\dQdpi$ and the sequence of Lipschitz norms converge to $\frac{{\Kdrew}_{\dpi}}{1-\gamma \Kdprob}$. From there it's trivial to show that $\dQdpi$ is also Lipschitz.

Finally, we prove \textbf{3}. Note that the transition function of a constant policy $\pi(a)$ has the following property:
\begin{align*}
\wass{\dprob_{\dpi}(\cdot | \ds_2)}{\dprob_{\dpi}(\cdot | \ds_2)} &= 
\sup_{f \in  \F} \left| \int (\dprob_{\dpi}(\ds' | \ds_2) - \dprob_{\dpi}(\ds' | \ds_2)) f(\ds') \deriv \ds' \right| \\
&\le \sup_{f \in  \F} \left| \int \sum_a \pi(a)( \dprob(\ds' | \ds_2,a) - \dprob(\ds' | \ds_2,a)) f(\ds') \deriv \ds' \right|\\
&\le  \sum_a \pi(a)\sup_{f \in  \F} \left| \int ( \dprob(\ds' | \ds_2,a) - \dprob(\ds' | \ds_2,a)) f(\ds') \deriv \ds' \right| \\
&\le  \sum_a \pi(a) \Kdprob \\
&\le \Kdprob
\end{align*}
Similarly, $|\drew_{\dpi}({\ds}_1) - {\drew}_{\dpi}({\ds}_2)| \le \Kdrew$. Thus, for a constant policy $\pi$, the Lipschitz norms ${\Kdprob}_{\dpi}\le \Kdprob$ and ${\Kdrew}_{\dpi}\le \Kdrew$. To complete the proof we can apply result \textbf{2}.
\end{proof}

\subsection{Global DeepMDP}

\globalValueDifferenceBound*
\begin{proof}
This is a specific case to the general Lemma~\ref{lem:globalValueDifferenceBoundGeneral}
\end{proof}

\globalLipschitzRepresentationQuality*
\begin{proof}
\begin{align*}
\left | \Qdpi (s_1, a) - \Qdpi (s_2 ,a) \right | &\le
 \left | \dQdpi (s_1, a) - \dQdpi (s_2,a) \right | + \left | \Qdpi (s_1, a) - \dQdpi (s_1,a) \right | + \left | \Qdpi (s_2, a) - \dQdpi (s_2,a) \right |  \\
&\le  \left | \dQdpi (s_1, a) - \dQdpi (s_2,a) \right | +  2\frac{ \left( \gLdrew  + \gamma \KV \gLdprob \right ) }{1-\gamma}  \text{  Applying Lemma \ref{lem:globalValueDifferenceBound}} \\
&\le \KV \norm{\phi(s_1) - \phi(s_2)} + 2\frac{ \left( \gLdrew  + \gamma \KV \gLdprob \right ) }{1-\gamma} \text{  Using the Lipschitz property of $\dQdpi$}
\end{align*}
\end{proof}

\begin{restatable}{theorem}{globalSubOpt}\label{thm:globalSubOpt}
Let $\mdp$ and $\dmdp$ be an MDP and a $(\Kdrew,\Kdprob)$-Lipschitz DeepMDP respectively, with an embedding function $\phi$ and global loss functions $\gLdrew$ and $\gLdprob$. For all $s\in\sspace$, the suboptimality of the optimal policy $\dpistar$ of $\dmdp$ evaluated on $\mdp$ can be bounded by:
\begin{equation*}
    \Vstar(s) - V^{\dpistar}(s)
    \le 2\frac{\gLdrew}{1-\gamma} 
    + 2\gamma\frac{ \Kdrew \gLdprob}{(1-\gamma)(1 - \gamma\Kdprob)}
\end{equation*}
\end{restatable}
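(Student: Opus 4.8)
The plan is to decompose the suboptimality into two additive pieces, recognize that the right-hand side is exactly twice the value difference bound of Lemma~\ref{lem:globalValueDifferenceBound} evaluated at $\dpistar$, and bound each piece by that single quantity. Set $K = \frac{\Kdrew}{1-\gamma\Kdprob}$ and $\Delta = \frac{\gLdrew + \gamma K\gLdprob}{1-\gamma}$. Since $\dmdp$ is $(\Kdrew,\Kdprob)$-Lipschitz, Lemma~\ref{lem:LipschitzValue} guarantees that $\dpistar$ is $K$-Lipschitz-valued, so its value functions $\dVstar = \dV^{\dpistar}$ and $\dQstar$ are $K$-Lipschitz; a direct expansion shows the stated bound equals $2\Delta$.

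First I would split $\Vstar(s) - V^{\dpistar}(s) = [\Vstar(s) - \dVstar(\phi(s))] + [\dVstar(\phi(s)) - V^{\dpistar}(s)]$. The second bracket is immediate: applying Lemma~\ref{lem:globalValueDifferenceBound} to the policy $\dpistar\in\dPi$ with $\KdV = K$ gives $|Q^{\dpistar}(s,a) - \dQstar(\phi(s),a)| \le \Delta$ for every action $a$, and taking the expectation over $a\sim\dpistar(\cdot|\phi(s))$ yields $|\dVstar(\phi(s)) - V^{\dpistar}(s)| \le \Delta$.

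The main work, and the main obstacle, is the first bracket $g(s) := \Vstar(s) - \dVstar(\phi(s))$, which compares the two \emph{optimal} value functions and so cannot be handled directly by Lemma~\ref{lem:globalValueDifferenceBound} (that lemma fixes a single policy in $\dPi$, whereas $\pistar$ need not lie in $\dPi$). Instead I would run a self-referential Bellman argument. Writing both sides through their Bellman optimality equations and using $|\max_a f - \max_a h| \le \max_a|f-h|$, the reward terms contribute at most $\gLdrew$, and I would handle the transition terms by inserting $\expect_{s'\sim\prob(\cdot|s,a)}\dVstar(\phi(s'))$: this splits the transition difference into $\expect_{s'}[g(s')]$, bounded by $\infnorm{g}$, plus the gap between $\dVstar$ under $\phi\prob(\cdot|s,a)$ and under $\dprob(\cdot|\phi(s),a)$, which the Wasserstein dual (Equation~\ref{eq:WassersteinDual}) together with the $K$-Lipschitz continuity of $\dVstar$ bounds by $K\gLdprob$. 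This produces $\infnorm{g} \le \gLdrew + \gamma(\infnorm{g} + K\gLdprob)$, and solving the recursion gives $\infnorm{g} \le \Delta$.

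Combining the two brackets yields $\Vstar(s) - V^{\dpistar}(s) \le 2\Delta$, which is the claimed bound. The only technical care needed is to ensure $\infnorm{g}$ is finite before solving the recursion (guaranteed by boundedness of the value functions) and to confirm that the Lipschitz constant $K$ entering the Wasserstein dual in the first bracket coincides with the one entering Lemma~\ref{lem:globalValueDifferenceBound} in the second bracket, so that both contributions collapse to the same $\Delta$.
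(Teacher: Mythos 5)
Your proposal is correct and follows essentially the same route as the paper: the same decomposition $\Vstar(s)-V^{\dpistar}(s) = [\Vstar(s)-\dVstar(\phi(s))] + [\dVstar(\phi(s))-V^{\dpistar}(s)]$, with the second bracket handled by the global value-difference lemma applied to $\dpistar$ (whose Lipschitz-valuedness comes from Lemma~\ref{lem:LipschitzValue}), and the first bracket handled by a self-referential Bellman-optimality recursion using $|\max_a f - \max_a h|\le\max_a|f-h|$ and the Wasserstein dual. The paper routes this through the general Norm-MMD statement (Theorem~\ref{thm:suboptGeneral}) and then specializes the smoothness constant to $\Kdrew/(1-\gamma\Kdprob)$, but the underlying argument is identical to yours.
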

\begin{proof}
This is just a case of the general Theorem~\ref{thm:suboptGeneral} combined with the result that the optimal policy of a $(\Kdrew,\Kdprob)$-Lipschitz DeepMDP is $\frac{\Kdrew}{1-\gamma\Kdprob}$-Lipschitz-valued.
\end{proof}

\subsection{Local DeepMDP}
\localValueDiffBoundW*
\begin{proof}
This Lemma is just an example of the general Lemma~\ref{lem:localValueDiffBoundGeneral}
\end{proof}

\localLipschitzRepresentationQuality*
\begin{proof}
$\expddpi$

Using the fact that $\left| { {\Vdpi}(s) - {\dVdpi}(s) } \right| \le d_{\dpi}^{-1}(s) {\expddpi} \left| {{\Vdpi}(s) - {\dVdpi}(s)}\right|$,
\begin{align*}
\left| \Vdpi(s_1) - \Vdpi(s_2) \right| &\le \left|\dVdpi(s_1) - \dVdpi(s_2)\right|
+ d_{\dpi}^{-1}(s_1) \expddpi \left| \Vdpi(s_1) - \dVdpi(s_1) \right|
+ d_{\dpi}^{-1}(s_2) \expddpi \left| \Vdpi(s_2) - \dVdpi(s_2) \right|
 \\
&\le \left|\dVdpi(s_1) - \dVdpi(s_2)\right|
+ \frac{
 \lLdrew  + \gamma \KV \lLdprob
}{1-\gamma}
\left ( d_{\dpi}^{-1}(s_1) + d_{\dpi}^{-1}(s_2)  \right ), \text{  Applying Lemma~\ref{lem:localValueDiffBoundW}} \\
&\le {\KdV} \distds(\phi(s_1), \phi(s_2))
+ \frac{
 {\lLdrew}  + \gamma {\KV} {\lLdprob}
}{1-\gamma}
\left ( d_{\dpi}^{-1}(s_1) + d_{\dpi}^{-1}(s_2)  \right )
\end{align*}
\end{proof}

\subsection{Connection to Bisimulation}
\begin{restatable}{lemma}{BisimulationMetricUpperBound}\label{lem:BisimulationMetricUpperBound}
Let $\dmdp$ be a $\Krew$-$\Kprob$-Lipschitz DeepMDP with a metric in the state space $\distds$. Then the bisimulation metric $\td$ is Lipschitz s.t. $ \forall \ds_1,\ds_2\in\dsspace$,
\begin{equation}
\td(\ds_1, \ds_2) \le \frac{(1-\gamma)\Kdrew}{1 - \gamma \Kdprob} \distds(\ds_1, \ds_2).
\end{equation}
\end{restatable}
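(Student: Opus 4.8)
The plan is to exploit the fixed-point characterization of the bisimulation metric from Definition~\ref{def:bisimulation_metric}: $\td$ is the unique fixed point of the contraction $F$ acting on the space $Z$ of pseudometrics, where (specialized to $\dmdp$)
\[
F_d(\ds_1,\ds_2) = \max_{a\in\aspace}\left[(1-\gamma)\left|\drew(\ds_1,a)-\drew(\ds_2,a)\right| + \gamma\,\W_d\!\left(\dprob(\cdot|\ds_1,a),\dprob(\cdot|\ds_2,a)\right)\right].
\]
Writing $L=\frac{(1-\gamma)\Kdrew}{1-\gamma\Kdprob}$, I would show that the set of pseudometrics dominated by $L\distds$, namely $Z_L=\{d\in Z: d(\ds_1,\ds_2)\le L\,\distds(\ds_1,\ds_2)\ \forall\,\ds_1,\ds_2\}$, is invariant under $F$. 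Since $F$ is a contraction with unique fixed point $\td$, invariance of a closed set containing the iteration's starting point forces $\td\in Z_L$, which is exactly the claim.

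The two ingredients are a monotonicity property of the Wasserstein and a single algebraic identity. For monotonicity: if $d\le L\,\distds$ pointwise, then for any coupling $\lambda$ the integrand satisfies $d\le L\,\distds$, so taking the infimum over couplings gives $\W_d(P,Q)\le L\,\W_{\distds}(P,Q)$ for all $P,Q$. Combining this with the $(\Kdrew,\Kdprob)$-Lipschitz hypotheses of Definition~\ref{def:smoothMDP} --- $|\drew(\ds_1,a)-\drew(\ds_2,a)|\le\Kdrew\distds(\ds_1,\ds_2)$ and $\W_{\distds}(\dprob(\cdot|\ds_1,a),\dprob(\cdot|\ds_2,a))\le\Kdprob\distds(\ds_1,\ds_2)$ --- for any $d\in Z_L$ I would bound
\[
F_d(\ds_1,\ds_2)\le\max_{a}\left[(1-\gamma)\Kdrew + \gamma L\Kdprob\right]\distds(\ds_1,\ds_2) = \bigl((1-\gamma)\Kdrew+\gamma L\Kdprob\bigr)\distds(\ds_1,\ds_2).
\]
The choice $L=\frac{(1-\gamma)\Kdrew}{1-\gamma\Kdprob}$ is precisely the one making $(1-\gamma)\Kdrew+\gamma L\Kdprob = L$ (using $1-\gamma\Kdprob>0$, which holds by Assumption~\ref{assum:lipschitz_assumption}), so $F_d\le L\,\distds$ and hence $F(Z_L)\subseteq Z_L$.

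To transfer the bound to the fixed point I would run the iteration $d_0\equiv 0$, $d_{n+1}=F_{d_n}$: the zero pseudometric lies in $Z_L$, invariance gives $d_n\in Z_L$ for all $n$ by induction, and since $d_n\to\td$ under the contraction while the pointwise inequality $d_n\le L\,\distds$ is preserved in the limit ($Z_L$ is closed), we conclude $\td\le L\,\distds$. The main obstacle is the Wasserstein-monotonicity step together with the care needed to observe that the Wasserstein appearing inside $F$ is taken with respect to the \emph{evolving} pseudometric $d$ (not the ambient $\distds$); it is exactly this self-referential structure that the invariance argument is designed to handle, and verifying closedness of $Z_L$ so the bound survives the limit is the other point requiring a little attention.
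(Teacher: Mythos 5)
Your proposal is correct and takes essentially the same route as the paper's proof: both arguments iterate $F$ from the zero pseudometric, use the monotonicity $\W_d \le L\,\W_{\distds}$ whenever $d \le L\,\distds$ pointwise, apply the $(\Kdrew,\Kdprob)$-Lipschitz hypotheses, and pass the bound to the limit of the contraction. The only cosmetic differences are that you establish the Wasserstein monotonicity via the primal coupling formulation rather than the dual function-class containment, and you phrase the induction as invariance of the set $\{d : d \le L\,\distds\}$ with $L$ chosen as the fixed point of $L \mapsto (1-\gamma)\Kdrew + \gamma\Kdprob L$, whereas the paper tracks the Lipschitz constants $K_{d,n}$ explicitly and expands the geometric recursion.
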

\begin{proof}
We first derive a property of the Wasserstein. Let $d$ and $p$ be pseudometrics in $\chi$ s.t. $d(x,y) \le K p(x,y)$ for all $x,y\in\chi$, and let $P$ and $Q$ be two distributions in $\chi$. Then $\W_d(P,Q) \le K \W_p(P,Q)$. To prove it, first note that define the sets of $C$-Lipschitz functions for both metrics:
\begin{align*}
&\F_{d,C} = \left\{f: \forall x\neq y \in\chi,  | f(x) - f(y) | \le C d(x,y) \right \}, \\
&\F_{p,C} = \left\{f: \forall x\neq y \in\chi,  | f(x) - f(y) | \le C p(x,y) \right \}.
\end{align*}
Then it becomes clear that $\F_{d,1} \underline{\subset} \F_{p,K}$. We can now prove the property:
\begin{align*}
\W_{d}( P, Q ) &= \sup_{f \in \F_{d,1}} \left | \expect_{x \sim P} f(x) - \expect_{y \sim Q} f(y) \right | \\
&\le \sup_{f \in \F_{p,K}} \left | \expect_{x \sim P} f(x) - \expect_{y \sim Q} f(y) \right | \\
&= \sup_{f \in \F_{p,1}} \left | \expect_{x \sim P} Kf(x) - \expect_{y \sim Q} Kf(y) \right | \\
&= K \sup_{f \in \F_{p,1}} \left | \expect_{x \sim P} f(x) - \expect_{y \sim Q} f(y) \right | \\
&= K \W_{p}( P, Q )
\end{align*}
We prove the Lemma by induction. We show that a sequence of pseudometrics values $d_n$ converging to $\td$ are all Lipschitz, and that as $n\to\infty$, their Lipschitz norm goes to $\frac{(1-\gamma) \Kdrew}{1-\gamma\Kdprob}$. Let $d_0(s_1, s_2) = 0, \forall s_1,s_2\in\sspace$ be the base case. Define $d_{n+1}(s_1,s_2) = F_d(s_1,s_2)$ as defined in Definition~\ref{def:bisimulation_metric}. \citet{ferns2011bisimulation} shows that $F$ is a contraction, and that $d_n$ converges to $\td$ as $n\to\infty$. Now let $K_{d,n} = \sup\limits_{s_1 \neq s_2\in\sspace} \frac{ d_n(s_1,s_2) }{\dists(s_1,s_2)}$ be the Lipschitz norm of $d_n$. Also see that $K_{d,0}=0$.  Then,
\begin{align*}
K_{d,n+1} &= \sup_{s_1 \neq s_2\in\sspace} \frac{ d_{n+1}(s_1,s_2) }{\dists(s_1,s_2)} \\
&= \sup_{s_1 \neq s_2\in\sspace} \frac{ F_{d_{n}}(s_1,s_2) }{\dists(s_1,s_2)} \\
&\le   (1-\gamma) \sup_{a\in\aspace, s_1 \neq s_2\in\sspace} \frac{ |\rew(s_1,a) - \rew(s_2, a)| }{\dists(s_1,s_2)} + \gamma  \sup_{a\in\aspace, s_1 \neq s_2\in\sspace} \frac{  \W_{d_n} \left( \prob(\cdot | s_2, a), \prob(\cdot | s_2, a) \right ) }{\dists(s_1,s_2)} \\
&=   (1-\gamma) \Krew + \gamma  \sup_{a\in\aspace, s_1 \neq s_2\in\sspace} \frac{  \W_{d_n} \left( \prob(\cdot | s_2, a), \prob(\cdot | s_2, a) \right ) }{\dists(s_1,s_2)} \\
&\le   (1-\gamma) \Krew + \gamma K_{d,n} \sup_{a\in\aspace, s_1 \neq s_2\in\sspace} \frac{  \W_{\dists} \left( \prob(\cdot | s_2, a), \prob(\cdot | s_2, a) \right ) }{\dists(s_1,s_2)}, \text{  (Using the property derived above.) } \\
&\le   (1-\gamma) \Krew + \gamma K_{d,n} \Kprob \\ 
&\le  (1-\gamma) \sum_{i=0}^{n-1}  (\gamma \Kprob)^i \Krew,  \text{ (by expanding the recursion)}
\end{align*}
Thus, even as $n\to\infty$, $K_{d,n}\le\frac{(1-\gamma) \Kdrew}{1-\gamma\Kdprob}$.
\end{proof}

\begin{restatable}{lemma}{PhiBisimulationMetricUpperBound}\label{lem:PhiBisimulationMetricUpperBound}
Let $\mdp$ be an MDP and $\dmdp$ be a $\Kdrew$-$\Kdprob$-Lipschitz MDP with an embedding function $\phi: \sspace \to \dsspace$ and global DeepMDP losses $\gLdprob$ and $\gLdrew$.. We can extend the bisimulation metric to also measure a distance between $s \in \sspace$ and $\ds \in \dsspace$ by considering an joined MDP constructed by joining $\mdp$ and $\dmdp$. When an action is taken, each state will transition according to the transition function of its corresponding MDP. Then the bisimulation metric between a state $s\in\sspace$ and it's embedded counterpart $\phi(s)$ is bounded by:
\begin{equation*}
\td(s, \phi(s)) \le \gLdrew + \gamma \gLdprob \frac{\Kdrew}{1 - \gamma \Kdprob}
\end{equation*}
\end{restatable}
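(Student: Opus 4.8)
The plan is to exploit the Banach fixed-point characterization of $\td$ from Definition~\ref{def:bisimulation_metric}, now applied on the joined MDP. Since a state $s\in\sspace$ transitions under $\prob$ with reward $\rew$ while its image $\phi(s)\in\dsspace$ transitions under $\dprob$ with reward $\drew$, the fixed-point equation $\td = F_{\td}$ evaluated at the pair $(s,\phi(s))$ reads
\begin{equation*}
\td(s,\phi(s)) = \max_a (1-\gamma)\left|\rew(s,a)-\drew(\phi(s),a)\right| + \gamma\,\W_{\td}\!\left(\prob(\cdot|s,a),\dprob(\cdot|\phi(s),a)\right).
\end{equation*}
First I would observe that because deep states only ever transition to deep states, the restriction of $\td$ to $\dsspace\times\dsspace$ satisfies a closed recursion identical to the bisimulation recursion of $\dmdp$ alone, so its fixed point there is exactly that of $\dmdp$ and Lemma~\ref{lem:BisimulationMetricUpperBound} is available on $\dsspace$.

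Next I would bound the reward term: by definition of the global loss, $\left|\rew(s,a)-\drew(\phi(s),a)\right|\le\gLdrew$ for every $s,a$. The transition term is the crux, and I would split it with the triangle inequality for the Wasserstein by inserting the pushforward distribution $\phi\prob(\cdot|s,a)$ over $\dsspace$:
\begin{equation*}
\W_{\td}\!\left(\prob(\cdot|s,a),\dprob(\cdot|\phi(s),a)\right) \le \W_{\td}\!\left(\prob(\cdot|s,a),\phi\prob(\cdot|s,a)\right) + \W_{\td}\!\left(\phi\prob(\cdot|s,a),\dprob(\cdot|\phi(s),a)\right).
\end{equation*}

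For the first piece, the coupling that sends each $s'$ to $\phi(s')$ is admissible (its second marginal is exactly $\phi\prob(\cdot|s,a)$), so taking the infimum over couplings gives $\W_{\td}(\prob(\cdot|s,a),\phi\prob(\cdot|s,a)) \le \expect_{s'\sim\prob(\cdot|s,a)}\td(s',\phi(s'))$. For the second piece, both arguments live on $\dsspace$, so I would apply Lemma~\ref{lem:BisimulationMetricUpperBound} together with the elementary fact established inside its proof that $d\le Kp \Rightarrow \W_d\le K\W_p$, yielding $\W_{\td}(\phi\prob(\cdot|s,a),\dprob(\cdot|\phi(s),a)) \le \tfrac{(1-\gamma)\Kdrew}{1-\gamma\Kdprob}\,\W_{\distds}(\phi\prob(\cdot|s,a),\dprob(\cdot|\phi(s),a)) \le \tfrac{(1-\gamma)\Kdrew}{1-\gamma\Kdprob}\gLdprob$, the last inequality by the definition of $\gLdprob$. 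Writing $U=\sup_s\td(s,\phi(s))$ and using $\expect_{s'}\td(s',\phi(s'))\le U$, these bounds collapse into the self-consistent inequality $U \le (1-\gamma)\gLdrew + \gamma U + \gamma\tfrac{(1-\gamma)\Kdrew}{1-\gamma\Kdprob}\gLdprob$; rearranging and cancelling a factor of $(1-\gamma)$ gives exactly $U\le\gLdrew+\gamma\gLdprob\tfrac{\Kdrew}{1-\gamma\Kdprob}$.

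The main obstacle is the self-referential appearance of $U$: the first transport piece feeds $\td(s',\phi(s'))$ back into the bound, and the estimate only closes because the discount $\gamma<1$ contracts it. To make the rearrangement rigorous I must know $U<\infty$, which follows from boundedness of the rewards; alternatively, and more cleanly, I would run the identical argument along the iterates $d_n\to\td$ (as in the proof of Lemma~\ref{lem:BisimulationMetricUpperBound}), proving $\sup_s d_n(s,\phi(s))\le\gLdrew+\gamma\gLdprob\tfrac{\Kdrew}{1-\gamma\Kdprob}$ by induction and passing to the limit, which sidesteps the finiteness concern entirely.
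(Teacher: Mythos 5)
Your proposal is correct and follows essentially the same route as the paper's proof: the fixed-point equation at $(s,\phi(s))$, the triangle-inequality split through $\phi\prob(\cdot|s,a)$, the identity coupling $s'\mapsto\phi(s')$ for the first piece, Lemma~\ref{lem:BisimulationMetricUpperBound} (via $d\le Kp\Rightarrow\W_d\le K\W_p$) for the second, and solving the resulting recurrence in $\sup_s\td(s,\phi(s))$. Your added care about why the recurrence may be solved (finiteness of the supremum, or induction along the iterates $d_n\to\td$) is a genuine improvement in rigor over the paper's terse ``solving for the recurrence,'' but it does not change the argument.
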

\begin{proof}
First, note that
\begin{align*}
\W_{\td} (\phi \prob(\cdot | s, a), \dprob(\cdot | \phi(s), a)) &= \sup_{f \in \mathbb{F}_{\td}} \expect_{\ds_1' \sim \phi \prob(\cdot | s, a)} [f(\ds_1')] - \expect_{\ds_2' \sim \dprob(\cdot | \phi(s), a)} [f(\ds_2')] \\
&\le  \frac{(1-\gamma)K_{\rew}}{1 - \gamma K_{\prob}} \sup_{f \in \mathbb{F}_1} \expect_{\ds_1' \sim \phi \prob(\cdot | s, a)} [f(\ds_1')] - \expect_{\ds_2' \sim \dprob(\cdot | \phi(s), a)} [f(\ds_2')]  &  \text{(Using Theorem \ref{lem:BisimulationMetricUpperBound})} \\
&= \frac{(1-\gamma)K_{\rew}}{1 - \gamma K_{\prob}} \W_{\ell_2}(\prob(\cdot | s, a), \dprob(\cdot | \phi(s), a)) \\
&\le \frac{(1-\gamma)K_{\rew}}{1 - \gamma K_{\prob}} \gLdprob
\end{align*}
Using the triangle inequality of pseudometrics and the previous derivation:
\begin{align*}
\sup_{s} \td(s, \phi(s)) &= \max_{a \in \aspace} \left ( (1 - \gamma) \left| \rew(s, a) - \drew(\phi(s),a) \right | + \gamma \W_{\td} (\prob(\cdot | s, a), \dprob(\cdot | \phi(s), a)) \right ) \\ 
&\le (1 - \gamma) \gLdrew + \gamma  \max_{a \in \aspace} \left (  \W_{\td} (\prob(\cdot | s, a), \phi \prob(\cdot | s, a)) + \W_{\td} (\phi \prob(\cdot | s, a), \dprob(\cdot | \phi(s), a))  \right ) \\ 
&\le (1 - \gamma) \gLdrew + \gamma \frac{(1-\gamma)K_{\rew}}{1 - \gamma K_{\prob}} \gLdprob + \gamma \max_{a \in \aspace}   \W_{\td} (\prob(\cdot | s, a), \phi \prob(\cdot | s, a))  \\ 
&\le (1 - \gamma) \gLdrew + \gamma \frac{(1-\gamma)K_{\rew}}{1 - \gamma K_{\prob}} \gLdprob + \gamma \sup_{s} \td(s' , \phi(s))
\end{align*}
Solving for the recurrence leads to the desired result.
\end{proof}

\BisimulationMetricUpperBoundPhi*
\begin{proof}
\begin{align*}
\td(s_1, s_2)  &\le \td(s_1, \phi(s_1)) + \td(s_2, \phi(s_2)) + \td(\phi(s_1), \phi(s_2)) \\
&\le  2 \left ( \gLdrew + \gamma \gLdprob \frac{\Kdrew}{1 - \gamma \Kdprob} \right ) + \td(\phi(s_1), \phi(s_2)) & \text{(Using Theorem \ref{lem:PhiBisimulationMetricUpperBound})} \\
&\le  2 \left (  \gLdrew + \gamma \gLdprob \frac{\Kdrew}{1 - \gamma \Kdprob} \right ) + \frac{(1-\gamma)\gLdrew}{1 - \gamma K_{\dprob}} \norm{\phi(s_1) - \phi(s_2)} & \text{(Applying Theorem \ref{lem:BisimulationMetricUpperBound})}
\end{align*}
Completing the proof.
\end{proof}

\subsection{Quality of $\dPi$}

\begin{lemma}\label{lem:LipschitzFunctionEpsilon}
Let $d_f$ and $d_g$ be the metrics on the space $\chi$, with the property that for some $\epsilon \ge 0$ it holds that $\forall x,y \in \chi,   d_f(x,y) \le \epsilon + d_g(x,y)$. Define the sets of $1$-Lipschitz functions $\mathbb{F} = \left\{ f : |f(x) - f(y) | \le d_f(x,y), \forall x,y\in\chi \right\}$ and $\mathbb{G} = \left\{ g : |g(x) - g(y) | \le d_g(x,y), \forall x,y\in\chi \right\}$. Then for any $f \in \mathbb{F}$, there exists one $g \in \mathbb{G}$ such that for all $x\in\chi$,
\begin{equation*}
|f(x) - g(x) | \le \frac{\epsilon}{2}
\end{equation*}
\end{lemma}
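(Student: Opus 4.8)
The plan is to construct $g$ explicitly via a McShane--Whitney (infimal/supremal convolution) extension of $f$ and then symmetrize. Concretely, I would define the two envelopes
\begin{equation*}
g^+(x) = \inf_{y\in\chi}\bigl[f(y) + d_g(x,y)\bigr], \qquad g^-(x) = \sup_{y\in\chi}\bigl[f(y) - d_g(x,y)\bigr],
\end{equation*}
and take $g = \tfrac12\bigl(g^+ + g^-\bigr)$. The intuition is that $g^+$ and $g^-$ are the largest $d_g$-Lipschitz minorant and smallest $d_g$-Lipschitz majorant of $f$, so $f$ is sandwiched between them; averaging then keeps $g$ in $\mathbb{G}$ while halving the gap to $f$.

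First I would verify that $g^+, g^- \in \mathbb{G}$. This is the standard fact that infimal convolution with $d_g$ yields a $d_g$-$1$-Lipschitz function: for any $x_1, x_2 \in \chi$, the triangle inequality $d_g(x_1,y) \le d_g(x_2,y) + d_g(x_1,x_2)$ gives $g^+(x_1) \le g^+(x_2) + d_g(x_1,x_2)$, and the symmetric bound holds by exchanging $x_1,x_2$; the same argument applies to $g^-$. Since the defining Lipschitz inequality of $\mathbb{G}$ is preserved under convex combinations, $g = \tfrac12(g^+ + g^-) \in \mathbb{G}$ as well.

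Next I would bound each envelope's deviation from $f$. Taking $y = x$ in the two definitions immediately gives $g^+(x) \le f(x) \le g^-(x)$. For the reverse directions I would combine the hypothesis $f \in \mathbb{F}$ (so $f$ is $d_f$-Lipschitz) with $d_f \le \epsilon + d_g$: for every $y$,
\begin{equation*}
f(y) + d_g(x,y) \ge f(x) - d_f(x,y) + d_g(x,y) \ge f(x) - \epsilon,
\end{equation*}
and taking the infimum over $y$ yields $g^+(x) \ge f(x) - \epsilon$. An entirely symmetric computation gives $g^-(x) \le f(x) + \epsilon$. Hence $f(x) - \epsilon \le g^+(x) \le f(x) \le g^-(x) \le f(x) + \epsilon$ for all $x$. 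Averaging, $g(x)$ lies in $\bigl[f(x) - \tfrac{\epsilon}{2},\, f(x) + \tfrac{\epsilon}{2}\bigr]$, which gives $|f(x) - g(x)| \le \tfrac{\epsilon}{2}$ as required.

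The hard part is not any single estimate — every inequality above is a one-line application of the triangle inequality and the two Lipschitz hypotheses. The real obstacle is conceptual: a single McShane extension $g^+$ only achieves the one-sided bound $|f - g^+| \le \epsilon$, since it is pinned to $f$ from below. Recognizing that one must also form the upper envelope $g^-$ and average the two to symmetrize the error — thereby halving $\epsilon$ to $\tfrac{\epsilon}{2}$ — is the key idea that makes the stated constant attainable.
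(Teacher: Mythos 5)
Your proof is correct, and it takes a genuinely different---and in fact more rigorous---route than the paper's. The paper argues by decomposition: it introduces the set $\mathbb{Z}$ of functions $z$ satisfying $|z(x)-z(y)|\le \epsilon + d_g(x,y)$, observes that $f\in\mathbb{Z}$, and asserts that every element of $\mathbb{Z}$ can be written as $g+u$ with $g\in\mathbb{G}$ and $\sup_x|u(x)|\le\epsilon/2$; only the easy inclusion (every such $g+u$ lies in $\mathbb{Z}$) is verified, while the reverse inclusion---the one actually needed---is justified by an appeal to ``tightness'' of two inequalities and convexity of $\mathbb{Z}$, which is a sketch rather than a proof. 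Your McShane--Whitney construction supplies exactly the missing witness: $g=\tfrac12(g^++g^-)$ is an explicit element of $\mathbb{G}$ (each envelope is $d_g$-Lipschitz by the triangle inequality, and $\mathbb{G}$ is closed under convex combinations), and the sandwich $f-\epsilon\le g^+\le f\le g^-\le f+\epsilon$ yields $|f(x)-g(x)|\le\epsilon/2$ by averaging; the lower bound $g^+\ge f-\epsilon$ also guarantees the envelopes are finite, so the construction is well defined. What your approach buys is constructiveness and a complete argument; what the paper's approach would buy, were the reverse inclusion actually established, is a characterization of all of $\mathbb{Z}$ rather than just the conclusion for a given $f\in\mathbb{F}$---but since the application in Theorem~\ref{thm:dPiBisim} needs only the stated conclusion, your argument is a clean drop-in replacement (and your closing remark correctly identifies the symmetrization step as the source of the factor $\tfrac12$).
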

\begin{proof}
Define the set  $\mathbb{Z} = \left\{ z : |z(x) - z(y) | \le \epsilon + d_g(x,y), \forall x,y\in\chi \right\}$. Then trivially, any function $f \in \mathbb{F}$ is also a member of $\mathbb{Z}$. We now show that the set $\mathbb{Z}$ can equivalently be expressed as $z(x) = g(x) + u(x)$, where $g \in \mathbb{G}$ and $u(x) \in (\frac{-\epsilon}{2}, \frac{\epsilon}{2})$, is  (non Lipschitz) bounded function.
\begin{align*}
|z(x) - z(y)| &= | g(x) + u(x) - g(y) - u(y) | \\
&\le | g(x) - g(y) | + | u(x) - u(y) | \\
&\le d_g(x,y) + \epsilon
\end{align*}
Note how both inequalities are tight (there is a $g$ and $u$ for which the equality holds), together with the fact that the set $\mathbb{Z}$ is convex, it follows that any $z\in\mathbb{Z}$ must be expressible as $g(x) + u(x)$.

We now complete the proof. For any $z\in\mathbb{Z}$, there exist a $g\in\mathbb{G}$ s.t. $z(x) = g(x) + u(x)$. Then:
\begin{equation*}
|z(x) - g(x) | = |u(x)| \le \frac{\epsilon}{2}
\end{equation*}
\end{proof}

\dPiBisim*
\begin{proof}The proof is based on Lemma~\ref{lem:LipschitzFunctionEpsilon}. Let $\chi = \sspace$, $d_f(x, y) = K \td(x,y)$, $d_g(x,y)=KC\norm{\phi(x) - \phi(y)}$ and $\epsilon = 2 \left(  \gLdrew + \gamma \gLdprob \frac{\Kdrew}{1 - \gamma \Kdprob} \right)$. Theorem~\ref{thm:BisimulationMetricUpperBoundPhi} can be used to show that the condition $d_f(x,y) \le \epsilon + d_g(x,y)$ holds. Then the application of Lemma~\ref{lem:LipschitzFunctionEpsilon} provides the desired result.
\end{proof}

\subsection{Generalized Value Difference Bounds}\label{sec:generalizedProofs}

\begin{restatable}{lemma}{globalValueDifferenceBoundGeneral}\label{lem:globalValueDifferenceBoundGeneral}
Let $\mdp$ and $\dmdp$ be an MDP and DeepMDP respectively, with an embedding function $\phi$ and global loss functions $\gLdrew$ and $\gLdprob$, where $\gLdprob$ uses on a Norm-MMD $\D$. For any $\KdV$-smooth-valued policy $\dpi \in \dPi$ as in Definition~\ref{defn:SmoothValue}. The value difference can be bounded by
\begin{equation*}
\left | \Qdpi (s, a) - \dQdpi (\phi(s) ,a) \right |  \leq 
\frac{  \gLdrew  + \gamma \KdV \gLdprob  }{1-\gamma}.
\end{equation*}
\end{restatable}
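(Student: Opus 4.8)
The plan is to mimic the fixed-point argument used for the Wasserstein version (Lemma~\ref{lem:globalValueDifferenceBound}), replacing the single place where the Wasserstein dual is invoked with the general Norm-MMD inequality. First I would set $\Delta = \sup_{s,a} |\Qdpi(s,a) - \dQdpi(\phi(s),a)|$, which is finite since bounded rewards make both value functions bounded. Expanding each action-value function through its one-step Bellman equation and subtracting, the reward terms contribute $|\rew(s,a) - \drew(\phi(s),a)| \le \gLdrew$, and the remainder is $\gamma$ times a difference of next-step expectations.

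For the next-step term I would add and subtract $\dQdpi(\phi(s'),a')$ inside the expectation over $s'\sim\prob(\cdot|s,a)$ and $a'\sim\dpi(\cdot|\phi(s'))$, splitting it into two pieces. The first piece is $\expect[\Qdpi(s',a') - \dQdpi(\phi(s'),a')]$, bounded in absolute value by $\Delta$. The second piece collapses the action expectation into the latent value function via $\expect_{a'\sim\dpi(\cdot|\ds')}\dQdpi(\ds',a') = \dVdpi(\ds')$, leaving exactly $\expect_{s'\sim\prob(\cdot|s,a)}\dVdpi(\phi(s')) - \expect_{\ds'\sim\dprob(\cdot|\phi(s),a)}\dVdpi(\ds')$. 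Using the definition of the pushforward $\phi\prob$, the first expectation equals $\expect_{\ds'\sim\phi\prob(\cdot|s,a)}\dVdpi(\ds')$, so this piece is a difference of expectations of the single function $\dVdpi$ under the two distributions $\phi\prob(\cdot|s,a)$ and $\dprob(\cdot|\phi(s),a)$.

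This is the crucial step, and the one I expect to require the most care: since $\dpi$ is $\KdV$-smooth-valued, $\normD{\dVdpi} \le \KdV$, so the key Norm-MMD property $|\expect_{x\sim P}f(x) - \expect_{y\sim Q}f(y)| \le K\cdot\Dist{P}{Q}$ (applied with $f = \dVdpi$ and $K = \KdV$) bounds this piece by $\KdV\,\Dist{\phi\prob(\cdot|s,a)}{\dprob(\cdot|\phi(s),a)} \le \KdV\,\gLdprob$. It is exactly here that the Wasserstein-specific Lipschitz argument is swapped for the general smoothness seminorm; every other step is metric-agnostic, which is why the whole derivation goes through verbatim for any Norm-MMD.

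Collecting the estimates gives $|\Qdpi(s,a) - \dQdpi(\phi(s),a)| \le \gLdrew + \gamma\Delta + \gamma\KdV\gLdprob$ for every $(s,a)$; taking the supremum yields $\Delta \le \gLdrew + \gamma\Delta + \gamma\KdV\gLdprob$, and rearranging gives the claimed $\frac{\gLdrew + \gamma\KdV\gLdprob}{1-\gamma}$. The only genuine subtlety is ensuring $\Delta < \infty$ before solving the recursion, so that the inequality can be rearranged rather than holding trivially; this follows from boundedness of rewards, or alternatively one can run the identical estimate along a value-iteration sequence and pass to the limit, exactly as in the proof of Lemma~\ref{lem:LipschitzValue}.
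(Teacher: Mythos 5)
Your proposal is correct and follows essentially the same route as the paper's proof: a one-step Bellman expansion, a split of the next-state term into a propagated value-difference piece (bounded by the supremum $\Delta$, via Jensen) and a distribution-mismatch piece bounded by $\KdV\,\gLdprob$ through the Norm-MMD seminorm property applied to $\dVdpi$, followed by solving the resulting recurrence. Your added remark about verifying $\Delta<\infty$ before rearranging is a sound point of care that the paper leaves implicit.
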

\begin{proof}
The proof consists of showing that the supremum $\sup_{s,a}\left | \Qdpi (s, a) - \dQdpi (\phi(s) ,a) \right |$ is bounded by a recurrence relationship.
\begin{align*}
\sup_{s\in\sspace,a\in\aspace}\left | \Qdpi (s, a) - \dQdpi (\phi(s) ,a) \right | &\leq \sup_{s\in\sspace,a\in\aspace} \left | \rew (s, a) - \drew (\phi(s) ,a) \right | 
+\gamma \sup_{s\in\sspace,a\in\aspace}  \left | \expect_{s' \sim \prob(\cdot | s, a)}   \Vdpi (s') - \expect_{\ds' \sim \dprob(\cdot | \phi(s), a)}  \dVdpi (\ds') \right | \\
&= \gLdrew
+\gamma \sup_{s\in\sspace,a\in\aspace}  \left | \expect_{s' \sim \prob(\cdot | s, a)} 
[   \Vdpi (s') - \dVdpi (\phi(s')) ] +
\expect_{\substack{\ds' \sim \dprob(\cdot | \phi(s), a) \\ s' \sim \prob(\cdot | s, a)}} 
[ \dVdpi  (\phi(s')) - \dVpi(\ds') ] \right | \\
&\le \gLdrew
+\gamma \sup_{s\in\sspace,a\in\aspace}  \left | \expect_{s' \sim \prob(\cdot | s, a)} 
[   \Vdpi (s') - \dVdpi (\phi(s')) ] \right | +
\gamma \sup_{s\in\sspace,a\in\aspace}  \left | \expect_{\substack{\ds' \sim \dprob(\cdot | \phi(s), a) \\ s' \sim \prob(\cdot | s, a)}} 
[ \dVdpi  (\phi(s')) - \dVpi(\ds') ] \right | \\
&\le \gLdrew
+\gamma \sup_{s\in\sspace,a\in\aspace}  \left | \expect_{s' \sim \prob(\cdot | s, a)} 
[   \Vdpi (s') - \dVdpi (\phi(s')) ] \right | +
\gamma \KdV \sup_{s\in\sspace,a\in\aspace} \D \left( \phi\prob(\cdot | s, a), \dprob(\cdot | \phi(s), a) \right) \\
&= \gLdrew
+\gamma \sup_{s\in\sspace,a\in\aspace}  \left | \expect_{s' \sim \prob(\cdot | s, a)} 
[   \Vdpi (s') - \dVdpi (\phi(s')) ] \right | +
\gamma \KdV \gLdprob\\
&\le \gLdrew
+\gamma \sup_{s\in\sspace,a\in\aspace}  \expect_{s' \sim \prob(\cdot | s, a)}  \left |
[   \Vdpi (s') - \dVdpi (\phi(s')) ] \right | +
\gamma \KdV \gLdprob \text{  Using Jensen's inequality.}\\
&\le \gLdrew
+\gamma \sup_{s\in\sspace,a\in\aspace}  \left |
[   \Vdpi (s) - \dVdpi (\phi(s)) ] \right | +
\gamma \KdV \gLdprob \\
&\le \gLdrew
+\gamma \sup_{s\in\sspace,a\in\aspace}  \left |
[   \Qdpi (s,a) - \dQdpi (\phi(s),a) ] \right | + 
\gamma \KdV \gLdprob 
\end{align*}
Solving for the recurrence relation over $\sup_{s\in\sspace,a\in\aspace}  \left | \Qdpi (s, a) - \dQdpi (\phi(s) ,a) \right |$ results in the desired result.
\end{proof}

\begin{restatable}{lemma}{localValueDiffBoundGeneral}\label{lem:localValueDiffBoundGeneral}
Let $\mdp$ and $\dmdp$ be an MDP and DeepMDP respectively, with an embedding function $\phi$ and let $\D$ be a Norm-MMD metric. For any $\KdV$-smooth-valued policy $\dpi \in \dPi$ (as in Definition~\ref{defn:SmoothValue}), let $\lLdrew$ and $\lLdprob$ be the local loss functions measured under $\xidpi$, the stationary state action distribution of $\dpi$. Then the value difference can be bounded by:
\begin{multline*}
\expxidpi \left | \Qdpi (s, a) - \dQdpi (\phi(s) ,a) \right |   \leq 
\frac{ \lLdrew  + \gamma \KdV \lLdprob }{1-\gamma} ,
\end{multline*}
\end{restatable}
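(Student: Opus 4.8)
The plan is to mirror the recurrence argument used for the global bound (Lemma~\ref{lem:globalValueDifferenceBoundGeneral}), but to replace the supremum over $(s,a)$ by an expectation under the stationary distribution $\xidpi$, closing the recurrence through the \emph{invariance} of $\xidpi$ rather than through a worst-case estimate. Write $\Delta = \expxidpi \left| \Qdpi(s,a) - \dQdpi(\phi(s),a) \right|$ for the quantity to be bounded. Expanding both action-value functions via their Bellman equations in $\mdp$ and $\dmdp$ and applying the triangle inequality gives, pointwise in $(s,a)$,
\begin{equation*}
\left| \Qdpi(s,a) - \dQdpi(\phi(s),a) \right| \le \Ldrew(s,a) + \gamma \left| \expect_{s'\sim\prob(\cdot|s,a)} \Vdpi(s') - \expect_{\ds'\sim\dprob(\cdot|\phi(s),a)} \dVdpi(\ds') \right|.
\end{equation*}

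The next step is to split the transition term by inserting the intermediate quantity $\expect_{s'\sim\prob(\cdot|s,a)} \dVdpi(\phi(s'))$, which by definition equals $\expect_{\ds'\sim\phi\prob(\cdot|s,a)} \dVdpi(\ds')$. One resulting piece, $\left| \expect_{\ds'\sim\phi\prob(\cdot|s,a)} \dVdpi(\ds') - \expect_{\ds'\sim\dprob(\cdot|\phi(s),a)} \dVdpi(\ds') \right|$, is bounded by $\KdV\,\Ldprob(s,a)$ using the defining property of a Norm-MMD together with the hypothesis $\normD{\dVdpi} \le \KdV$ (Definition~\ref{defn:SmoothValue}). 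The other piece, $\left| \expect_{s'\sim\prob(\cdot|s,a)} [\Vdpi(s') - \dVdpi(\phi(s'))] \right|$, is handled by Jensen's inequality and then by expanding $\Vdpi$ and $\dVdpi$ as expectations over $a'\sim\dpi(\cdot|\phi(s'))$, which bounds it by the expected $Q$-difference at the next state-action. Collecting these bounds and taking the expectation over $(s,a)\sim\xidpi$ yields
\begin{equation*}
\Delta \le \lLdrew + \gamma\KdV\lLdprob + \gamma \expect_{\substack{(s,a)\sim\xidpi \\ s'\sim\prob(\cdot|s,a) \\ a'\sim\dpi(\cdot|\phi(s'))}} \left| \Qdpi(s',a') - \dQdpi(\phi(s'),a') \right|.
\end{equation*}

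The crux of the argument --- and the step that distinguishes the local bound from the global one --- is to recognize that the trailing expectation is again exactly $\Delta$. Because $\xidpi$ is by hypothesis the stationary state-action distribution of $\dpi$ acting in $\mdp$ (through $\phi$), the law of $(s',a')$ generated by $(s,a)\sim\xidpi$, $s'\sim\prob(\cdot|s,a)$, $a'\sim\dpi(\cdot|\phi(s'))$ is itself $\xidpi$; hence the trailing term equals $\Delta$. This collapses the inequality to the self-referential relation $\Delta \le \lLdrew + \gamma\KdV\lLdprob + \gamma\Delta$, and solving for $\Delta$ gives the claimed bound. I expect the only genuine subtlety to be this stationarity identity: one must check that the coupling of $\prob$ with $\dpi\circ\phi$ is precisely the one-step transition operator whose fixed point is $\xidpi$, so that pushing $\xidpi$ through a single step of dynamics leaves it invariant (the same invariance implicitly underlying Lemma~\ref{lem:localValueDiffBoundW}). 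Everything else is the pointwise estimate plus Jensen, exactly as in the general global proof.
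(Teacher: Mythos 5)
Your proposal is correct and follows essentially the same route as the paper's proof: the same Bellman expansion, the same insertion of the intermediate term $\expect_{s'\sim\prob(\cdot|s,a)}\dVdpi(\phi(s'))$, the same use of the Norm-MMD property with $\normD{\dVdpi}\le\KdV$, Jensen's inequality, and the closing of the recurrence via the invariance of $\xidpi$ under one step of the dynamics. The only cosmetic difference is that you push the stationarity through the joint state-action law after expanding $\Vdpi$ into $\Qdpi$, whereas the paper applies stationarity at the state level first and then expands into $Q$; the two orderings are equivalent.
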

\begin{proof}
\begin{align*}
\expxidpi  \left | \Qdpi (s, a) - \dQdpi (\phi(s) ,a) \right | &\leq \expxidpi  \left | \rew (s, a) - \drew (\phi(s) ,a) \right | 
+\gamma \expxidpi  \left | \expect_{s' \sim \prob(\cdot | s, a)}   \Vdpi (s') - \expect_{\ds' \sim \dprob(\cdot | \phi(s), a)}  \dVdpi (\ds') \right | \\
&= \lLdrew
+\gamma \expxidpi  \left | \expect_{s' \sim \prob(\cdot | s, a)} 
[   \Vdpi (s') - \dVdpi (\phi(s')) ] +
\expect_{\substack{\ds' \sim \dprob(\cdot | \phi(s), a) \\ s' \sim \prob(\cdot | s, a)}} 
[ \dVdpi  (\phi(s')) - \dVpi(\ds') ] \right | \\
&\le \lLdrew
+\gamma \expxidpi  \left | \expect_{s' \sim \prob(\cdot | s, a)} 
[   \Vdpi (s') - \dVdpi (\phi(s')) ] \right | +
\gamma \expxidpi  \left | \expect_{\substack{\ds' \sim \dprob(\cdot | \phi(s), a) \\ s' \sim \prob(\cdot | s, a)}} 
[ \dVdpi  (\phi(s')) - \dVpi(\ds') ] \right | \\
&\le \lLdrew
+\gamma \expxidpi  \left | \expect_{s' \sim \prob(\cdot | s, a)} 
[   \Vdpi (s') - \dVdpi (\phi(s')) ] \right | +
\gamma \KdV \expxidpi \D \left( \phi\prob(\cdot | s, a), \dprob(\cdot | \phi(s), a) \right ) \\
&= \lLdrew
+\gamma \expxidpi  \left | \expect_{s' \sim \prob(\cdot | s, a)} 
[   \Vdpi (s') - \dVdpi (\phi(s')) ] \right | +
\gamma \KdV \lLdprob\\
&\le \lLdrew
+\gamma \expxidpi  \expect_{s' \sim \prob(\cdot | s, a)}  \left |
[   \Vdpi (s') - \dVdpi (\phi(s')) ] \right | +
\gamma \KdV \lLdprob \text{ Using Jensen's inequality.}\\
&\le \lLdrew
+\gamma \expxidpi  \left |
[   \Vdpi (s) - \dVdpi (\phi(s)) ] \right | +
\gamma \KdV \lLdprob \text{ Applying the stationarity property.}\\
&\le \lLdrew
+\gamma \expxidpi  \left |
[   \Qdpi (s,a) - \dQdpi (\phi(s),a) ] \right | +
\gamma \KdV \lLdprob 
\end{align*}
Solving for the recurrence relation over $\expxidpi  \left | \Qdpi (s, a) - \dQdpi (\phi(s) ,a) \right |$ results in the desired result.
\end{proof}

\begin{restatable}{theorem}{suboptGeneral}\label{thm:suboptGeneral}
Let $\mdp$ and $\dmdp$ be an MDP and a $(K_R,K_P)$-Lipschitz DeepMDP respectively, with an embedding function $\phi$ and global loss functions $\gLdrew$ and $\gLdprob$. For all $s\in\sspace$, the suboptimality of the optimal policy $\dpistar$ of $\dmdp$ evaluated on $\mdp$ can be bounded by,
\begin{equation*}
    \Vstar(s) - V^{\dpistar}(s)
    \le 2\frac{\gLdrew  + \gamma \normD{\dVstar} \gLdprob }{1-\gamma}
\end{equation*}
Where $\normD{\dVstar}$ is the smoothness of the optimal value function.
\end{restatable}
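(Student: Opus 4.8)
The plan is to decompose the suboptimality gap through the optimal value function of the latent model and bound each piece separately. Writing
\[
\Vstar(s) - V^{\dpistar}(s) = \bigl[\Vstar(s) - \dVstar(\phi(s))\bigr] + \bigl[\dVstar(\phi(s)) - V^{\dpistar}(s)\bigr],
\]
the second bracket is a value-difference term for the \emph{fixed} policy $\dpistar$ (since $\dVstar = \dV^{\dpistar}$, because $\dpistar$ is optimal in $\dmdp$), while the first bracket compares two \emph{optimal} value functions belonging to different policies and must be handled by its own argument. The factor of $2$ in the statement comes precisely from bounding both brackets by the same quantity $\tfrac{\gLdrew + \gamma \normD{\dVstar}\gLdprob}{1-\gamma}$.

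For the second bracket I would invoke Lemma~\ref{lem:globalValueDifferenceBoundGeneral} applied to $\dpistar$, whose relevant smoothness constant is $\normD{\dVstar}$ (in that lemma the only smoothness used in the transition term is that of $\dV^{\dpi}$). Since the lemma bounds $|\Qdpi(s,a) - \dQdpi(\phi(s),a)|$ uniformly, taking the expectation over $a \sim \dpistar(\cdot \mid \phi(s))$ converts it into the $V$-difference bound $|V^{\dpistar}(s) - \dV^{\dpistar}(\phi(s))| \le \tfrac{\gLdrew + \gamma \normD{\dVstar}\gLdprob}{1-\gamma}$, which controls the second bracket.

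For the first bracket I would set $\Delta = \sup_{s}|\Vstar(s) - \dVstar(\phi(s))|$ and establish a self-referential inequality mirroring the proof of Lemma~\ref{lem:globalValueDifferenceBoundGeneral}, but driven by the Bellman \emph{optimality} equations rather than policy evaluation. Expanding $\Vstar$ and $\dVstar$ with their respective optimality backups and using $|\max_a f(a) - \max_a g(a)| \le \max_a |f(a) - g(a)|$ peels off a reward term bounded by $\gLdrew$ and a transition term $\gamma \max_a |\expect_{s'\sim\prob(\cdot|s,a)}\Vstar(s') - \expect_{\ds'\sim\dprob(\cdot|\phi(s),a)}\dVstar(\ds')|$. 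Adding and subtracting $\expect_{s'\sim\prob(\cdot|s,a)}\dVstar(\phi(s'))$ splits this transition term into a recursion piece bounded by $\Delta$ (via Jensen) and a distribution-mismatch piece $|\expect_{\ds'\sim\phi\prob(\cdot|s,a)}\dVstar(\ds') - \expect_{\ds'\sim\dprob(\cdot|\phi(s),a)}\dVstar(\ds')|$ bounded by $\normD{\dVstar}\,\Dist{\phi\prob(\cdot|s,a)}{\dprob(\cdot|\phi(s),a)} \le \normD{\dVstar}\gLdprob$, using the defining Norm-MMD inequality for the $\normD{\dVstar}$-smooth function $\dVstar$. This yields $\Delta \le \gLdrew + \gamma(\Delta + \normD{\dVstar}\gLdprob)$, and solving for $\Delta$ gives $\Delta \le \tfrac{\gLdrew + \gamma \normD{\dVstar}\gLdprob}{1-\gamma}$. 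Summing the two brackets (and using $\Vstar(s) \ge V^{\dpistar}(s)$, so the gap is nonnegative) delivers the claimed bound.

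The main obstacle is the first bracket: the existing value-difference lemma is stated for a single fixed policy, whereas $\Vstar$ and $\dVstar$ are optima taken over \emph{different} policies ($\pistar$ in $\mdp$, $\dpistar$ in $\dmdp$), so the lemma cannot be applied off-the-shelf. The crux is therefore re-deriving the contraction-style recurrence through the optimality operator, where the $|\max_a - \max_a| \le \max_a|\cdot - \cdot|$ inequality is what makes the two maxima comparable; once that step is in place, the remaining reward, Jensen, and Norm-MMD estimates are routine.
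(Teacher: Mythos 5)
Your proposal is correct and follows essentially the same route as the paper: the same two-term decomposition through $\dVstar(\phi(s))$, the same application of Lemma~\ref{lem:globalValueDifferenceBoundGeneral} to $\dpistar$ for one bracket, and the same optimality-operator recurrence (via $|\max_a f - \max_a g| \le \max_a |f-g|$, Jensen, and the Norm-MMD smoothness inequality) for the other. The only cosmetic difference is that the paper closes the recurrence on $\sup_{s,a}|\Qstar(s,a)-\dQstar(\phi(s),a)|$ and then passes to the $V$-level bound, whereas you run it directly on $\sup_s|\Vstar(s)-\dVstar(\phi(s))|$; the two are equivalent.
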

\begin{proof}
For any $s\in\sspace$ we have
\begin{equation}
    \label{eq:v-triangle}
    |\Vstar(s) - V^{\dpistar}(s)| \le |\dVstar(\phi(s)) - V^{\dpistar}(s)| + |\Vstar(s) - \dVstar(\phi(s))|.
\end{equation}
Using the result given by Lemma~\ref{lem:globalValueDifferenceBoundGeneral}, we may bound the first term of the RHS by $\frac{  \gLdrew  + \gamma \normD{\dVstar} \gLdprob }{1-\gamma}$.

To bound the second therm, we first show that for any $s\in\sspace,a\in\aspace$, we have,
\begin{align}
    \label{eq:qstar-diff}
    |\Qstar(s,a) - \dQstar(\phi(s),a)| ~\le~ & \frac{\gLdrew  + \gamma \normD{\dVstar} \gLdprob }{1-\gamma}.
\end{align}
We prove this similarly to Lemma~\ref{lem:globalValueDifferenceBound}:
\begin{align*}
\sup_{s\in\sspace,a\in\aspace}\left | \Qstar (s, a) - \dQstar (\phi(s) ,a) \right | &\leq \sup_{s\in\sspace,a\in\aspace} \left | \rew (s, a) - \drew (\phi(s) ,a) \right | 
+\gamma \sup_{s\in\sspace,a\in\aspace}  \left | \expect_{s' \sim \prob(\cdot | s, a)}   \Vstar (s') - \expect_{\ds' \sim \dprob(\cdot | \phi(s), a)}  \dVstar (\ds') \right | \\
&= \gLdrew
+\gamma \sup_{s\in\sspace,a\in\aspace}  \left | \expect_{s' \sim \prob(\cdot | s, a)} 
[   \Vstar (s') - \dVstar (\phi(s')) ] +
\expect_{\substack{\ds' \sim \dprob(\cdot | \phi(s), a) \\ s' \sim \prob(\cdot | s, a)}} 
[ \dVstar  (\phi(s')) - \dVstar(\ds') ] \right | \\
&\le \gLdrew
+\gamma \sup_{s\in\sspace,a\in\aspace}  \left | \expect_{s' \sim \prob(\cdot | s, a)} 
[   \Vstar (s') - \dVstar (\phi(s')) ] \right | +
\gamma \sup_{s\in\sspace,a\in\aspace}  \left | \expect_{\substack{\ds' \sim \dprob(\cdot | \phi(s), a) \\ s' \sim \prob(\cdot | s, a)}}
[ \dVstar  (\phi(s')) - \dVstar(\ds') ] \right | \\
&= \gLdrew
+\gamma \sup_{s\in\sspace,a\in\aspace}  \left | \expect_{s' \sim \prob(\cdot | s, a)} 
[   \Vstar (s') - \dVstar (\phi(s')) ] \right | +
\gamma \normD{\dVstar} \gLdprob\\
&\le \gLdrew
+\gamma \max_ {s}  \left |
  \Vstar (s) - \dVstar (\phi(s)) \right | +
\gamma \normD{\dVstar} \gLdprob \text{  Using Jensen's inequality.}\\
&= \gLdrew
+\gamma \max_{s}  \left | \max_a \Qstar (s,a) - \max_a\dQstar (\phi(s),a) \right | +
\gamma \normD{\dVstar} \gLdprob \\
&\le \gLdrew
+\gamma \sup_{s\in\sspace,a\in\aspace}  \left | \Qstar (s,a) - \dQstar (\phi(s),a) \right | +
\gamma \normD{\dVstar} \gLdprob 
\end{align*}
Solving for the recurrence gives the desired result. Then, the second therm can be easily bounded:
\begin{align}
    |\Vstar(s) - \dVstar(\phi(s))| = & |\max_a \Qstar(s, a) - \max_{a'} \dQstar(\phi(s), a')| \\
    \le & \max_a |\Qstar(s, a) - \dQstar(\phi(s), a)| \\ \le& \frac{ \gLdrew  + \gamma \normD{\dVstar} \gLdprob }{1-\gamma}.
\end{align}
as desired. Combining the bounds for the first and second terms completes the proof.
\end{proof}

\section{DonutWorld Experiments}\label{appendix:donut}

\subsection{Environment Specification}
Our synthetic environment, DonutWorld, consists of an agent moving around a circular track. The environment is centered at (0,0), and includes the set of points whose distance to the center is between 3 and 6 units away; all other points are out-of-bounds. The distance the agent can move on each timestep is equal to the distance to the nearest out-of-bounds point, capped at 1. We refer to the regions of space where the agent's movements are fastest (between 4 and 5 units away from the origin) as the ``track,'' and other in-bounds locations as ``grass''. Observations are given in the form of 32-by-32 black-and-white pixel arrays, where the agent is represented by a white pixel, the track by luminance 0.75, the grass by luminance 0.25, and out-of-bounds by black. The actions are given as pairs of numbers in the range (-1,1), representing an unnormalized directional vector. The reward for each transition is given by the number of radians moved clockwise around the center.

Another variant of this environment involves four copies of the track, all adjacent to one another. The agent is randomly placed onto one of the four tracks, and cannot move between them. Note that the value function for any policy is identical whether the agent is on the one-track DonutWorld or the four-track DonutWorld. Observations for the four-track DonutWorld are 64-by-64 pixel arrays.

\subsection{Architecture Details}
We learn a DeepMDP on states and actions from a uniform distribution over all possible state-action pairs. The environment can be fully represented by a latent space of size two, so that is the dimensionality used for latent states of the DeepMDP.

We use a convolutional neural net for our embedding function $\phi$, which contains three convolutional layers followed by a linear transformation. Each convolutional layer uses 4x4 convolutional filters with stride of 2, and depths are mapped to 2, then 4, then 8; the final linear transformation maps it to the size of the latent state, 2. ReLU nonlinearities are used between each layer, and a sigmoid is applied to the output to constrain the space of latent states to be bounded by (0, 1).

The transition function and reward function are each represented by feed-forward neural networks, using 2 hidden layers of size 32 with ReLU nonlinearities. A sigmoid is applied output of the transition function.

For the autoencoder baseline, we use the same architecture for the encoder as was used for the embedding function. Our decoder is a three-layer feedforward ReLU network with 32 hidden units per layer. The reconstruction loss is a softmax cross-entropy over possible agent locations.

\subsection{Hyperparameters}
All models were implemented in Tensorflow. We use an Adam Optimizer with a learning rate of 3e-4, and default settings. We train for 30,000 steps. The batch size is 256 for DMDPs and 1024 for autoencoders. The discount factor, $\gamma$, is set to 0.9, and the coefficient for the gradient penalty, $\lambda$, is set to 0.01. In contrast to the gradient penalty described in \citet{gulrajani2017improved}, which uses its gradient penalty to encourage all gradient norms to be close to 1, we encourage all gradient norms to be close to 0. Our sampling distribution is the same as our training distribution, simply the distribution of states sampled from the environment.

\subsection{Empirical Value Difference}

\begin{figure*}[t]
\centering
\includegraphics[keepaspectratio, width=\textwidth]{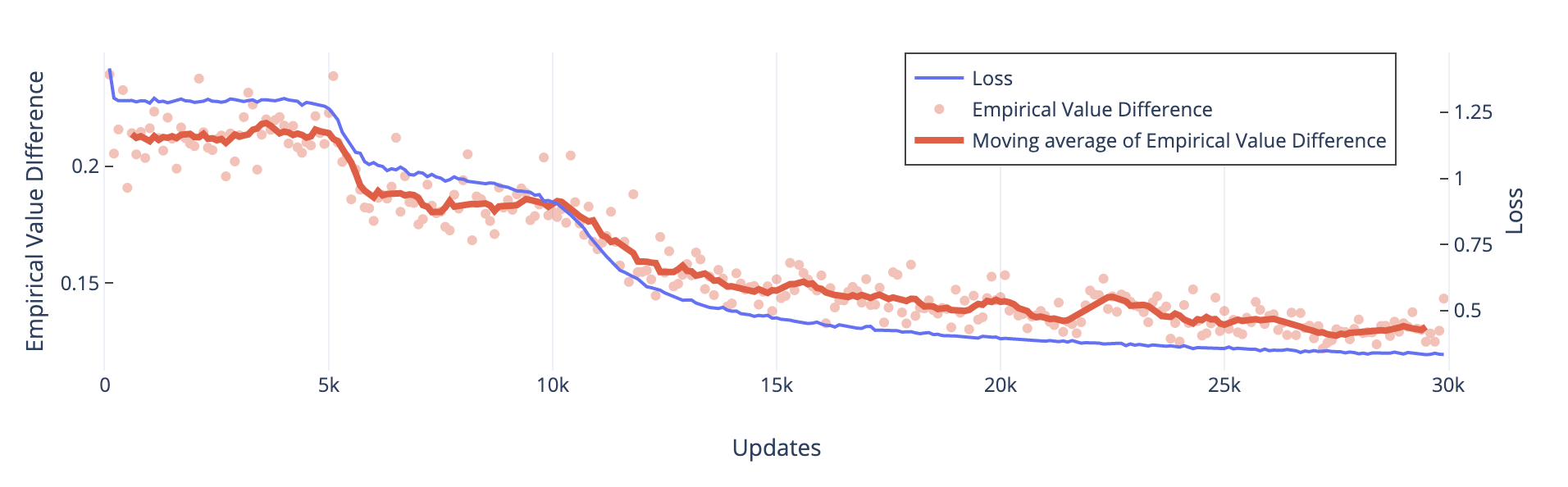}
\caption{Plot of training curves obtained by learning a DeepMDP on our toy environment. Our objective minimizes both the theoretical upper bound of value difference and the empirical value difference.}
\label{fig:toyopt}
\end{figure*}

Figure \ref{fig:toyopt} shows the loss curves for our learning procedure. We randomly sample trajectories of length 1000, and compute both the empirical reward in the real environment and the reward approximated by performing the same actions in the DeepMDP; this allows us to compute the empirical value error. These results demonstrate that neural optimization techniques are capable of learning DeepMDPs, and that this optimization procedure, designed to tighten theoretical bounds, is minimized by a good model of the environment, as reflected in improved empirical outcomes.

\section{Atari 2600 Experiments} \label{appendix:atari}

\begin{table}[h!]
    \small
    \centering
    \begin{tabular}{c c}
    Hyperparameter & Value \\
    \hline \hline
    \texttt{Runner.sticky\_actions} & \texttt{No\ Sticky\ Actions} \\
    \texttt{Runner.num\_iterations} & \texttt{200} \\
    \texttt{Runner.training\_steps} & \texttt{250000}  \\
    \texttt{Runner.evaluation\_steps} & \texttt{Eval\ phase\ not\ used.} \\
    \texttt{Runner.max\_steps\_per\_episode} & \texttt{27000} \\
    & \\
    \texttt{WrappedPrioritizedReplayBuffer.replay\_capacity} & \texttt{1000000} \\
    \texttt{WrappedPrioritizedReplayBuffer.batch\_size} & \texttt{32} \\
    & \\
    \texttt{RainbowAgent.num\_atoms} & \texttt{51} \\
    \texttt{RainbowAgent.vmax} & \texttt{10.} \\
    \texttt{RainbowAgent.update\_horizon} & \texttt{1} \\
    \texttt{RainbowAgent.min\_replay\_history} & \texttt{50000}  \\
    \texttt{RainbowAgent.update\_period} & \texttt{4} \\
    \texttt{RainbowAgent.target\_update\_period} & \texttt{10000} \\
    \texttt{RainbowAgent.epsilon\_train} & \texttt{0.01} \\
    \texttt{RainbowAgent.epsilon\_eval} & \texttt{0.001} \\
    \texttt{RainbowAgent.epsilon\_decay\_period} & \texttt{100000} \\
    \texttt{RainbowAgent.replay\_schem}e & \texttt{'uniform'} \\
    \texttt{RainbowAgent.tf\_device} & \texttt{'/gpu:0'}  \\
    \texttt{RainbowAgent.optimizer} & \texttt{@tf.train.AdamOptimizer()} \\
    & \\
    \texttt{tf.train.AdamOptimizer.learning\_rate} & \texttt{0.00025} \\
    \texttt{tf.train.AdamOptimizer.epsilon} & \texttt{0.0003125} \\
    & \\
    \texttt{ModelRainbowAgent.reward\_loss\_weight} & \texttt{1.0} \\
    \texttt{ModelRainbowAgent.transition\_loss\_weight} & \texttt{1.0} \\
    \texttt{ModelRainbowAgent.transition\_model\_type} & \texttt{'convolutional'} \\
    \texttt{ModelRainbowAgent.embedding\_type} & \texttt{'conv\_layer\_embedding'} \\
    \end{tabular}
    \caption{Configurations for the DeepMDP and C51 agents used with Dopamine \cite{bellemaredopamine} in Section \ref{ssec:c51vsdmdp}. Note that the DeepMDP is referred to as ModelRainbowAgent in the configs.}
    \label{tab:gin_configs}
\end{table}

\begin{figure*}[t]
    \centering
    \includegraphics[keepaspectratio, width=0.37\textwidth]{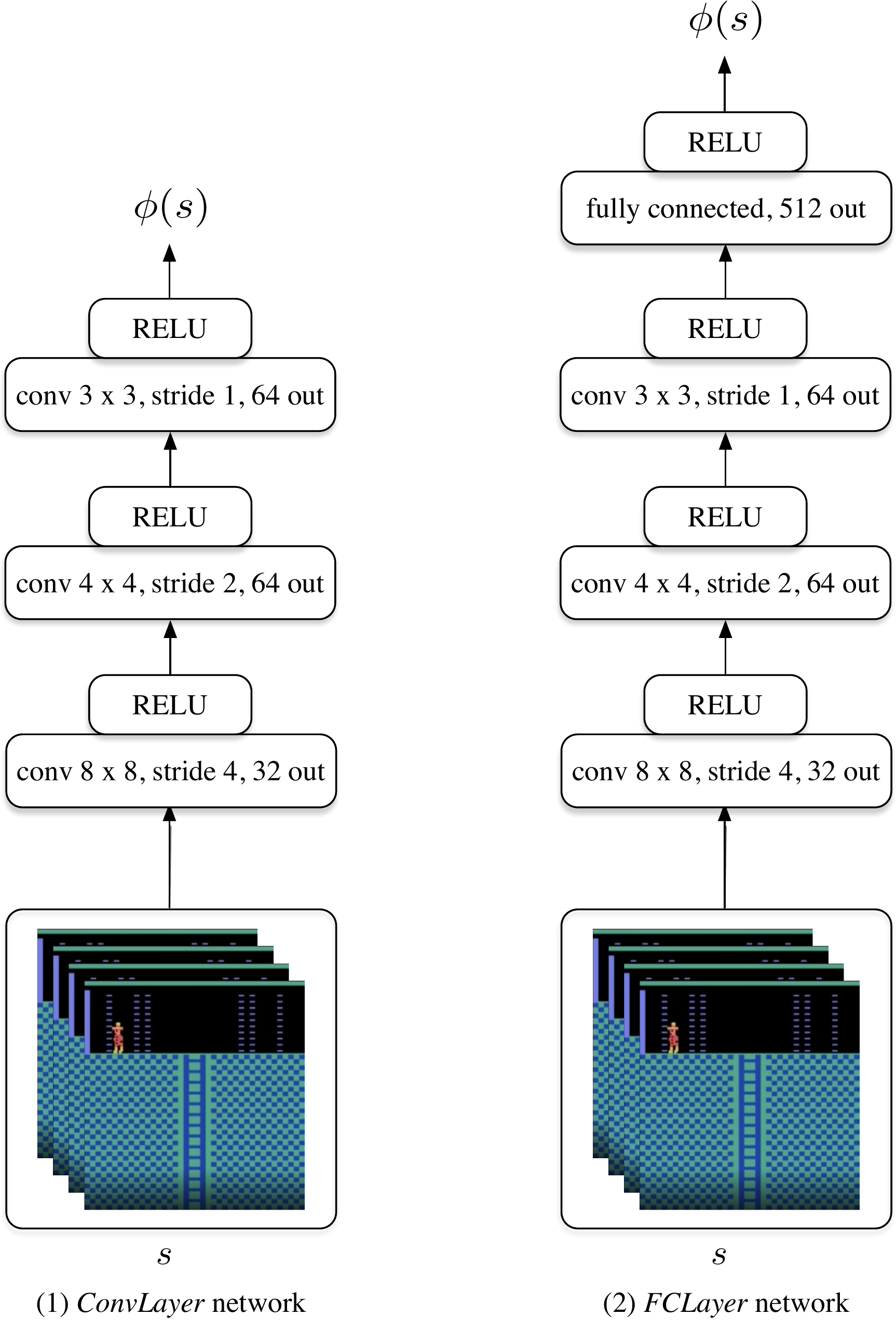}
    \caption{Encoder architectures used for the DeepMDP agent.}
    \label{fig:encoder_architectures}
\end{figure*}

\begin{figure*}[t]
    \centering
    \includegraphics[keepaspectratio, width=0.6\textwidth]{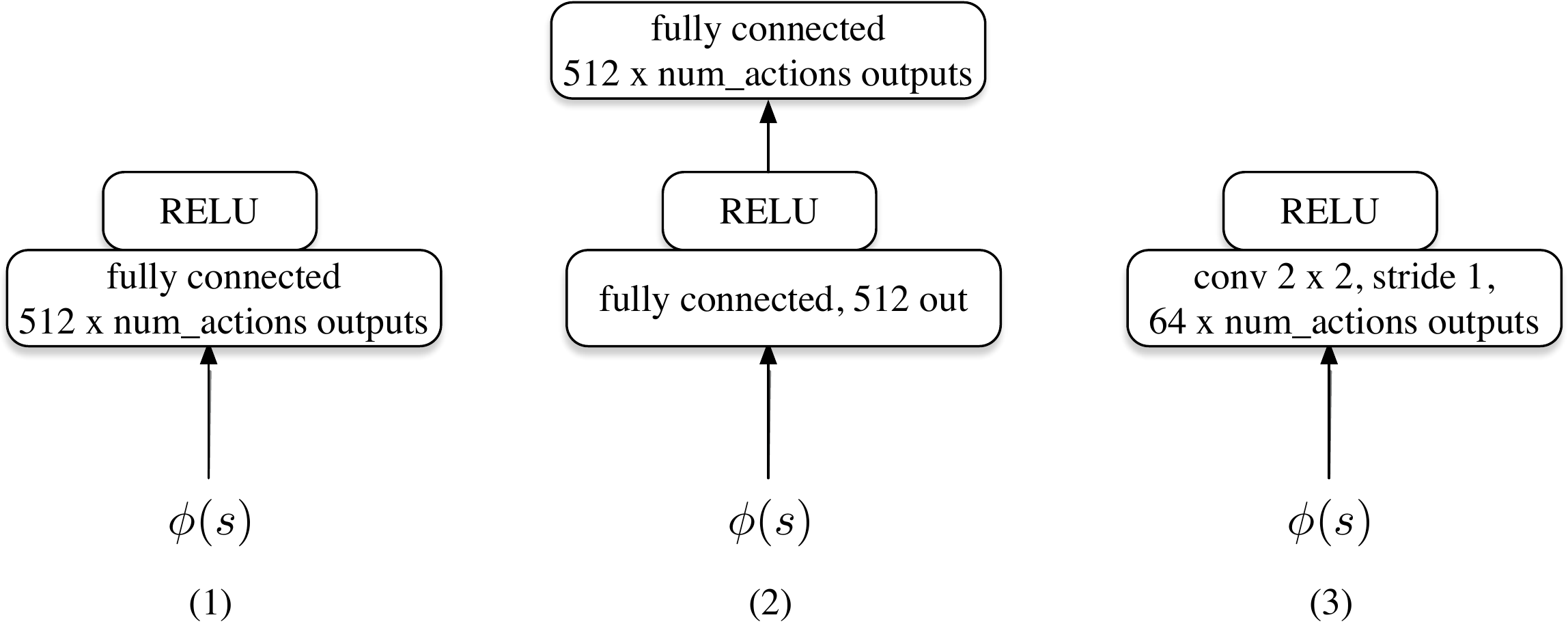}
    \caption{Transition model architectures used for the DeepMDP agent: (1) a single fully-connected layer (used with latent states of type FCLayer), (2) a two-layer fully-connected network (used with latent states of type FCLayer), and (3) a single convolutional layer (used with latent states of type ConvLayer).}
    \label{fig:transition_model_architectures}
\end{figure*}

\begin{figure*}[t]
    \centering
    \includegraphics[keepaspectratio, width=0.38\textwidth]{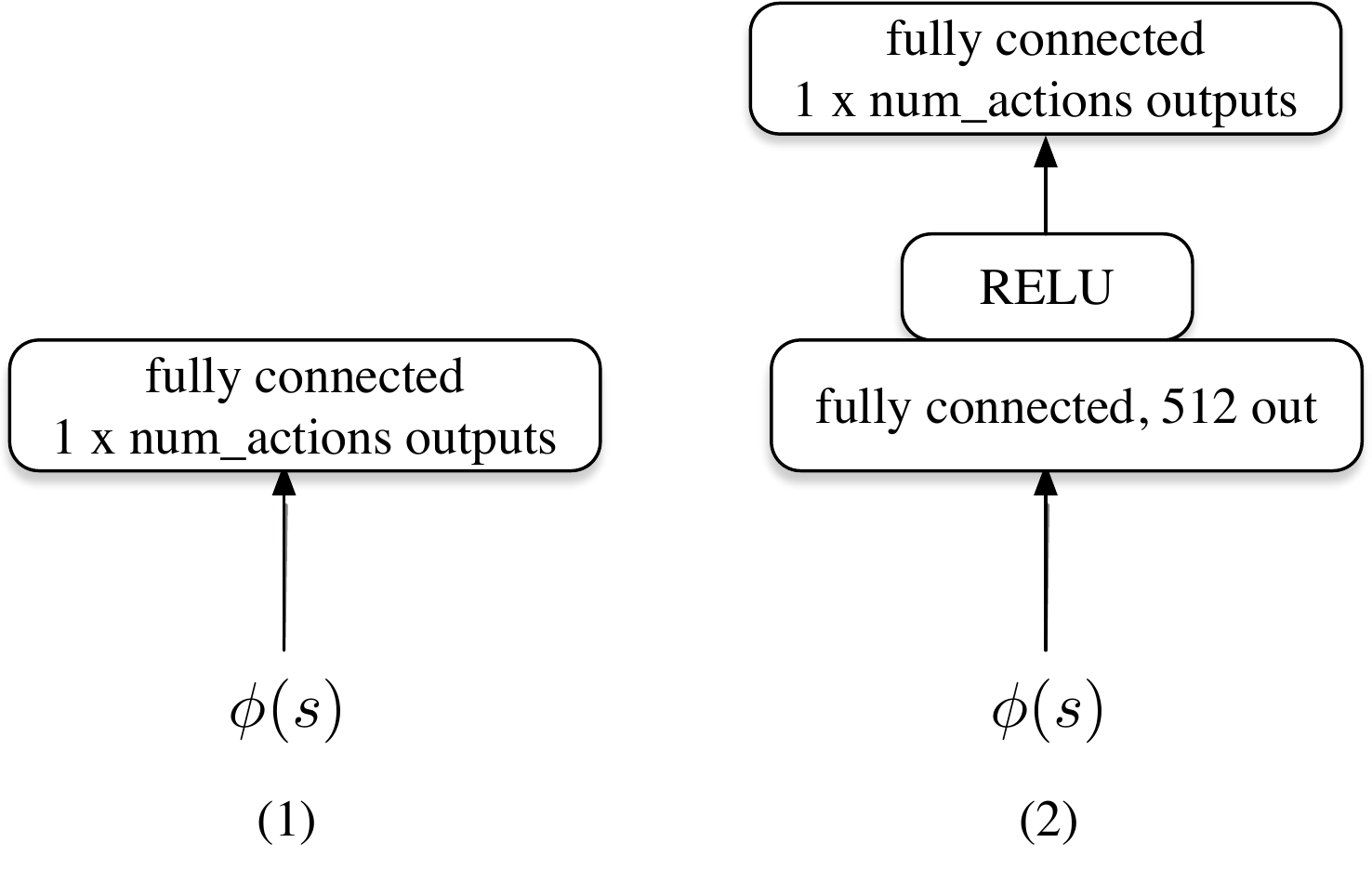}
    \caption{Reward and C51 Logits network architectures used for the DeepMDP agent: (1) a single fully-connected layer (used with latent states of type FCLayer), (2) a two-layer fully-connected network (used with latent states of type ConvLayer).}
    \label{fig:reward_model_architectures}
\end{figure*}

\begin{figure*}[t]
    \centering
    \includegraphics[keepaspectratio, width=0.4\textwidth]{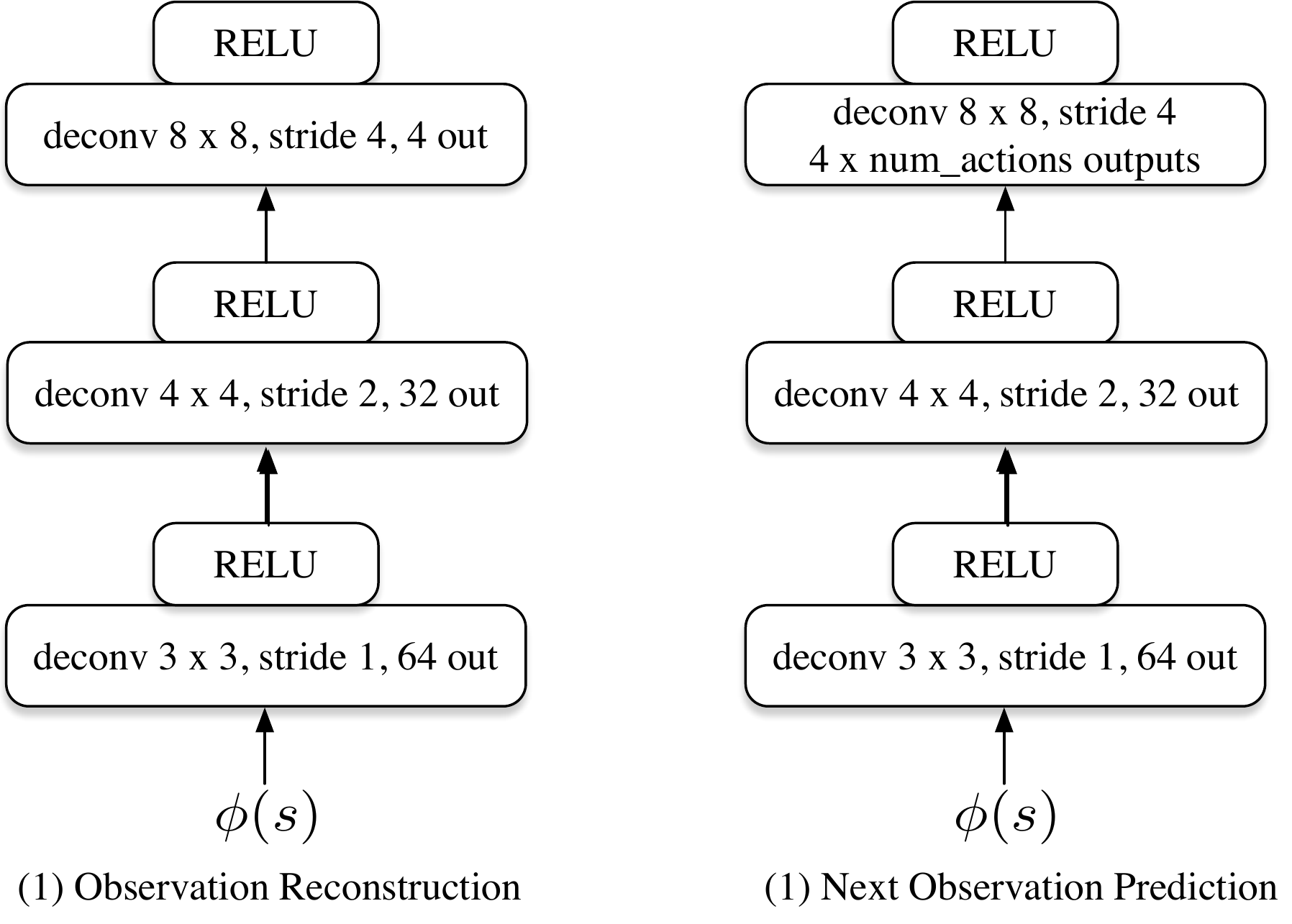}
    \caption{Architectures used for observation reconstruction and next observation prediciton. Both networks take latent states of type ConvLayer as input.}
    \label{fig:image_prediction_architectures}
\end{figure*}

\subsection{Hyperparameters}
For all experiments we use an Adam Optimizer with a learning rate of $0.00025$ and epsilon of $0.0003125$. We linearly decay epsilon from $1.0$ to $0.01$ over $1000000$ training steps. We use a replay memory of size $1000000$ (it must reach a minimum size of $50000$ prior to sampling transitions for training). Unless otherwise specified, the batch size is $32$. For additional hyperparameter details, see Table \ref{tab:gin_configs} and \cite{bellemare17distributional}. 

\subsection{Architecture Search} \label{ssec:architectures}

In this section, we aim to answer: what latent state space and transition model architecture lead to the best Atari 2600 performance of the C51 DeepMDP? We begin by jointly determining the form of $\dsspace$ and $\theta_{\bar{\prob}}$ which are conducive to learning a DeepMDP on Atari 2600 games. We employ three latent transition model architectures: (1) single fully connected layer, (2) two-layer fully-connected network, and (3) single convolutional layer. The fully-connected transition networks use the $512$-dimensional output of the embedding network's penultimate layer as the latent state, while the convolutional transition model uses the $11\times11\times64$ output of the embedding network's final convolutional layer. Empirically, we find that the use of a convolutional transition model on the final convolutional layer's output outperforms the other architectures, as shown in Figure \ref{fig:architectures}.

\subsection{Architecture Details}

The architectures of various components are described below. A conv layer refers to a 2D convolutional layer with a specified stride, kernel size, and number of outputs. A deconv layer refers to a deconvolutional layer. The padding for conv and deconv layers is such that the output layer has the same dimensionality as the input. A maxpool layer performs max-pooling on a 2D input and fully connected refers to a fully-connected layer. 

\subsubsection{Encoder}\label{ssec:encoderarchitecture}

In the main text, the encoder is referred to as $\phi : \sspace \rightarrow \dsspace$ and is parameterized by $\theta_e$. The encoder architecture is as follows:

Input: observation $s$ which has shape: $\text{batch size}\times84\times84\times4$. The Atari 2600 frames are $84\times84$ and there are $4$ stacked frames given as input. The frames are pre-processed by dividing by the maximum pixel value, $255$. Output: latent state $\phi(s)$

In Appendix \ref{ssec:architectures}, we experimented with two different latent state representations. 
(1) $\textit{ConvLayer}$: The latent state is the output of the final convolutional layer, or (2) $\textit{FCLayer}$: the latent state is the output of a fully-connected (FC) layer following the final convolutional layer. These possibilities for the encoder architecture are described in Figure \ref{fig:encoder_architectures}.

In sections \ref{ssec:c51vsdmdp}, \ref{ssec:mbrlcompare}, \ref{sec:moreauxiliarytasks}, and \ref{sec:appendix-atari-rep} the latent state of type ConvLayer is used: $11\times11\times64$ outputs of the final convolutional layer.

\subsubsection{Latent Transition Model}

In Appendix \ref{ssec:architectures} there are three types of latent transition models $\dprob: \dsspace \rightarrow \dsspace$ parameterized by $\theta_{\dprob}$ which are evaluated: (1) a single fully-connected layer, (2) a two-layer fully-connected network, and (3) a single convolutional layer (see Figure \ref{fig:transition_model_architectures}). Note that the first two types of transition models operate on the flattened $512$-dimensional latent state (FCLayer), while the convolutional transition model receives as input the $11\times11\times64$ latent state type ConvLayer. For each transition model, num$\_$actions predictions are made: one for each action conditioned on the current latent state $\phi(s)$. 

In sections \ref{ssec:c51vsdmdp}, \ref{ssec:mbrlcompare}, \ref{sec:moreauxiliarytasks}, and \ref{sec:appendix-atari-rep} the convolutional transition model is used.

\subsubsection{Reward Model and C51 Logits Network}
The architectures of the reward model $\drew$ parameterized by $\theta_{\drew}$ and C51 logits network parameterized by $\theta_Z$ depend the latent state representation. See Figure \ref{fig:reward_model_architectures} for these architectures. For each architecture type, num$\_$actions predictions are made: one for each action conditioned on the current latent state $\phi(s)$. 

In sections \ref{ssec:c51vsdmdp}, \ref{ssec:mbrlcompare}, \ref{sec:moreauxiliarytasks}, and \ref{sec:appendix-atari-rep} two-layer fully-connected networks are used for the reward and C51 logits networks.

\subsubsection{Observation Reconstruction and Next Observation Prediction}
The models for observation reconstruction and next observation prediction in Section $\ref{ssec:mbrlcompare}$ are deconvolutional networks based on the architecture of the embedding function $\phi$. Both operate on latent states of type ConvLayer. The architectures are described in Figure \ref{fig:image_prediction_architectures}.

%

\subsection{DeepMDP Auxiliary Tasks: Different Weightings on DeepMDP Losses} \label{sec:moreauxiliarytasks}
\begin{figure*}
    \centering
    \includegraphics[keepaspectratio, width=0.9\textwidth]{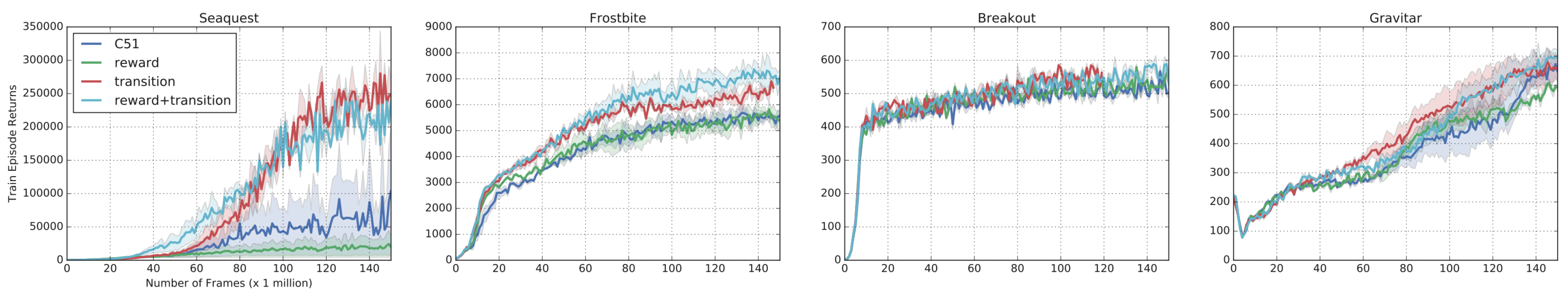}
    \caption{We compare C51 with C51 with DeepMDP auxiliary task losses. The combinations of loss weightings are $\{0, 1\}$ (just reward), $\{1, 0\}$ (just transition), and $\{1, 1\}$ (reward+transition), where the first number is the weight for the transition loss and the second number is the weight for the reward loss.}
    \label{fig:lossweightings}
\end{figure*}

In this section, we discuss results of a set of experiments where we use a convolutional latent transition model and a two-layer reward model to form auxiliary task objectives on top of a C51 agent. In these experiments, we use different weightings in the set $\{0, 1\}$ for the transition loss and for the reward loss. The network architecture is based on the best performing DeepMDP architecture in Appendix \ref{ssec:architectures}. Our results show that using the transition loss is enough to match performance of using both the transition and reward loss. In fact, on Seaquest, using only the reward loss as an auxiliary tasks causes performance to crash. See Figure \ref{fig:lossweightings} for the results.

\subsection{Representation Learning with DeepMDP Objectives}
\label{sec:appendix-atari-rep}
\begin{figure*}
    \centering
    \includegraphics[keepaspectratio, width=0.9\textwidth]{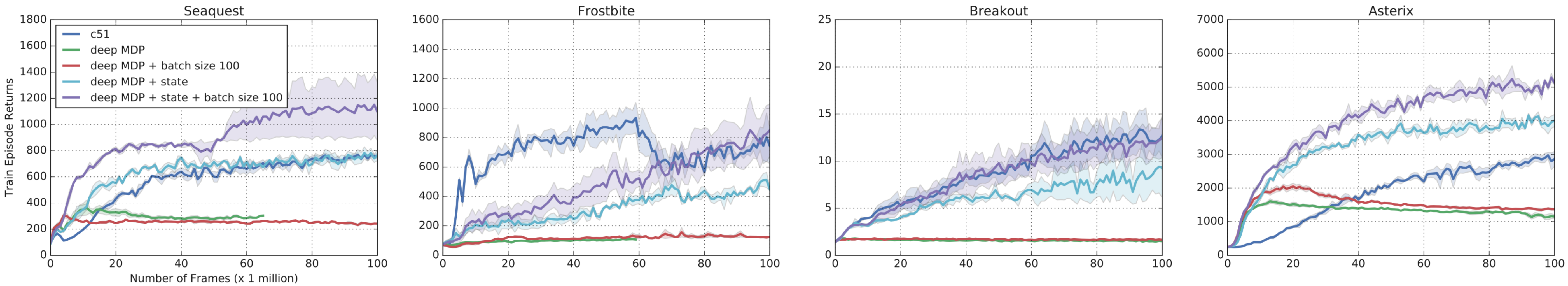}
    \caption{We evaluate the performance of C51 when learning the latent state representation only via minimizing deepMDP objectives. We compare learning the latent state representation with the deepMDP objectives (deep MDP), deepMDP objectives with larger batch sizes (deepMDP + batch size 100), deepMDP objectives and an observation reconstruction loss (deepMDP + state), and deepMDP with both a reconstruction loss and larger batch size (deepMDP + state + batch size 100). As a baselines, we compare to C51 on a random latent state representation (C51).}
    \label{fig:dmdp_representation}
\end{figure*}

Given performance improvements in the auxiliary task setting, a natural question is whether optimization of the deepMDP losses is sufficient to perform model-free RL. To address this question, we learn $\theta_e$ only via minimizing the reward and latent transition losses. We then learn $\theta_Z$ by minimizing the C51 loss but do not pass gradients through $\theta_e$. As a baseline, we minimize the C51 loss with randomly initialized $\theta_e$ and do not update $\theta_e$. In order to successfully predict terminal transitions and rewards, we add a terminal reward loss and a terminal state transition loss. The terminal reward loss is a Huber loss between $\drew(\phi(s_T))$ and $0$, where $s_T$ is a terminal state. The terminal transition loss is a Huber loss between $\dprob(s,a)$ and $\textbf{0}$, where $s$ is either a terminal state or a state immediately preceding a terminal state and $\textbf{0}$ is the zero latent state.

We find that in practice, minimizing the latent transition loss causes the latent states to collapse to $\phi(s) = 0$ $\forall s \in S$. As \cite{lavet2018crar} notes, if only the latent transition loss was minimized, then the optimal solution is indeed $\phi : S \rightarrow 0$ so that $\dprob$ perfectly predicts $\phi(\prob(s,a))$. 

We hope to mitigate representation collapse by augmenting the influence of the reward loss. We increase the batch size from $32$ to $100$ to acquire greater diversity of rewards in each batch sampled from the replay buffer. However, we find that only after introducing a state reconstruction loss do we obtain performance levels on par with our simple baseline. These results (see Figure \ref{fig:dmdp_representation}) indicate that in more complex environments, additional work is required to successfully balance the minimization of the transition loss and the reward loss, as the transition loss seems to dominate. 

This finding was surprising, since we were able to train a DeepMDP on the DonutWorld environment with no reconstruction loss. Further investigation of the DonutWorld experiments shows that the DeepMDP optimization procedure seems to be highly prone to becoming trapped in local minima. The reward loss encourages latent states to be informative, but the transition loss counteracts this, preferring latent states which are uninformative and thus easily predictable. Looking at the relative reward and transition losses in early phases of training in Figure \ref{fig:competeloss}, we see this dynamic clearly. At the start of training, the transition loss quickly forces latent states to be near-zero, resulting in very high reward loss. Eventually, on this simple task, the model is able to escape this local minimum by ``discovering'' a representation that is both informative and predictable. However, as the difficulty of a task scales up, it becomes increasingly difficult to discover a representation which escapes these local minima by explaining the underlying dynamics of the environment well. This explains our observations on the Arcade Learning Environment; the additional supervision from the reconstruction loss helps guide the algorithm towards representations which explain the environment well.

\begin{figure*}
    \centering
    \includegraphics[keepaspectratio, width=0.9\textwidth]{}
    \caption{Learning curves of C51 and C51 + DeepMDP auxiliary task objectives (labeled DeepMDP) on Atari 2600 games.}
    \label{fig:c51_vs_dmdp_60_games}
\end{figure*}

\begin{table}[h!]
  \begin{center}
    \label{tab:DeepMDP60games}
    \centering
    \small
    \begin{tabular}{c c c}
        \text{Game Name} & \text{C51} & \text{DeepMDP} \\
        \hline
        \hline
        \texttt{AirRaid}&	\textbf{11544.2}&	10274.2\\
        \texttt{Alien}&	4338.3&	\textbf{6160.7}\\
        \texttt{Amidar}&	1304.7&	\textbf{1663.8}\\
        \texttt{Assault}&	4133.4&	\textbf{5026.2}\\
        \texttt{Asterix}&	343210.0&	\textbf{452712.7}\\
        \texttt{Asteroids}&	1125.4&	\textbf{1981.7}\\
        \texttt{Atlantis}&	844063.3&	\textbf{906196.7}\\
        \texttt{BankHeist}&	861.3&	\textbf{937.0}\\
        \texttt{BattleZone}&	31078.2&	\textbf{34310.2}\\
        \texttt{BeamRider}&	\textbf{19081.0}&	16216.8\\
        \texttt{Berzerk}&	1250.9&	\textbf{1799.9}\\
        \texttt{Bowling}&	51.4&	\textbf{56.3}\\
        \texttt{Boxing}&	97.3&	\textbf{98.2}\\
        \texttt{Breakout}&	584.1&	\textbf{672.8}\\
        \texttt{Carnival}&	4877.3&	\textbf{5319.8}\\
        \texttt{Centipede}&	\textbf{9092.1}&	9060.9\\
        \texttt{ChopperCommand}&	\textbf{10558.8}&	9895.7\\
        \texttt{CrazyClimber}&	158427.7&	\textbf{173043.1}\\
        \texttt{DemonAttack}&	111697.7&	\textbf{119224.7}\\
        \texttt{DoubleDunk}&	\textbf{6.7}&	-9.3\\
        \texttt{ElevatorAction}&	\textbf{73943.3}&	37854.4\\
        \texttt{Enduro}&	1905.3&	\textbf{2197.8}\\
        \texttt{FishingDerby}&	25.4&	\textbf{33.9}\\
        \texttt{Freeway}&	\textbf{33.9}&	33.9\\
        \texttt{Frostbite}&	5882.9&	\textbf{7367.3}\\
        \texttt{Gopher}&	15214.3&	\textbf{21017.2}\\
        \texttt{Gravitar}&	790.4&	\textbf{838.3}\\
        \texttt{Hero}&	36420.7&	\textbf{40563.1}\\
        \texttt{IceHockey}&	\textbf{-3.5}&	-4.1\\
        \texttt{Jamesbond}&	1776.7&	\textbf{5181.1}\\
        \texttt{JourneyEscape}&	-1856.1&	\textbf{-1337.1}\\
        \texttt{Kangaroo}&	8815.5&	\textbf{9714.9}\\
        \texttt{Krull}&	8201.5&	\textbf{8246.9}\\
        \texttt{KungFuMaster}&	37956.5&	\textbf{42692.7}\\
        \texttt{MontezumaRevenge}&	14.7&	\textbf{770.7}\\
        \texttt{MsPacman}&	4597.8&	\textbf{5282.5}\\
        \texttt{NameThisGame}&	13738.7&	\textbf{14064.6}\\
        \texttt{Phoenix}&	20216.7&	\textbf{45565.1}\\
        \texttt{Pitfall}&	-9.8&	\textbf{-0.8}\\
        \texttt{Pong}&	\textbf{20.8}&	20.8\\
        \texttt{Pooyan}&	4052.7&	\textbf{4431.1}\\
        \texttt{PrivateEye}&	\textbf{28694.0}&	11223.8\\
        \texttt{Qbert}&	23268.6&	\textbf{23538.7}\\
        \texttt{Riverraid}&	17845.1&	\textbf{19934.7}\\
        \texttt{RoadRunner}&	57638.5&	\textbf{59152.2}\\
        \texttt{Robotank}&	\textbf{57.4}&	51.3\\
        \texttt{Seaquest}&	226264.0&	\textbf{230881.6}\\
        \texttt{Skiing}&	\textbf{-15454.8}&	-16478.0\\
        \texttt{Solaris}&	\textbf{2876.7}&	2506.8\\
        \texttt{SpaceInvaders}&	12145.8&	\textbf{16461.2}\\
        \texttt{StarGunner}&	38928.7&	\textbf{78847.6}\\
        \texttt{Tennis}&	22.6&	\textbf{22.7}\\
        \texttt{TimePilot}&	8340.7&	\textbf{8345.6}\\
        \texttt{Tutankham}&	\textbf{259.3}&	256.9\\
        \texttt{UpNDown}&	10175.5&	\textbf{10930.6}\\
        \texttt{Venture}&	\textbf{1190.1}&	755.4\\
        \texttt{VideoPinball}&	\textbf{668415.7}&	633848.8\\
        \texttt{WizardOfWor}&	2926.0&	\textbf{11846.1}\\
        \texttt{YarsRevenge}&	39502.9&	\textbf{44317.8}\\
        \texttt{Zaxxon}&	7436.5&	\textbf{14723.0}\\
    \end{tabular}
    \caption{DeepMDP versus C51 returns. For both agents, we report the max average score achieved across all training iterations (each training iteration is 1 million frames).}
  \end{center}
\end{table}
}

\end{document}